%% file: main.tex
\PassOptionsToPackage{dvipsnames}{xcolor}
\documentclass{article}

\usepackage[preprint]{colm2026_conference}

\usepackage{microtype}
\usepackage{hyperref}
\usepackage{url}
\usepackage{booktabs}

\usepackage{amsmath,amssymb,amsfonts,amsthm}
\usepackage{mathtools}
\usepackage{graphicx}
\usepackage{caption}
\usepackage{multirow}
\usepackage{siunitx}
\usepackage{tabularx}
\usepackage{tcolorbox}
\tcbuselibrary{skins}
\usepackage{wrapfig}
\usepackage{xcolor}
\usepackage{colortbl}
\usepackage[capitalise,noabbrev]{cleveref}
\usepackage{enumitem}
\usetikzlibrary{arrows.meta,positioning,calc,backgrounds,fit,shapes.geometric,decorations.pathreplacing,matrix}

\setlist[itemize]{leftmargin=*,topsep=1pt,itemsep=1pt,parsep=0pt,partopsep=0pt}
\setlist[enumerate]{leftmargin=*,topsep=1pt,itemsep=1pt,parsep=0pt,partopsep=0pt}

\definecolor{Burgundy}{RGB}{92, 26, 42}
\definecolor{Gold}{RGB}{156, 122, 60}
\definecolor{GoldLight}{RGB}{201, 168, 76}
\definecolor{Navy}{RGB}{28, 50, 82}
\definecolor{Verdant}{RGB}{44, 74, 30}
\definecolor{Parchment}{RGB}{247, 243, 236}
\definecolor{Inkwell}{RGB}{26, 20, 16}
\colorlet{gray}{Inkwell!55!Parchment}
\hypersetup{colorlinks=true, citecolor=Burgundy, linkcolor=Navy, urlcolor=Verdant}
\colorlet{confRed}{Gold}
\colorlet{causalDark}{Inkwell}
\colorlet{obsBlue}{Navy}
\colorlet{expGreen}{Burgundy}
\colorlet{repOrange}{Verdant}
\colorlet{idPurple}{Burgundy}
\colorlet{estRed}{Verdant}
\colorlet{grayText}{Inkwell!70!Parchment}

\newcommand{\E}{\mathbb{E}}
\newcommand{\Pp}{\mathbb{P}}
\newcommand{\R}{\mathbb{R}}
\newcommand{\doop}{\mathrm{do}}
\newcommand{\Ind}{\mathbb{I}}
\newcommand{\clip}{\mathrm{clip}}
\newcommand{\obs}{\mathrm{OBS}}
\newcommand{\expd}{\mathrm{EXP}}
\newcommand{\simu}{\mathrm{sim}}

\colorlet{exponlycolor}{Burgundy}
\colorlet{obsonlycolor}{Navy}
\colorlet{proxyexpcolor}{Verdant}
\colorlet{groundedcolor}{Gold!85!Inkwell}
\colorlet{cvcicolor}{Burgundy!85!Inkwell}
\colorlet{cvcirescolor}{Gold!75!Burgundy}
\colorlet{rankgold}{Burgundy}
\colorlet{ranksilver}{Navy}

\newcommand{\estname}[2]{\texorpdfstring{\textcolor{#1}{#2}}{#2}}

\newcommand{\bestmean}[1]{\ifmmode{\color{rankgold}\mathbf{#1}}\else\textcolor{rankgold}{\textbf{#1}}\fi}
\newcommand{\secondmean}[1]{\ifmmode{\color{ranksilver}\mathbf{#1}}\else\textcolor{ranksilver}{\textbf{#1}}\fi}

\newcolumntype{Y}{>{\raggedright\arraybackslash}X}
\newcolumntype{H}{>{\scshape\color{Navy}}c}
\DeclareRobustCommand{\EXPOnly}{\estname{exponlycolor}{EXP-Only}}
\DeclareRobustCommand{\OBSOnly}{\estname{obsonlycolor}{OBS-Only}}
\DeclareRobustCommand{\ProxyEXP}{\estname{proxyexpcolor}{Representation-EXP}}

\DeclareRobustCommand{\Grounded}{\estname{groundedcolor}{Grounded}}

\DeclareRobustCommand{\CVCI}{\estname{cvcicolor}{CVCI}}

\DeclareRobustCommand{\CVCIRes}{\estname{cvcirescolor}{CVCI-Residual}}

\DeclareRobustCommand{\OBSStruct}{\estname{obsonlycolor}{OBS-Only}}
\DeclareRobustCommand{\RepStruct}{\estname{proxyexpcolor}{Representation-EXP}}
\DeclareRobustCommand{\GroundedStruct}{\estname{groundedcolor}{Grounded}}
\DeclareRobustCommand{\CVCIStruct}{\estname{cvcicolor}{CVCI}}
\DeclareRobustCommand{\CVCIResStruct}{\estname{cvcirescolor}{CVCI-Residual}}

\DeclareRobustCommand{\DMFull}{Direct Method with Replay (DM)}
\DeclareRobustCommand{\DR}{DR}

\DeclareRobustCommand{\DRFull}{Doubly Robust (DR)}
\newcommand{\wincell}[3]{%
  \cellcolor{#1!12}\shortstack{#2\\[-1pt]\textcolor{grayText}{\tiny +#3}}%
}

\let\oldtoprule\toprule
\let\oldmidrule\midrule
\let\oldbottomrule\bottomrule
\renewcommand{\toprule}{\arrayrulecolor{GoldLight}\oldtoprule\arrayrulecolor{Gold}}
\renewcommand{\midrule}{\arrayrulecolor{Gold}\oldmidrule}
\renewcommand{\bottomrule}{\arrayrulecolor{GoldLight}\oldbottomrule\arrayrulecolor{Gold}}
\AtBeginEnvironment{tabular}{\arrayrulecolor{Gold}}
\AtBeginEnvironment{tabularx}{\arrayrulecolor{Gold}}

\newtheorem{theorem}{Theorem}

\newtheorem{corollary}[theorem]{Corollary}
\theoremstyle{remark}

\theoremstyle{definition}

\title{The Partial Testimony of Logs: Evaluation of Language Model Generation under Confounded Model Choice}
\author{Jikai Jin and Vasilis Syrgkanis\\
\texttt{\{jkjin, vsyrgk\}@stanford.edu}}

\begin{document}

\maketitle
\lhead{}
\hypersetup{pdftitle={Causal Evaluation of Language Models under Confounded Model Choice}}

\input{sections/00_abstract}

\input{sections/01_introduction}
\input{sections/02_problem_setting}
\input{sections/03_identification}
\input{sections/04_estimators}
\input{sections/05_synthetic_experiments}
\input{sections/06_real_agent_experiments}
\input{sections/08_related_work}
\input{sections/07_discussion}
\input{sections/09_conclusion}

\bibliographystyle{colm2026_conference}
\bibliography{references}

\newpage
\appendix
\input{sections/A_additional_details}
\input{sections/B_theory_proofs_revised}

\end{document}

%% file: sections/00_abstract.tex
\begin{abstract}
Offline evaluation of language models from usage logs is biased when model choice is confounded: the same user-side factors that influence which model is used can also influence how its output is judged, so raw comparisons of logged scores mix self-selected populations rather than estimating a common quantity of interest. A small randomized experiment can break this bias by overriding model choice, but in practice such experiments are scarce and costly. We study a three-source design that combines a large confounded observational log (OBS) for scale, a small randomized experiment (EXP) for unconfounded scoring, and an offline simulator (SIM) that replays candidate models on cached contexts. Our main result is an identification theorem showing that the randomized experiment and the simulator are together enough to recover causal model values; the observational log enters only afterward, to reduce estimation error rather than to make the causal comparison valid. Six estimator families are evaluated in a controlled semi-synthetic validation and in two real-task cached benchmarks for summarization and coding. No family dominates every regime; relative performance depends on the amount of unbiased EXP supervision and on how closely the target reward aligns with OBS-derived structure.
\end{abstract}

%% file: sections/01_introduction.tex
\section{Introduction}
\label{sec:intro}

Offline evaluation of language models increasingly relies on deployment logs, preference data, and judge annotations collected in the wild \citep{Zheng2023LMSYSChat1M,Zhao2024WildChat,Zheng2023LLMJudge,Chiang2024ChatbotArena}. Once model choice is user-driven, those logs are confounded: the same unobserved factors that influence which model is used can also influence how its output is judged \citep{Kallus2018ExperimentalGrounding,Rosenman2023Shrinkage,Cheng2021Adaptive}. Raw comparisons of logged outcomes therefore mix different self-selected subpopulations rather than a common target population. In summarization, a reader who prefers concise prose may select a model known for that style and rate it more favorably; in coding assistance, repository difficulty or developer familiarity may affect both which assistant is invoked and whether the resulting patch succeeds.

A randomized experiment breaks this link by overriding model choice. In practice, such experiments are costly and disruptive, so they remain small. Prior work studies how to combine a large confounded observational sample with a small randomized one \citep{Kallus2018ExperimentalGrounding,Rosenman2023Shrinkage,Cheng2021Adaptive,Lin2025PowerLikelihood,Yang2025CVCI,Colnet2024Review}.

The problem has two parts: outcome generation and outcome scoring. Outcome generation asks which output a model would have produced on a given context; in many deployments an offline simulator (SIM) handles this by replaying models on cached contexts. Outcome scoring asks how the realized output should be evaluated without inheriting the bias in logged model choice. Separating the two yields a three-source design: a large observational log (OBS) with self-selected model choices, a small randomized experiment (EXP) with unconfounded outcome labels, and SIM outputs under alternative model choices. The goal is to estimate how each generative model would perform when logged model choice is confounded, randomized outcome labels are limited, and offline replay can regenerate candidate outputs.

\medskip
\noindent\textbf{Contributions.} The paper makes three contributions.
\begin{enumerate}[leftmargin=*,label=\textbf{(C\arabic*)}]
  \item \textbf{Identification (Theorem~\ref{thm:replay_identification}).} Under the structural assumptions stated in Section~\ref{sec:setting}, the simulator and the randomized experiment together identify causal model values. The observational log is not required for identification.
  \item \textbf{Estimator role of OBS (post-identification only).} We separate identification from estimation and compare six post-identification estimator families that differ in how they use OBS. After identification, OBS improves estimation but never serves as an identifying source. A theory-to-evidence map explains why different families help in different data regimes.
  \item \textbf{Empirical evaluation.} A controlled semi-synthetic validation and two real-task cached benchmarks (summarization, coding) compare the estimator families on held-out recommendation regret. The empirical pattern is not universal: performance depends mainly on how much unbiased experimental supervision is available, and on how well observational structure matches the reward that ultimately matters. The cached benchmarks evaluate fixed cached targets rather than the full latent population quantity one would ideally estimate.
\end{enumerate}

\medskip
Section~\ref{sec:setting} defines the causal graph, data sources, target estimands, and assumptions. Section~\ref{sec:identification} states and proves the SIM-plus-EXP identification result. Section~\ref{sec:estimators} introduces the post-identification estimator families and the theory-to-evidence map. Section~\ref{sec:benchmarks} describes the controlled validation and cached benchmarks; Sections~\ref{sec:real} and~\ref{sec:coding_public} report the cached summarization and cached coding probes. Sections~\ref{sec:related}, \ref{sec:discussion}, and~\ref{sec:conclusion} cover related work, scope and limitations, and conclusions.

%% file: sections/02_problem_setting.tex
\section{Problem setting}
\label{sec:setting}
 
OBS provides scale and auxiliary supervision from self-selected usage, EXP provides unbiased outcome labels, and SIM provides counterfactual outputs. Theorem~\ref{thm:replay_identification} shows that EXP and SIM identify the target, while OBS only improves statistical efficiency afterward.
Figures~\ref{fig:causal_graph} and~\ref{fig:three_source_pipeline} illustrate the causal structure and the division of labor.

\subsection{Variables and causal structure}

\begin{figure}[tb]
\centering
\resizebox{0.85\linewidth}{!}{%
\begin{tikzpicture}[
    transform shape,
    >=Stealth,
    every node/.style={font=\rmfamily},
    obs/.style={
      circle, draw=Navy, line width=1.2pt,
      minimum size=24pt, inner sep=0pt, fill=Parchment
    },
    latent/.style={
      circle, draw=Navy, line width=1.2pt, dashed,
      minimum size=24pt, inner sep=0pt, fill=Navy!8
    },
    causal/.style={->, line width=0.95pt, color=Burgundy},
    confound/.style={->, line width=0.9pt, color=Gold, dashed},
    annot/.style={font=\rmfamily\scriptsize\scshape, text=Inkwell, align=center},
  ]
  \path[use as bounding box] (-0.55,-1.15) rectangle (10.95,3.0);
  \node[obs]    (X) at (0,   0)   {$X$};
  \node[latent] (U) at (3.0, 2.2) {$U$};
  \node[obs]    (A) at (3.0, 0)   {$A$};
  \node[obs]    (M) at (6.0, 0)   {$M$};
  \node[obs]    (Y) at (9.0, 0)   {$Y$};
  \draw[causal] (X) -- (A);
  \draw[causal] (X) to[bend right=15] (M);
  \draw[causal] (A) -- (M);
  \draw[causal] (M) -- (Y);
  \draw[causal] (X) to[bend left=22] (Y);
  \draw[confound] (U) -- (A);
  \draw[confound] (U) to[bend left=10] (Y);
  \node[annot, below=4pt of X] {Observed\\[-1pt]context};
  \node[annot, above=4pt of U] {Latent user state\\[-1pt](unobserved)};
  \node[annot, below=4pt of A] {Chosen model};
  \node[annot, below=4pt of M] {Mediator\\[-1pt](output)};
  \node[annot, below=4pt of Y] {Outcome\\[-1pt](user utility)};
  \coordinate (leg1s) at (7.35,1.9);
  \coordinate (leg2s) at (7.35,1.5);
  \draw[confound, line width=0.9pt] (leg1s) -- ++(0.55,0);
  \draw[causal, line width=0.9pt] (leg2s) -- ++(0.55,0);
  \node[anchor=west, font=\rmfamily\scriptsize\scshape, text=Inkwell] (leg1t) at (8.05,1.9) {Confounding paths};
  \node[anchor=west, font=\rmfamily\scriptsize\scshape, text=Inkwell] (leg2t) at (8.05,1.5) {Causal paths};
  \begin{scope}[on background layer]
    \node[draw=GoldLight, fill=Parchment, rounded corners=2pt, fit=(leg1s)(leg2s)(leg1t)(leg2t), inner sep=3pt] {};
  \end{scope}
\end{tikzpicture}
}
\caption{Causal graph with confounded model choice. The latent state $U$ affects both the chosen model $A$ and the outcome $Y$, so OBS comparisons of logged outcomes are biased. Assumption~A1 (randomization in EXP) breaks the $U\to A$ link inside EXP, and Assumption~A4 (no latent mediator confounding) blocks any direct $U\to M$ path; together they enable the SIM-plus-EXP identification of Theorem~\ref{thm:replay_identification}.}
\label{fig:causal_graph}
\end{figure}

We study a five-variable graph $(X,U,A,M,Y)$ shown in Figure~\ref{fig:causal_graph}, where
$X$ is the observed context, $U\in\R^{d_u}$ is an unobserved user state,
$A\in\mathcal{A}$ indexes the chosen generative model, $M$ is the generated output,
and $Y\in[0,1]$ is the scalar outcome. Confounding arises because the latent state $U$ affects both model choice and outcome.
Beyond ordinary confounding, the graph imposes two structural restrictions: conditional on $(X,A)$, the latent user state $U$ does not directly affect the mediator $M$, and $A$ affects the outcome $Y$ only through $M$. These assumptions are natural in stateless generative systems, where the model's behavior is determined by its input and internal randomness rather than by who is asking.

\subsection{Accessible data sources}

Three data sources are available, each contributing something the others cannot provide alone.

\paragraph{Observational log (OBS).}
OBS is a large observational dataset
$D_{\obs}=\{(X_i,A_i,M_i,Y_i,Z_i)\}_{i=1}^{n_{\obs}},$
where $A_i$ may depend on the corresponding latent state $U_i$ and
$Z_i\in[0,1]^K$ are $K$ auxiliary labels that may or may not be available.
The auxiliary labels $Z_i$ provide extra supervision for learning a compact proxy representation of the context--output pair; user feedback ratings and pairwise preferences are natural sources and can often be collected at scale \citep{Christiano2017HumanPreferences,Ouyang2022InstructGPT,Chiang2024ChatbotArena}.

\paragraph{Randomized experiment (EXP).}
EXP is a much smaller dataset
$D_{\expd}=\{(X_j,A_j,M_j,Y_j)\}_{j=1}^{n_{\expd}},$
in which model choice is randomized and therefore unconfounded.
Since $A_j$ is drawn from the experimental action set $\mathcal{A}_{\expd}$ according to a fixed distribution, $A\perp U$ and $A\perp U\mid X$, so EXP outcomes are causally interpretable. Randomizing model choice disrupts user experience, which is why these experiments remain small in practice.

\paragraph{Offline simulator (SIM).}
The simulator reruns any generative model on any context and returns a mediator
draw from $M \sim p_{\simu}(\cdot\mid X=x, A=a).$
Operationally, SIM reruns generative model $a$ on held-out context $x$ and records the generated output. In generative systems, SIM is especially natural because prompts, retrieved documents, tool traces, and other environment inputs can often be stored as deterministic inputs, while the remaining randomness comes from the generative model itself.

\begin{figure}[tb]
\centering
\includegraphics[width=\textwidth]{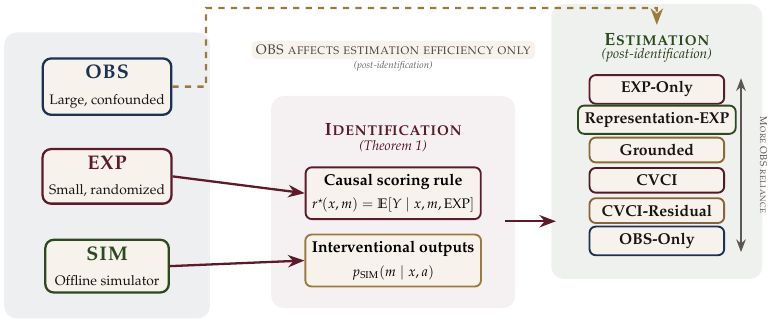}
\caption{Three-source design. SIM and EXP are sufficient for identification of $r^\star$, $q$, and $\mu$ on shared support; OBS affects estimation efficiency only. Solid arrows mark the identification flow, and dashed gold arrows mark post-identification statistical signal.}
\label{fig:three_source_pipeline}
\end{figure}

\subsection{Target estimands}

Two practical targets matter.

\paragraph{Marginal model value.}
\emph{How well would each model perform on average if used by the full target population?}
The marginal model value is the main target for offline comparison of a fixed set of generative models, and it corresponds to the target estimand
\[
\mu(a) := \E\!\left[Y\big(\doop(A=a)\big)\right].
\]

\paragraph{Context-conditional model value.}
\emph{Given a particular observed context $x$, what outcome would each model deliver on average?}
The context-conditional value is the key target for recommendation and personalization, and it corresponds to the context-level function
\[
q(x,a) := \E\!\left[Y\big(\doop(A=a)\big)\mid X=x\right].
\]

Section~\ref{sec:identification} shows how both targets are recovered: SIM supplies the model-specific mediator distribution and EXP provides unconfounded scores for realized outputs.

In real cached benchmarks, we instead evaluate the finite cached target
\[
q_{\mathrm{cache}}(x,a):=r^\star\!\bigl(x,m_{\mathrm{cache}}(x,a)\bigr),
\]
where $m_{\mathrm{cache}}(x,a)$ is the stored output for that context--model pair. The finite cached target approximates $q(x,a)$; the superpopulation estimand remains latent.

\subsection{Design assumptions for SIM-based identification}

\paragraph{Assumption A1 (Randomization in EXP).}
In EXP, $A$ is randomized over $\mathcal{A}_{\expd}$ so that $A\perp U$ and $A\perp U\mid X$ hold.

Randomization is the sole source of unconfounded outcome information; without it, identification is impossible regardless of OBS size.

\paragraph{Assumption A2 (SIM validity and support).}
The simulator matches the interventional mediator distribution, $p_{\simu}(m\mid x,a)=\Pp(M=m\mid X=x,\doop(A=a))$, and is supported within the EXP mediator support: for every $x$ in the EXP context support and every $a\in\mathcal{A}_{\expd}$,
\[
\operatorname{supp} p_{\simu}(\cdot\mid x,a)\;\subseteq\;\operatorname{supp} \Pp(M\mid X=x,A=a,\expd).
\]
Throughout the paper, ``the shared EXP/SIM support'' refers to this support condition. SIM validity is satisfied whenever any randomness in the generated output comes from the generative model itself and all deployment-time inputs that affect the generative model's behavior are included in $X$.

\paragraph{Assumption A3 (Outcome consistency under intervention).}
OBS and EXP measure the same outcome $Y$, and the conditional outcome law given $(X,M,U)$ is invariant across OBS, EXP, and the post-intervention world: for every $(x,m,u)$ on the relevant support,
\begin{align*}
\Pp(Y\mid X=x,M=m,U=u,\obs)
\;&=\;\Pp(Y\mid X=x,M=m,U=u,\expd)\\
\;&=\;\Pp(Y\mid X=x,M=m,U=u,\doop(A=a)).
\end{align*}
Only the model-choice mechanism differs across OBS, EXP, and intervention. Without this, EXP supervision could not transport either to the intervention or to the OBS-trained reward models.

\paragraph{Assumption A4 (No latent mediator confounding; single-round generation).}
Conditional on $(X,A)$, the latent state $U$ is independent of the mediator $M$ in both OBS and EXP, and under $\doop(A=a)$:
\[
U \perp M \mid X, A.
\]
Equivalently, the SCM has no edge $U\to M$ once $(X,A)$ is fixed, and any deployment-time inputs that drive the generative model are absorbed into $X$. The same structural restriction appears in Figure~\ref{fig:causal_graph}; promoting it to a formal assumption combines with A1's randomization in the mediator step of the proof of Theorem~\ref{thm:replay_identification}. A4 restricts the framework to single-round, stateless generation. In repeated, adaptive, or multi-round interactions the residual dependence of $M$ on $U$ given $(X,A)$ is generally nonzero, so A4 fails.

\paragraph{Assumption A5 (Context distribution alignment).}
The marginal model value $\mu(a)$ is defined with respect to a target context distribution $P_X^{\mathrm{tgt}}$ that coincides with the EXP context distribution: $P_X^{\mathrm{tgt}}=P_X^{\expd}$. Equivalently, the EXP sample is drawn from the same context population over which $\mu(a)$ is averaged. When this fails, only $q(x,a)$ on the shared support is identified, and any reweighting to a different $P_X^{\mathrm{tgt}}$ requires standard transportability and trial-generalizability arguments \citep{ColeStuart2010,PearlBareinboim2011Transportability,BareinboimPearl2016DataFusion} that are outside the scope of this paper.

%% file: sections/03_identification.tex
\section{Identification}
\label{sec:identification}

The remaining identification problem is to score generated outputs without bias. EXP provides that scoring rule, while SIM provides the interventional mediator distribution.
Throughout, we evaluate only actions in the experimental action set $\mathcal{A}_{\expd}$ and mediator
values on the shared support of the EXP law of $M\mid X=x,A=a$ and the corresponding
SIM law $p_{\simu}(\cdot\mid x,a)$. Define \(r^\star(x,m) := \E[Y \mid X=x, M=m, \expd]\).

\begin{theorem}[SIM plus EXP identifies causal value]
\label{thm:replay_identification}
Under the causal graph in Figure~\ref{fig:causal_graph} and Assumptions A1--A5, for every $x$
in the EXP context support and every $a\in \mathcal{A}_{\expd}$,
\begin{equation}
q(x,a)
=
\E\!\left[Y\big(\doop(A=a)\big)\mid X=x\right]
=
\E_{M\sim p_{\simu}(\cdot\mid x,a)}\!\left[r^\star(x,M)\right],
\label{eq:ident_q}
\end{equation}
and therefore, with $\mu(a)$ defined under the target context distribution $P_X^{\mathrm{tgt}}=P_X^{\expd}$ of Assumption A5,
\begin{equation}
\mu(a)
=
\E_{X\sim P_X^{\mathrm{tgt}}}[q(X,a)]
=
\E_{X\sim P_X^{\mathrm{tgt}}}\!\left[\E_{M\sim p_{\simu}(\cdot\mid X,a)}\!\left[r^\star(X,M)\right]\right].
\label{eq:ident_mu}
\end{equation}
Hence both $q(x,a)$ and $\mu(a)$ are identified from EXP and SIM on the shared EXP/SIM support. In particular,
OBS need not satisfy any unconfoundedness condition for identification.
\end{theorem}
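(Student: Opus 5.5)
The plan is to decompose the interventional outcome along the mediator and match each factor to a quantity that SIM or EXP identifies.

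\textbf{Step 1: factor the interventional mean.} Work in the post-intervention world $\doop(A=a)$. Condition on $X=x$ and integrate over the mediator and the latent state:
\begin{align*}
q(x,a) &= \int\!\!\int \E[Y\mid X=x,M=m,U=u,\doop(A=a)]\\
&\qquad\times \Pp(U=u\mid X=x,M=m,\doop(A=a))\,\Pp(M=m\mid X=x,\doop(A=a))\,du\,dm .
\end{align*}

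\textbf{Step 2: replace the mediator law by SIM.} By the validity part of A2, the factor $\Pp(M=m\mid X=x,\doop(A=a))$ equals $p_{\simu}(m\mid x,a)$.

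\textbf{Step 3: remove the dependence of $U$ on $M$.} By A4 under the intervention, $\Pp(U\mid X=x,M=m,\doop(A=a))=\Pp(U\mid X=x,\doop(A=a))$. Intervening on $A$ does not alter the law of the pre-treatment variables $(X,U)$, so this equals $\Pp^{\mathrm{tgt}}(U\mid X=x)$.

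\textbf{Step 4: show the inner integral equals $r^\star(x,m)$.} The inner integral is $\int \E[Y\mid x,m,u,\doop(A=a)]\,\Pp(u\mid x)\,du$. In EXP, expand $r^\star(x,m)$ the same way:
\[
r^\star(x,m)=\int \E[Y\mid x,m,u,\expd]\,\Pp(u\mid x,m,\expd)\,du .
\]
A3 equates the two conditional outcome means. For the weights, I need $\Pp(u\mid x,m,\expd)=\Pp(u\mid x,\expd)$. Write this weight as $\sum_{a'}\Pp(u\mid x,m,a',\expd)\,\Pp(a'\mid x,m,\expd)$. A4 gives $U\perp M\mid X,A$ in EXP, so $\Pp(u\mid x,m,a',\expd)=\Pp(u\mid x,a',\expd)$. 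A1 gives $A\perp U\mid X$, so this in turn equals $\Pp(u\mid x,\expd)$. The sum then collapses to $\Pp(u\mid x,\expd)$.

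Finally, $\Pp(u\mid x,\expd)$ must equal the target $\Pp(u\mid x)$. A5 aligns the context marginal; I will read it, together with the framing that only the model-choice mechanism differs, as aligning the joint $(X,U)$ population of EXP with the target. Under that reading the inner integral equals $r^\star(x,m)$, and substituting back gives \eqref{eq:ident_q}.

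\textbf{Step 5: support and the marginal value.} The support clause of A2 ensures $r^\star(x,\cdot)$ is well defined on every $m$ charged by $p_{\simu}(\cdot\mid x,a)$. Averaging \eqref{eq:ident_q} over $P_X^{\mathrm{tgt}}=P_X^{\expd}$ gives \eqref{eq:ident_mu} by the tower property. No OBS quantity appears in either formula, which gives the closing remark of the theorem.

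\textbf{Main obstacle.} The delicate step is Step 4, specifically the identity $\Pp(U\mid X,M,\expd)=\Pp(U\mid X)$. It needs randomization (A1) and the absence of a $U\to M$ edge (A4) used jointly, plus a population-alignment assumption for $U$ given $X$ that the listed assumptions only imply implicitly through A3/A5. I would address this first: either derive it explicitly from the SCM in Figure~\ref{fig:causal_graph}, where EXP and the target share exogenous $(X,U)$, or state it as part of A5.
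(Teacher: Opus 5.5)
Your proposal is correct and follows the paper's proof. Your Step 4 argument (A4 then A1) is the paper's chain-rule step giving $U\perp(A,M)\mid X$ in EXP, followed by A3 to transport the conditional outcome mean, A2 to plug in the SIM mediator law, and A5 for the context average. The step you flag, that the EXP law of $U\mid X$ equals the post-intervention one, is also used in the paper's proof, which attributes it somewhat loosely to A1; your fix of deriving it from a shared exogenous $(X,U)$ law (as in the benchmark factorizations) or folding it into A5 is the right one.
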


\paragraph{Proof sketch.}
The proof uses the assumptions one at a time in four steps.
\begin{enumerate}[leftmargin=*,topsep=2pt,itemsep=1pt,label=\textbf{Step \arabic*.}]
\item \emph{Combine A1 and A4 to obtain conditional independence.} Assumption~A1 gives $A\perp U\mid X$ in EXP, and Assumption~A4 gives $U\perp M\mid X,A$ in EXP and under intervention. The chain rule yields $U\perp(A,M)\mid X$ in EXP, so the EXP conditional regression $r^\star(x,m)=\E[Y\mid X=x,M=m,\expd]$ is unbiased for the causal scoring rule on the shared EXP/SIM support.
\item \emph{Use A3 to transport scoring across OBS, EXP, and intervention.} Assumption~A3 states that the conditional outcome law given $(X,M,U)$ is the same in OBS, in EXP, and under $\doop(A=a)$. Combined with Step~1, this means $r^\star(x,m)$ is also the conditional outcome regression under $\doop(A=a)$ on the shared support.
\item \emph{Use A2 to plug in the SIM mediator distribution.} Assumption~A2 states that $p_{\simu}(m\mid x,a)$ is the interventional mediator distribution. Marginalizing $r^\star$ over $M$ under $p_{\simu}(\cdot\mid x,a)$ therefore identifies $q(x,a)$, giving \eqref{eq:ident_q}.
\item \emph{Use A5 to identify the marginal model value.} Assumption~A5 fixes the target context distribution to $P_X^{\mathrm{tgt}}=P_X^{\expd}$, so averaging $q(X,a)$ under that distribution identifies $\mu(a)$, giving \eqref{eq:ident_mu}.
\end{enumerate}
The full proof is in Appendix~\ref{app:proof_replay_identification}. Identification requires both SIM, to regenerate outputs, and EXP, to provide outcome labels for those outputs. Conditioning on $(X,M)$ in OBS does not block confounding through $U$ when Assumption~A4 fails or OBS routes $A$ on $U$. Appendix~\ref{app:id_remarks} gives the explicit derivations.

\paragraph{From identification to the experimental design.}
The controlled semi-synthetic validation and the real cached benchmarks probe different parts of the theorem. The controlled semi-synthetic validation (Section~\ref{sec:controlled_bridge}) directly probes the theorem-level target: a known latent reward generator varies only $\beta$, $n_{\obs}$, and $n_{\expd}$, so estimator gaps reflect SIM-plus-EXP identification followed by post-identification estimation. The real cached benchmarks (Sections~\ref{sec:real} and~\ref{sec:coding_public}) report recommendation regret against the finite cached target $q_{\mathrm{cache}}$, with one cached output per context-action pair and synthetic OBS/EXP resampling. They diagnose the post-identification estimator families within those benchmark constructions rather than testing Theorem~\ref{thm:replay_identification} directly. In all settings, the estimator design in Section~\ref{sec:estimators} uses OBS only after identification to reduce variance or improve approximation.

%% file: sections/04_estimators.tex
\section{Estimators}
\label{sec:estimators}

After SIM and EXP identify the target, the remaining question is how OBS can reduce estimation error. The six estimator families differ only in how they use OBS after identification: some ignore OBS outcomes entirely, some use OBS only to learn a lower-dimensional proxy representation, and some use OBS outcomes directly through a baseline or pooled fit. Exact objectives, proxy-learning details, and value-aggregation formulas are deferred to Appendix~\ref{app:estimator_details} and Appendix~\ref{app:value_estimators}.

Throughout, $z=(x,m)$ denotes a context--mediator pair, and all reward models start from text features $\varphi(z)$. Hybrid methods use OBS auxiliary labels only to learn a lower-dimensional proxy representation $\psi(z)$, in the same spirit as text embeddings adapted for causal adjustment \citep{VeitchSridharBlei2020}. The proxy only makes the EXP-side fit lower-dimensional and leaves the identified target unchanged.

\begin{table}[tb]
\centering
\scriptsize
\setlength{\tabcolsep}{3pt}
\renewcommand{\arraystretch}{0.95}
\caption{How the six reward estimators use OBS and EXP. Darker shading in the OBS-role column indicates heavier reliance on observational signal. Section~\ref{sec:estimators} gives the prediction form for each family.}
\label{tab:reward_estimators}
\hfuzz=8pt
\begin{tabular}{@{}>{\raggedright\arraybackslash}p{1.6cm}>{\raggedright\arraybackslash}p{3.9cm}>{\raggedright\arraybackslash}p{3.8cm}>{\raggedright\arraybackslash}p{3.8cm}@{}}
\toprule
Estimator & OBS role & EXP role & Bias--variance intuition \\
\midrule
\EXPOnly{} & None & Fits the reward model & No confounding bias, but high variance when EXP is small. \\
\OBSOnly{} & \cellcolor{Navy!24!Parchment}Fits the reward model on logged outcomes & None & Low variance, but inherits bias from confounded model choice. \\
\begin{tabular}[t]{@{}l@{}}\textcolor{proxyexpcolor}{Representation-}\\\textcolor{proxyexpcolor}{EXP}\end{tabular} & \cellcolor{Navy!10!Parchment}Learns the proxy representation $\psi$ from auxiliary labels & Fits the reward head on $\psi$ & Helps when the target varies along directions retained by $\psi$; hurts when $\psi$ omits target-relevant directions. \\
\Grounded{} & \cellcolor{Navy!16!Parchment}Provides the baseline predictor $f_{\obs}$ and the proxy space $\psi$ & Estimates and tunes a correction to $f_{\obs}$ & Uses OBS for efficiency and EXP to remove low-dimensional bias. \\
\CVCI{} & \cellcolor{Navy!30!Parchment}Contributes logged-outcome loss directly & Chooses the pooling weight by causal cross-validation and anchors the fit to randomized outcomes & Trades OBS bias against finite-EXP estimation error through direct pooling. \\
\begin{tabular}[t]{@{}l@{}}\textcolor{cvcirescolor}{CVCI-}\\\textcolor{cvcirescolor}{Residual}\end{tabular} & \cellcolor{Navy!20!Parchment}Provides the baseline and contributes pooled residual loss in $\psi$ & Chooses the pooling and shrinkage of the residual correction & Uses OBS for the coarse fit and EXP for the residual mismatch. \\
\bottomrule
\end{tabular}
\hfuzz=0.1pt
\end{table}

The estimator comparison is guided by three evidence tiers. The identification theorem is part of the main paper. The estimator-comparison results in Appendix~\ref{app:theory} provide formal support for several oracle and approximation statements. Target-alignment patterns in the benchmarks diagnose whether a target reward aligns with a proxy, baseline, or correction class. They provide empirical rather than formal evidence. Table~\ref{tab:theory_evidence_bridge} summarizes this map.

\begin{table}[tb]
\centering
\scriptsize
\setlength{\tabcolsep}{3pt}
\renewcommand{\arraystretch}{1.08}
\caption{Theory-to-evidence map for the estimator comparison. Each row names a mechanism, its evidence tier, the formal support when present, the empirical diagnostic that probes it, and the rule for interpreting wins and losses on that diagnostic. Tier 1 is formal support in the main paper; Tier 2 is formal support from the appendix; Tier 3 is empirical diagnostic evidence only.}
\label{tab:theory_evidence_bridge}
\begin{tabularx}{\textwidth}{@{}p{2.3cm}p{1.15cm}p{3.2cm}p{4.0cm}X@{}}
\toprule
Mechanism & Tier & Formal support & Empirical diagnostic & Interpretation rule \\
\midrule
SIM plus EXP identify the target & 1 & Theorem~\ref{thm:replay_identification} & All benchmarks separate randomized scoring from replayed or cached outputs & EXP supplies unconfounded scores; SIM or cached replay supplies mediators. \\
\Grounded{} can enlarge a proxy-side approximation class & 2 & Theorem~\ref{thm:grounded_function_class} & Basis-expanded \Grounded{} improves over the single-linear grounded baseline in Appendix~\ref{app:grounded_real_upgrade} & Treat this as benchmark evidence for richer correction bases; the theorem establishes class expansion, not finite-sample dominance. \\
Oracle correction versus finite-sample correction & 2 & Theorem~\ref{thm:grounded_vs_obs} and \eqref{eq:noisy_correction_identity} & \Grounded{} is competitive but not uniformly best in the cached benchmarks & The oracle result is not a finite-sample dominance guarantee. \\
Residual simplicity & 2 & Theorem~\ref{thm:residual_vs_pooling} & \CVCIRes{} wins only selected cells and often underperforms in real summarization & Read wins or losses as benchmark evidence about residual simplicity, not as direct tests of the oracle condition. \\
Proxy alignment and reward definition & 3 & Empirical diagnostic unless additional approximation-class results are formalized & Summarization \(R^2\) diagnostics and coding fix-success sweep & Interpret as empirical diagnostics rather than formal oracle statements. \\
\bottomrule
\end{tabularx}
\end{table}

\subsection{Reward-model estimators}

All six reward models minimize penalized squared loss with predictions clipped to $[0,1]$. They differ only in whether and how they use OBS. Table~\ref{tab:reward_estimators} summarizes the six families; Appendix~\ref{app:estimator_details} gives exact objectives and benchmark-specific instantiations.

\paragraph{Prediction-form summary.}
Each family produces a clipped predicted reward $\widehat r(x,m)$ from one of the following forms; here $\varphi(z)$ is the raw text feature, $\psi(z)$ is the OBS-derived proxy representation, $f_{\obs}(z)$ is the OBS-trained predictor (so $\widehat r_{\obs}(z)=f_{\obs}(z)$ for $\OBSOnly{}$), and $\alpha_{\mathrm{corr}}\in[0,1]$, $\lambda\in[0,1]$ are EXP-side tuning weights chosen by causal cross-validation:
\begin{align*}
\widehat r_{\expd}(z) &= \clip_{[0,1]}\!\bigl(\widehat w_{\expd}^{\top}\varphi(z)+\widehat b_{\expd}\bigr), \\
\widehat r_{\psi}(z) &= \clip_{[0,1]}\!\bigl(\widehat w_{\psi}^{\top}\psi(z)+\widehat b_{\psi}\bigr), \\
\widehat r_{\mathrm{G}}(z) &= \clip_{[0,1]}\!\Bigl(f_{\obs}(z)-\alpha_{\mathrm{corr}}\bigl(\widehat\theta_B^{\top}B(\psi(z))+\widehat c_B\bigr)\Bigr), \\
\widehat r_{\mathrm{C}}(z) &= \clip_{[0,1]}\!\bigl(\widehat w_\lambda^{\top}\varphi(z)+\widehat b_\lambda\bigr), \\
\widehat r_{\mathrm{CR}}(z) &= \clip_{[0,1]}\!\bigl(f_{\obs}(z)+\widehat g_{\mathrm{res}}(z)\bigr).
\end{align*}
In words: $\widehat r_{\expd}$ regresses $Y$ on $\varphi$ using EXP; $\widehat r_{\psi}$ regresses $Y$ on the OBS-derived proxy $\psi$ using EXP; and $\widehat r_{\mathrm{G}}$ adds an EXP-fit correction to the OBS baseline $f_{\obs}$ in proxy space. $\widehat r_{\mathrm{C}}$ pools the OBS and EXP regression problems on $\varphi$ with mixing weight $\lambda$ ($\lambda=1$ gives $\OBSOnly{}$, $\lambda=0$ gives $\EXPOnly{}$), and $\widehat r_{\mathrm{CR}}$ adds an EXP residual fit $\widehat g_{\mathrm{res}}$ in proxy space to $f_{\obs}$ on the residual target $Y-f_{\obs}(z)$. Appendix~\ref{app:estimator_details} gives the exact objectives and the tuning rule for $\alpha_{\mathrm{corr}}$, $\lambda$, and the basis $B$.

\paragraph{From reward models to model values.}
Given a fitted reward $\widehat r$, the direct method (DM) and doubly robust (DR) value estimators \citep{RobinsRotnitzkyZhao1994,BangRobins2005,Dudik2011DoublyRobust,Dudik2014DRStatSci,Jiang2016DROffPolicy} plug $\widehat r$ into the identification formula \eqref{eq:ident_q}--\eqref{eq:ident_mu}. For each held-out context $x$ and action $a$, the cached benchmarks compute
\[
\widehat q^{\mathrm{DM}}(x,a)=\widehat r(x,m_{\mathrm{cache}}(x,a)),
\]
i.e.\ the fitted reward at the cached mediator; the controlled semi-synthetic validation, which has access to a real SIM, instead averages $\widehat r$ over a SIM mediator sample. The marginal value is $\widehat\mu^{\mathrm{DM}}(a)=\E_{X\sim P_X^{\expd}}[\widehat q^{\mathrm{DM}}(X,a)]$, with the expectation replaced by an empirical average over the held-out EXP context split.

The doubly robust estimator (DR) corrects $\widehat\mu^{\mathrm{DM}}(a)$ on EXP. With $K$-fold cross-fitting and uniform EXP randomization $p_{\expd}(a)=1/|\mathcal A_{\expd}|$, write $\widehat r^{(-k)}$ for the reward model trained on EXP folds $\{1,\ldots,K\}\setminus\{k\}$ and let $k(j)$ be the fold containing EXP point $j$. The DR estimator for the marginal model value is
\begin{equation}
\widehat\mu^{\mathrm{DR}}(a)
\;=\;
\widehat\mu^{\mathrm{DM}}(a)
\;+\;
\frac{1}{n_{\expd}}\sum_{j=1}^{n_{\expd}}
\frac{\Ind\{A_j=a\}}{p_{\expd}(a)}\,\widehat\varepsilon_j(a),
\label{eq:dr-mu}
\end{equation}
where $\widehat\varepsilon_j(a):=Y_j-\widehat r^{(-k(j))}(X_j,M_j)$ is the cross-fitted EXP residual. Both terms in \eqref{eq:dr-mu} target the marginal value $\mu(a)$ under Assumption~A5. The complete DM/DR formulas and cross-fitting protocol are given in Appendix~\ref{app:value_estimators}.

\begin{enumerate}[leftmargin=*]
\item \textbf{Use only EXP.} EXP-Only is the unbiased baseline: it fits a clipped ridge reward model using only randomized EXP outcomes (the standard randomized-trial regression baseline used as a reference point in hybrid OBS--EXP work, e.g.\ \citealp{Kallus2018ExperimentalGrounding,Rosenman2023Shrinkage,Yang2025CVCI}), and pays for that with high variance when EXP is small.

\item \textbf{Use OBS only to learn a representation.} Representation-EXP still relies on EXP for causal supervision, but it first uses OBS auxiliary labels to learn $\psi(z)$ and then fits the reward head on that representation \citep{VeitchSridharBlei2020}. It helps when the target varies mainly along directions preserved by $\psi$, and it hurts when $\psi$ discards directions that matter for the target.

\item \textbf{Use OBS outcomes, calibrated by EXP.} These methods use OBS for signal and EXP for calibration. $\OBSOnly{}$ is the uncalibrated endpoint --- a clipped ridge regression on OBS outcomes alone, the canonical biased-but-low-variance baseline in the OBS--EXP combination literature \citep{Kallus2018ExperimentalGrounding,Rosenman2023Shrinkage,Cheng2021Adaptive,Lin2025PowerLikelihood,Yang2025CVCI} --- and the next three methods differ only in how they calibrate it.
\end{enumerate}

\textbf{3.1 Grounded Correction (\Grounded{}).}
Inspired by experimental grounding \citep{Kallus2018ExperimentalGrounding}, \Grounded{} starts from the OBS predictor $f_{\obs}$ and uses EXP to learn a correction in proxy space. In the benchmark implementation, that correction is chosen from a small basis library rather than from a single linear head. Its final prediction is
\[
r_{\mathrm{ground},B,\alpha_{\mathrm{corr}}}(z)
\;=\;
\clip_{[0,1]}\!\Big(f_{\obs}(z)-\alpha_{\mathrm{corr}}\big(\widehat\theta_{B}^{\top}B(\tilde\psi(z))+\widehat c_{B}\big)\Big),
\]
where $\tilde\psi(z)$ is the low-dimensional proxy representation and $B$ is selected from a small fixed basis library on that proxy space. The method is most attractive when OBS already captures the coarse reward surface and EXP mainly needs to remove a lower-dimensional bias. Appendix~\ref{app:estimator_details} gives the exact fitting and tuning details.

\textbf{3.2 Cross-validated Causal Inference (\CVCI{}).}
CVCI \citep{Yang2025CVCI}, building on the broader OBS--EXP combination literature \citep{Rosenman2023Shrinkage,Cheng2021Adaptive,Lin2025PowerLikelihood}, pools the OBS and EXP regression problems directly and chooses the pooling weight $\lambda\in[0,1]$ by causal cross-validation on EXP. Concretely, the pooled coefficient solves
\begin{equation}
(\widehat w_\lambda,\widehat b_\lambda)
=
\arg\min_{w,b}\Bigl\{
(1-\lambda)\,\mathcal L_{\expd}(w,b;\varphi,Y)
+\lambda\,\mathcal L_{\obs}(w,b;\varphi,Y)
+\alpha\|w\|_2^{2}
\Bigr\},
\label{eq:cvci-objective}
\end{equation}
where $\mathcal L_{\mathcal D}(w,b;\phi,Y):=\sum_{i\in\mathcal D}(Y_i-(w^{\top}\phi(z_i)+b))^{2}$ is the squared-error loss on dataset $\mathcal D$ in feature map $\phi$,
so $\lambda=1$ recovers the $\OBSOnly{}$ regression and $\lambda=0$ recovers the $\EXPOnly{}$ regression on $\varphi$. The pooling weight $\lambda$ is selected by minimizing held-out causal-CV loss on EXP at the model level (Appendix~\ref{app:agent_cv}), anchoring the bias--variance interpolation to randomized outcomes.

\textbf{3.3 CVCI-Residual.}
CVCI-Residual extends the CVCI pooling rule of \citet{Yang2025CVCI} by first residualizing around the OBS baseline (replacing the regression target $Y$ with $Y-f_{\obs}(z)$) and then running a $\lambda$-pooled fit in the proxy space $\psi(z)$:
\hfuzz=5pt
\begin{equation}
(\widehat w^{\mathrm{res}}_\lambda,\widehat b^{\mathrm{res}}_\lambda)
=
\arg\min_{w,b}\Bigl\{
(1-\lambda)\,\mathcal L_{\expd}(w,b;\psi,Y-f_{\obs})
+\lambda\,\mathcal L_{\obs}(w,b;\psi,Y-f_{\obs})
+\alpha\|w\|_2^{2}
\Bigr\},
\label{eq:cvci-res-objective}
\end{equation}
yielding $\widehat g_{\mathrm{res}}(z)=(\widehat w^{\mathrm{res}}_\lambda)^{\top}\psi(z)+\widehat b^{\mathrm{res}}_\lambda$ and the final prediction $\clip_{[0,1]}\!\big(f_{\obs}(z)+\widehat g_{\mathrm{res}}(z)\big)$. It targets settings where OBS already captures the hard part of the reward surface and EXP is mainly needed to calibrate a simpler discrepancy. When residual-CVCI hyperparameter CV scores tie within numerical tolerance, the tie is broken toward the more regularized candidate; Appendix~\ref{app:estimator_details} and Appendix~\ref{app:agent_cv} give the full tuning protocol and additional implementation details.

\paragraph{Family-level interpretation.}
The six items above are estimator families with benchmark-specific instantiations. The proxy channel, the OBS baseline, the basis library, the pooling rule, and the EXP-side tuning protocol differ across benchmarks (see Table~\ref{tab:concrete_impls} in Appendix~\ref{app:estimator_details}). The empirical comparisons in Sections~\ref{sec:benchmarks}--\ref{sec:coding_public} diagnose those family-level instantiations.

%% file: sections/05_synthetic_experiments.tex
\section{Empirical benchmarks}
\label{sec:benchmarks}

\subsection{Benchmark setup}

Across all empirical benchmarks, the task is the same: estimate a reward surface, recommend the best action on each held-out context, and measure held-out recommendation regret.
Two quantities recur throughout the benchmarks: $\beta$ controls the OBS choice law, and the real cached benchmarks replace the latent target $q$ with the finite cached target $q_{\mathrm{cache}}$.
Each benchmark setting specifies a training-context distribution, a held-out test set, an action set, a simulator, an experimental reward map, and an observational logging family.
Within one such cell, the observational law factors as
\[
p_{\obs}(x,u,a,m,y)
=
P_{\mathrm{tr}}(x)\,P(u\mid x)\,p_{\obs}^{\beta}(a\mid x,u)\,p_{\simu}(m\mid x,a)\,P_Y(y\mid x,m,u),
\]
and the experimental law replaces the choice model by an unconfounded randomization rule \(\pi_{\expd}(a\mid x)\):
\[
p_{\expd}(x,u,a,m,y)
=
P_{\mathrm{tr}}(x)\,P(u\mid x)\,\pi_{\expd}(a\mid x)\,p_{\simu}(m\mid x,a)\,P_Y(y\mid x,m,u).
\]
Here \(P_Y\) is the benchmark-specific outcome law. In the main-text benchmark settings, \(\pi_{\expd}(a\mid x)=1/|\mathcal A|\), and \(r^\star(x,m)=\E[Y\mid X=x,M=m,\expd]\) integrates out \(U\mid X=x\) under \(p_{\expd}\).
The identified target remains
\[
q(x,a)=\E_{M\sim p_{\simu}(\cdot\mid x,a)}[r^\star(x,M)].
\]
Each estimator returns \(\widehat q\), inducing the recommendation rule \(\widehat\pi(x)=\arg\max_{a\in\mathcal A}\widehat q(x,a)\).
We evaluate that rule by held-out recommendation regret,
\[
\mathrm{Regret}_{\mathrm{test}}
=
\frac{1}{|\mathcal{X}_{\mathrm{test}}|}
\sum_{x\in\mathcal{X}_{\mathrm{test}}}
\Big[
\max_{a\in\mathcal{A}} q(x,a)-q(x,\widehat\pi(x))
\Big].
\]

The \(\beta\)-indexed observational choice law is benchmark-specific. In the appendix controlled validation, \(p_{\obs}^{\beta}(a\mid x,u)=(1-\beta)\pi_X(a\mid x)+\beta\,\pi_U(a\mid u)\), where \(\pi_X\) is a context-driven policy and \(\pi_U\) is a latent-user policy. Thus \(\beta=0\) removes latent-user routing, whereas \(\beta=1\) uses only the latent-user component. Section~\ref{sec:real} instead uses a softmax router in which \(\beta\) scales the latent-user logit contribution.

The real cached benchmarks fix one realized mediator \(m_{\mathrm{cache}}(x,a)\) for each context-action pair and report the finite cached target \(q_{\mathrm{cache}}(x,a)=r^\star\!\bigl(x,m_{\mathrm{cache}}(x,a)\bigr)\). Their reported recommendation regret therefore replaces \(q\) by \(q_{\mathrm{cache}}\) on the held-out cache. Randomness then comes from OBS/EXP sampling and the benchmark's latent draws, not from regenerating mediators at evaluation time.

\subsection{Controlled validation as a theory bridge}
\label{sec:controlled_bridge}

The controlled semi-synthetic validation uses the summarization task family but replaces judged rewards by a known latent reward generator. The validation keeps the SIM/EXP/OBS structure fixed while varying confounding strength \(\beta\), the observational budget \(n_{\obs}\), and the randomized budget \(n_{\expd}\). The validation connects the estimator theory to observable behavior. When EXP is scarce, OBS-assisted estimators can improve held-out recommendation performance. With a larger randomized sample, EXP-only and directly pooled estimators become competitive. No single estimator dominates every regime. The full generator, protocol, and \(24\)-cell winner map are in Appendix~\ref{app:synth_validation} and Appendix~\ref{app:synth_regime_details}.

This controlled validation isolates how data budget and confounding strength change performance when the true interventional surface is known. The real summarization and coding benchmarks below then test the same estimator families on finite cached targets \(q_{\mathrm{cache}}\), where reward definition and auxiliary-feature alignment become the main explanatory variables.

\subsection{Main-text benchmarks}

The main paper reports the controlled validation and two real-task cached benchmarks with synthetic OBS/EXP resampling. The first cached benchmark is summarization on CNN/DailyMail\footnote{\url{https://huggingface.co/datasets/ccdv/cnn_dailymail}} \citep{Hermann2015CNN,Nallapati2016Sum,See2017Pointer}. The second is coding on SWE-bench Verified\footnote{\url{https://huggingface.co/datasets/princeton-nlp/SWE-bench_Verified}}, a human-validated 500-instance subset of SWE-bench \citep{Jimenez2024SWEBench,OpenAI2024SWEBenchVerified}, with candidate patches drawn from the BouncerBench public patch pool\footnote{\url{https://github.com/uw-swag/BouncerBench/releases/download/paper/all_patches.csv}} \citep{Mathews2025BouncerBench}. Both benchmarks use real tasks, real candidate model outputs, and real LLM-judged or program-test rewards on those outputs. Their OBS and EXP mechanisms are benchmark constructions obtained by resampling and reweighting the same cached pool under specified choice and randomization rules. Table~\ref{tab:benchmark_snapshot} lists the source data, cached objects, and targets. All benchmarks use family-level instantiations of the estimators in Section~\ref{sec:estimators}, and the reported results diagnose those families within the benchmark constructions.

\begin{table}[tb]
\centering
\scriptsize
\setlength{\tabcolsep}{4pt}
\renewcommand{\arraystretch}{1.12}
\caption{Benchmarks used in the main text. The two cached benchmarks evaluate finite cached targets $q_{\mathrm{cache}}$ rather than the latent superpopulation target $q$; their OBS and EXP samples are constructed by resampling and reweighting the cached pool under specified routing and randomization rules. Each row states what the benchmark evaluates and which target it uses.}
\label{tab:benchmark_snapshot}
\begin{tabularx}{\textwidth}{@{}p{2.2cm}p{4.1cm}p{3.1cm}X@{}}
\toprule
Benchmark & Source data & Cached object & Evaluation target \\
\midrule
Controlled semi-synthetic & CNN/DailyMail article family with a known latent reward generator & SIM-generated summaries from the generator pool & True interventional reward surface $q$ (theorem-level target). \\
Real-task cached summarization & 48 difficult CNN/DailyMail articles; 20 candidate models & One judged summary per article--model pair & Finite cached target $q_{\mathrm{cache}}^{\mathrm{sm}}$ (or $q_{\mathrm{cache}}^{\mathrm{sh}}$): segment-average rubric score over four user types under the smooth (or sharpened) reward map. \\
Real-task cached coding & SWE-bench Verified issues with a BouncerBench patch pool & One cached candidate patch per issue--agent pair & Finite cached target $q_{\mathrm{cache}}$: user-average utility over fix success ($c_1$) and patch-quality components ($c_2,c_3,c_4$). \\
\bottomrule
\end{tabularx}
\end{table}

%% file: sections/06_real_agent_experiments.tex
\subsubsection{Real-task cached summarization benchmark with synthetic OBS/EXP resampling}
\label{sec:real}

The cached summarization benchmark uses CNN/DailyMail articles and LLM-judged rubric scores. Its OBS and EXP samples are constructed by resampling and reweighting the same cached pool under specified routing and randomization rules. The summarization benchmark fixes one judged cached summary for each article--model pair and evaluates two reward maps on that same cache (Table~\ref{tab:benchmark_snapshot}). Each cached summary carries a rubric vector \(s(x,a)=\bigl(s_{\mathrm{faith}}(x,a),\,s_{\mathrm{cov}}(x,a),\,s_{\mathrm{clar}}(x,a),\,s_{\mathrm{conc}}(x,a)\bigr)\in[0,1]^4\) for faithfulness, coverage, clarity, and conciseness, together with an unsupported-claims count \(u(x,a)\).
Let \(\mathcal T\) denote the four user segments and let \(w_\tau\in\Delta^4\) be the segment-specific weight vector over these rubric dimensions. For a sharpening exponent \(\gamma\ge 1\) and unsupported-claims penalty \(\lambda\ge 0\), define \(\bar w_{\tau k}^{(\gamma)}=\frac{w_{\tau k}^{\gamma}}{\sum_{\ell=1}^4 w_{\tau \ell}^{\gamma}}\) and \(\ell_\tau^{(\gamma,\lambda)}(x,a)=\sum_{k=1}^4 \bar w_{\tau k}^{(\gamma)} s_k(x,a)-\lambda \frac{u(x,a)}{20}\), and write \([t]_{[0,1]}:=\min\{1,\max\{0,t\}\}\). The smooth map used in Tables~\ref{tab:front_runner_summary} and~\ref{tab:summarization_budget_perf} is
\[
g_\tau^{\mathrm{sm}}(x,a)=\ell_\tau^{(1,0)}(x,a)=\langle w_\tau, s(x,a)\rangle,
\qquad
q_{\mathrm{cache}}^{\mathrm{sm}}(x,a)=\frac{1}{|\mathcal T|}\sum_{\tau\in\mathcal T} g_\tau^{\mathrm{sm}}(x,a).
\]
The sharpened map used in Table~\ref{tab:summarization_geometry_perf}, with \(\sigma(t)=(1+e^{-t})^{-1}\), is
\[
g_\tau^{\mathrm{sh}}(x,a)
=
\sigma\!\left(
\frac{[\ell_\tau^{(16,0.1)}(x,a)]_{[0,1]}-0.9}{0.02}
\right),
\qquad
q_{\mathrm{cache}}^{\mathrm{sh}}(x,a)=\frac{1}{|\mathcal T|}\sum_{\tau\in\mathcal T} g_\tau^{\mathrm{sh}}(x,a).
\]

OBS and EXP are sampled only from training contexts. User segments are drawn uniformly from \(\mathcal T\), EXP randomizes uniformly over \(\mathcal A\), and OBS uses the \(\beta\)-indexed router defined by setting \(\zeta_a(x,\tau)=\eta_a(x)+\beta\,\xi_a(\tau)\) and writing
\begin{equation*}
p_{\obs}^{\beta}(a\mid x,\tau)
=
(1-\varepsilon)\,
\frac{\exp(\zeta_a(x,\tau))}
{\sum_{a'\in\mathcal A}\exp(\zeta_{a'}(x,\tau))}
+\frac{\varepsilon}{|\mathcal A|}.
\end{equation*}

where \(f_{\mathrm{ctx}}(x)\) is a fixed article feature vector, \(e(\tau)\in\{0,1\}^{|\mathcal T|}\) is the one-hot segment code, and \(c_a\) is the normalized cost of model \(a\). The term \(\eta_a(x)=\langle \theta_a^{X},f_{\mathrm{ctx}}(x)\rangle-\kappa c_a\) is an article-dependent logit, whereas \(\xi_a(\tau)=\langle \theta_a^{U},e(\tau)\rangle\) is a segment-affinity logit.

In this benchmark, \(\beta\) changes only the OBS routing policy: \(\beta=0\) removes the latent-user term, and larger \(\beta\) makes routing more segment-specific. Reported regret is computed on the finite cached target \(q_{\mathrm{cache}}^{\mathrm{sm}}\) or \(q_{\mathrm{cache}}^{\mathrm{sh}}\). All reported numbers are means over 30 seeds (held-out split, OBS/EXP resamples, and EXP randomization combined). Appendix~\ref{app:real_setting} gives the exact slate construction, judged-reward aggregation, split details, the budget grid, and per-cell standard errors.

\begin{table}[tb]
\centering
\small
\setlength{\tabcolsep}{6pt}
\begin{tabular}{lccc}
\toprule
Method & Avg.\ rank & Top-3 & Excess (\%) \\
\midrule
\textbf{\CVCI{}} & \textbf{2.04} & \textbf{43} & \textbf{4.8} \\
\OBSOnly{} & 2.83 & 32 & 7.9 \\
\EXPOnly{} & 2.90 & 32 & 8.8 \\
\Grounded{} & 3.21 & 26 & 7.9 \\
\CVCIRes{} & 4.40 & 9 & 24.0 \\
\ProxyEXP{} & 5.62 & 2 & 77.6 \\
\bottomrule
\end{tabular}
\caption{Overall recommendation regret in the cached summarization benchmark. \emph{Avg.\ rank} is the average rank across the 48 settings; \emph{Top-3} is the count of settings (out of 48) in which a method ranked among the top three; \emph{Excess (\%)} is the mean percentage regret increase relative to the best method in each setting (lower is better in all three columns). Bolding marks the best per column; the gap between $\CVCI$ and the second-best method is specific to this benchmark instantiation. Standard errors and per-setting breakdowns are in Appendix~\ref{app:real_setting}.}
\label{tab:front_runner_summary}
\end{table}

\begin{table}[tb]
\centering
\small
\setlength{\tabcolsep}{6pt}
\renewcommand{\arraystretch}{1.05}
\begin{tabular}{@{}lcc cc@{}}
\toprule
\multicolumn{1}{c}{}
& \multicolumn{2}{c}{\(n_{\obs}=2{,}000\)}
& \multicolumn{2}{c}{\(n_{\obs}=20{,}000\)} \\
\cmidrule(lr){2-3}\cmidrule(lr){4-5}
Method & \(n_{\expd}=20\) & \(n_{\expd}=200\)
& \(n_{\expd}=20\) & \(n_{\expd}=200\) \\
\midrule
\CVCI{} & 0.0128 & 0.0128 & 0.0120 & 0.0120 \\
\OBSOnly{} & 0.0128 & 0.0128 & 0.0123 & 0.0123 \\
\Grounded{} & 0.0200 & 0.0133 & 0.0133 & 0.0135 \\
\EXPOnly{} & 0.0170 & 0.0121 & 0.0140 & 0.0118 \\
\ProxyEXP{} & 0.0330 & 0.0184 & 0.0365 & 0.0124 \\
\midrule
Best OBS-based method & 0.0128 & 0.0128 & 0.0120 & 0.0120 \\
\EXPOnly{} $-$ best OBS-based & $+0.0042$ & $-0.0007$ & $+0.0020$ & $-0.0002$ \\
\bottomrule
\end{tabular}
\caption{Recommendation regret (lower is better; mean over 30 seeds) under the smooth aggregate reward at different OBS and EXP budgets. The two summary rows below the divider are derived diagnostics, not estimator entries: ``Best OBS-based method'' is the per-cell minimum across $\CVCI$, $\OBSOnly$, $\Grounded$, and $\ProxyEXP$; ``$\EXPOnly{}$ $-$ best OBS-based'' is positive when OBS-assisted families help and negative when $\EXPOnly{}$ alone is competitive at that budget.}
\label{tab:summarization_budget_perf}
\end{table}

\subsubsection{Summarization: supervision scarcity and reward shape}

In the cached summarization benchmark, $\CVCI{}$ has the best average rank, top-3 coverage, and excess regret across 30 seeds and 48 settings (Table~\ref{tab:front_runner_summary}). The gap to the next family (about 0.04 in excess regret to $\OBSOnly{}$) is small relative to the spread of cell-level standard errors reported in Appendix~\ref{app:real_dr_diagnostics}, so this result reads as family-level evidence within this benchmark instantiation. The remaining tables explain that pattern. Table~\ref{tab:summarization_budget_perf} shows that increasing \(n_{\expd}\) lowers regret for the EXP-based estimators much more than increasing \(n_{\obs}\) under the smooth reward. That pattern matches the estimator view in Table~\ref{tab:theory_evidence_bridge}: once EXP and SIM identify the target, additional randomized labels can improve EXP-side reward fits. Table~\ref{tab:summarization_geometry_perf} then changes the target reward while holding the cached outputs fixed. The regret increase and the drop in rubric-linear fit serve as an empirical diagnostic of target alignment rather than a formal oracle statement.

In Table~\ref{tab:summarization_geometry_perf}, the two held-out \(R^2\) values are out-of-sample linear fits of the cached target surface using the auxiliary proxy representation and the raw rubric vector \(s(x,a)\), respectively. Table~\ref{tab:real_dr_summary} reports model-level value RMSE for DM and DR. In this benchmark, DR mainly helps OBS-heavy methods; DM remains stronger for the other families. Appendix~\ref{app:real_dr_diagnostics} gives the budget-specific breakdown and implementation details.

\begin{table}[tb]
\centering
\small
\setlength{\tabcolsep}{6pt}
\renewcommand{\arraystretch}{1.05}
\begin{tabular}{@{}lccc@{}}
\toprule
\multicolumn{4}{c}{Average regret over the four budget settings} \\
\cmidrule(lr){1-4}
Method & Smooth & Sharpened & \(\Delta_{\mathrm{shape}}\) \\
\midrule
\CVCI{} & 0.0126 & 0.0423 & $+0.0297$ \\
\OBSOnly{} & 0.0125 & 0.0477 & $+0.0352$ \\
\EXPOnly{} & 0.0147 & 0.0376 & $+0.0229$ \\
\ProxyEXP{} & 0.0267 & 0.0773 & $+0.0507$ \\
\midrule
\multicolumn{4}{c}{Held-out target-fit \(R^2\)} \\
\cmidrule(lr){1-4}
Quantity & Smooth & Sharpened & \(\Delta\) \\
\midrule
Proxy-bundle \(R^2\) & 0.161 & 0.171 & $+0.010$ \\
Rubric-linear \(R^2\) & 1.000 & 0.779 & $-0.221$ \\
\bottomrule
\end{tabular}
\caption{Recommendation regret (lower is better) and held-out target-fit $R^2$ under smooth versus sharpened reward maps, with the cached outputs held fixed. Sharpening the reward inflates $\Delta_{\mathrm{shape}}$ for every estimator family and drops the rubric-linear $R^2$ from $1.000$ to $0.779$, while the proxy-bundle $R^2$ stays low ($0.161\to0.171$). Reward shape changes the alignment between the cached target and the rubric-linear or proxy-bundle approximation classes.}
\label{tab:summarization_geometry_perf}
\end{table}

\begin{table}[tb]
\centering
\small
\setlength{\tabcolsep}{8pt}
\begin{tabular}{lccc}
\toprule
Method & DM RMSE & DR RMSE & $\Delta$ (DR$-$DM) \\
\midrule
\OBSOnly{} & 0.0479 & \textbf{0.0393} & $-0.0086$ \\
\CVCI{} & 0.0479 & 0.0540 & $+0.0061$ \\
\ProxyEXP{} & \textbf{0.0201} & 0.0560 & $+0.0359$ \\
\EXPOnly{} & 0.0835 & 0.1607 & $+0.0772$ \\
\bottomrule
\end{tabular}
\caption{Model-level value RMSE for DM and DR (lower is better), averaged at \(n_{\obs}=20{,}000\) and \(n_{\expd}\in\{100,200,800\}\) over 30 seeds. Negative $\Delta$ means DR helps over DM; positive $\Delta$ means DR hurts. DR helps OBS-heavy methods here and hurts EXP-side methods, consistent with finite-EXP residual-term variance dominating at small $n_{\expd}$.}
\label{tab:real_dr_summary}
\end{table}

\paragraph{Benchmark-instantiation reading.}
In this cached summarization benchmark, the raw judged dimensions already give a high held-out \(R^2\) for the target reward (Table~\ref{tab:summarization_geometry_perf}). Table~\ref{tab:summarization_budget_perf} shows two patterns: $\CVCI{}$ already attains the lowest regret at $n_{\expd}=20$ and stays near that level as $n_{\expd}$ grows, while $\EXPOnly{}$ and $\ProxyEXP{}$ improve substantially with more EXP labels and approach the $\CVCI{}$ floor at $n_{\expd}=200$. Additional EXP labels mainly help the EXP-anchored families catch up to the OBS-pooled $\CVCI{}$ floor rather than lower that floor.

\vspace{-9pt}

\subsection{Real-task cached coding benchmark with synthetic OBS/EXP resampling}
\label{sec:coding_public}

\vspace{-5pt}

The cached coding benchmark uses SWE-bench Verified issues \citep{Jimenez2024SWEBench,OpenAI2024SWEBenchVerified} and one cached candidate patch for each issue--agent pair (Table~\ref{tab:benchmark_snapshot}). Candidate patches come from BouncerBench \citep{Mathews2025BouncerBench}, and the agent baselines follow the SWE-agent setup \citep{YangEtAl2024SWEAgent}. The cached patches and program-test outcomes are real, and the OBS and EXP mechanisms are benchmark constructions over the cached pool. Evaluation averages over a fixed set of benchmark user types and reports regret on the finite cached target \(q_{\mathrm{cache}}\).

\begin{figure}[tb]
\centering
\includegraphics[width=0.95\linewidth]{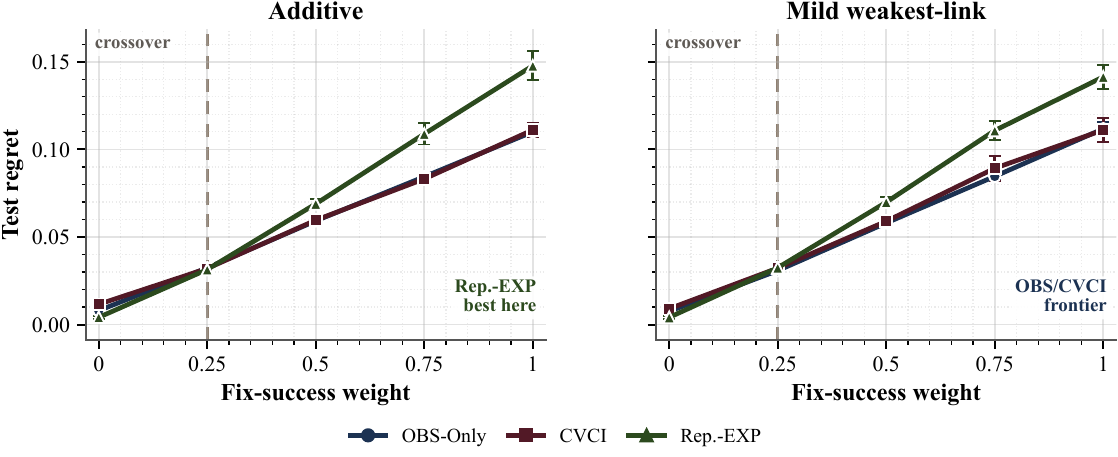}
\caption{Mean recommendation regret in the cached coding benchmark as the fix-success weight $\alpha_{\mathrm{fix}}$ increases (mean over 30 seeds; lower is better). Each panel fixes a value of the weakest-link weight $\omega_{\mathrm{weak}}$. The dashed line marks $\alpha_{\mathrm{fix}}=0.25$, the empirical crossover. At $\alpha_{\mathrm{fix}}<0.25$ the cached target depends primarily on the auxiliary patch-quality components $(c_2,c_3,c_4)$ and the representation-aided EXP family is among the lowest-regret families. For $\alpha_{\mathrm{fix}}>0.25$ the cached target depends primarily on $c_1$ (true fix success) and OBS-based families have lower regret. Top-two gaps are small in many cells (Appendix~\ref{app:coding_geometry}).}
\label{fig:coding_resolved_panels}
\end{figure}

Each cached patch is scored by four components \(c_1,c_2,c_3,c_4\), where \(c_1\) records true fix success and \(c_2,c_3,c_4\) summarize patch size, locality, and issue focus. These patch-quality components can disagree with true fix success, which is why changing the reward definition matters here.

Method rankings change as the reward shifts weight from patch quality to true fix success. The shift is controlled by \(\alpha_{\mathrm{fix}}\), which weights true fix success, and by \(\omega_{\mathrm{weak}}\), which determines whether the non-success components are aggregated additively or by a weakest-link rule. The first, \(\alpha_{\mathrm{fix}}\), controls how much weight the reward places on true fix success. The second, \(\omega_{\mathrm{weak}}\), controls whether the non-success components are aggregated additively or through a weakest-link rule.

For user type \(u\) with weights \(w_u=(w_{u1},w_{u2},w_{u3},w_{u4})\), normalized non-success weights are \(\bar w_{uk}=w_{uk}/(w_{u2}+w_{u3}+w_{u4})\) for \(k\in\{2,3,4\}\), and \(g_u(c_2,c_3,c_4)=(1-\omega_{\mathrm{weak}})\sum_{k=2}^4 \bar w_{uk}c_k+\omega_{\mathrm{weak}}\min\{c_2,c_3,c_4\}\). The benchmark utility for user type \(u\) is \(Y_u(x,a)=w_{u1}\,\alpha_{\mathrm{fix}}\,c_1(x,a)+(1-w_{u1})\,g_u\!\big(c_2(x,a),c_3(x,a),c_4(x,a)\big)\), and the evaluation target is the cached user-average surface \(q_{\mathrm{cache}}(x,a)=\E\!\left[Y_U(x,a)\mid X=x\right]\). \RepStruct{} maps each issue--patch pair into an auxiliary representation learned from observational patch statistics and fits EXP rewards on that representation. Appendix~\ref{app:coding_geometry} gives the public-data construction and the complete \(5\times 5\) results table.

With the data budget fixed, the ranking changes because the reward puts more or less weight on true fix success. The crossover occurs near \(\alpha_{\mathrm{fix}}\approx 0.25\): below that point \RepStruct{} is among the lowest-regret methods, and above it the best OBS-based methods overtake it. On the additive slice, the regret gap between \RepStruct{} and the better of \OBSStruct{} and \CVCIStruct{} moves from \(-0.0039\) at \(\alpha_{\mathrm{fix}}=0\) to \(+0.0003\) at \(\alpha_{\mathrm{fix}}=0.25\) and \(+0.0415\) at \(\alpha_{\mathrm{fix}}=1\). At \(\omega_{\mathrm{weak}}=0.25\), the corresponding values are \(-0.0034\), \(+0.0024\), and \(+0.0375\). Figure~\ref{fig:coding_pairwise_deltas} (left) shows the same crossover on the full \(5\times 5\) grid. This sweep tests whether the auxiliary patch statistics retain the target signal as the target moves from patch quality toward true fix success.

\begin{figure}[tb]
\centering
\includegraphics[width=0.95\linewidth]{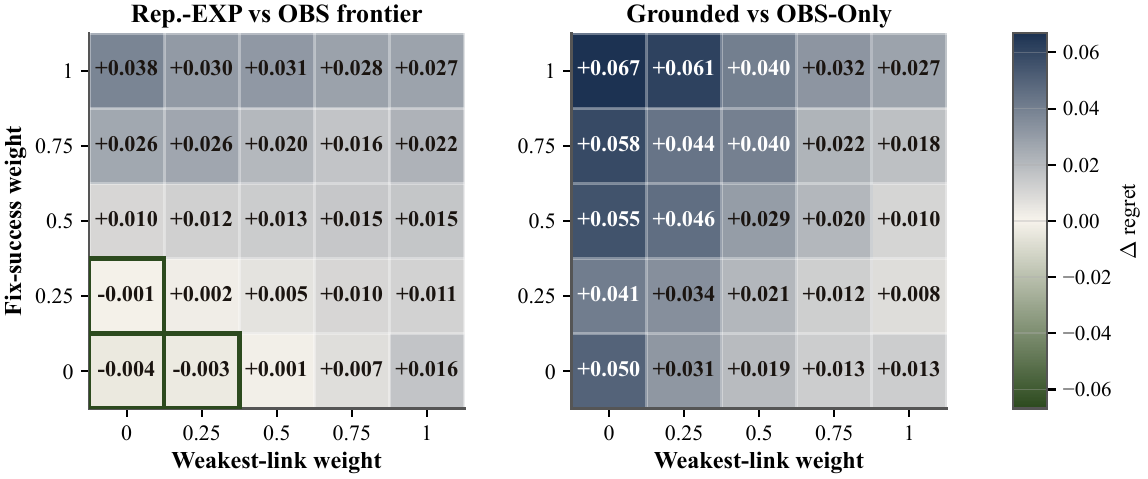}
\caption{Pairwise regret differences $\Delta\,\text{regret}=\text{regret}(\text{family A})-\text{regret}(\text{family B})$ across the $5\times 5$ grid of $(\alpha_{\mathrm{fix}},\omega_{\mathrm{weak}})$ in the cached coding benchmark instantiation; negative values mean family A has lower regret than family B in that cell. Left panel: \RepStruct{} minus the best OBS-based family (the per-cell minimum across $\OBSStruct$, $\CVCIStruct$, $\Grounded$). Right panel: \GroundedStruct{} minus $\OBSStruct$. Outlined cells in the left panel mark settings where \RepStruct{} has strictly lower regret than every OBS-based family; in this benchmark instantiation, weakest-link scoring shrinks the $\GroundedStruct$ vs $\OBSStruct$ gap but does not invert it.}
\label{fig:coding_pairwise_deltas}
\end{figure}

Held-out \(R^2\) comes from linear fits of the cached target using either the auxiliary patch statistics or \(c_1\) alone; Appendix~\ref{app:coding_geometry} gives the exact protocol. On the additive slice, the auxiliary-feature fit drops from \(R^2=0.347\) at \(\alpha_{\mathrm{fix}}=0\) to \(0.081\) at \(\alpha_{\mathrm{fix}}=1\), while the \(c_1\)-only fit rises from essentially zero to \(R^2=0.967\). The $R^2$ shift matches the change in method ranking: \RepStruct{} helps when the target still follows patch-quality features, but it loses that advantage once the reward is driven mainly by fix success.

Changing \(\omega_{\mathrm{weak}}\) changes only how the non-success components are combined. Its largest effect is on \GroundedStruct{}: at zero fix-success weight, the grounded-minus-OBS regret gap shrinks from about \(+0.0495\) under additive scoring to about \(+0.0126\) under the pure weakest-link rule. Figure~\ref{fig:coding_pairwise_deltas} (right) shows the same effect. Even so, \GroundedStruct{} remains worse than the best OBS-based method throughout this fixed-budget setting.

\paragraph{Benchmark-instantiation reading.}
In this fixed-budget coding probe over the BouncerBench cached patch pool, \RepStruct{} is the most competitive family in the regions where the auxiliary representation predicts the held-out cached target better than fix success alone (Figure~\ref{fig:coding_resolved_panels}, low-\(\alpha_{\mathrm{fix}}\) cells). Once \(c_1\) alone predicts the target at least as well, the OBS-based families overtake \RepStruct{} on held-out regret in the same probe (Figure~\ref{fig:coding_pairwise_deltas}). The pattern is a target-alignment diagnostic for this fixed-budget coding probe. Choosing among families in deployment would still require a live randomized trial.

%% file: sections/08_related_work.tex
\section{Related work}
\label{sec:related}

The paper combines three literatures: off-policy evaluation, hybrid estimation from observational and randomized data, and LLM evaluation. Each covers only part of the setting. The setting combines confounded logging on real tasks, replayable text mediators that can be regenerated for any candidate model, and expensive outcome labels that are unbiased only inside a small randomized experiment.

\paragraph{Off-policy evaluation.}
\DMFull{} and \DRFull{} are standard estimators in contextual-bandit and policy-evaluation settings \citep{Dudik2011DoublyRobust,Dudik2014DRStatSci,Swaminathan2015CRM,Jiang2016DROffPolicy,Thomas2016DataEfficient}, and they descend from the foundational doubly robust and missing-data lineage of \citet{RobinsRotnitzkyZhao1994} and \citet{BangRobins2005}. The OPE setting these papers analyze starts from logged action--outcome pairs in which outcomes are assumed unconfounded given context: the central technical issue is action coverage under the logging policy, and the recorded outcomes faithfully represent the causal target. In our setting, even for observed actions, the recorded outcomes are confounded because user-side factors influence both who selects which model and how its output is judged. Because SIM can regenerate outputs for any model on any held-out context, DM and DR enter only after the reward surface itself has been estimated. We use SIM and EXP to identify the causal scoring rule and then use OBS to reduce variance.

\paragraph{Experimental grounding and hybrid estimation.}
The closest methodological antecedents are experimental grounding \citep{Kallus2018ExperimentalGrounding}, data-fusion and transportability formalisms \citep{PearlBareinboim2011Transportability,BareinboimPearl2016DataFusion}, and trial-to-target generalizability \citep{ColeStuart2010}. A related line of work studies how to combine large biased samples with smaller randomized ones \citep{Rosenman2023Shrinkage,Cheng2021Adaptive,Lin2025PowerLikelihood,Yang2025CVCI,Colnet2024Review}. That literature asks when observational signal can be used to reduce variance without giving up the causal credibility of the randomized sample; in those papers, the unit-level outcome is observed on both samples and the modeling work is done on a fixed outcome label. The estimator families in Section~\ref{sec:estimators} are concrete instantiations of the same bias--variance question. Our setting differs in one specific way: the target outcome is not directly observed even on the randomized sample because the unit ``output'' must first be regenerated by SIM and then scored. Identification in our setting depends on SIM validity (Assumption~A2) together with EXP randomization.

\paragraph{Summarization and LLM-based evaluation.}
Our experiments use CNN/DailyMail contexts \citep{Hermann2015CNN,Nallapati2016Sum,See2017Pointer,Stiennon2020SummarizeHF} and rely on LLM-judged rubric scores in the pool-based study. Recent LLM-evaluation work covers benchmark design \citep{Liang2023HELM}, judge reliability and pairwise preference platforms \citep{Zheng2023LLMJudge,Chiang2024ChatbotArena}, and real-world usage logs \citep{Zheng2023LMSYSChat1M,Zhao2024WildChat}. It also documents biases in automatic evaluators, including positional unfairness \citep{WangEtAl2023FairEvaluators}, length bias \citep{Dubois2024LengthControlledAlpacaEval}, and self-preference \citep{PanicksseryBowmanFeng2024}, and studies panel-based mitigations \citep{VergaEtAl2024Juries}. Preference data is central in alignment pipelines, both with explicit learned reward models \citep{Christiano2017HumanPreferences,Stiennon2020SummarizeHF,Ouyang2022InstructGPT} and with implicit reward formulations such as direct preference optimization \citep{Rafailov2023DPO}. The LLM-evaluation work cited above primarily treats judged scores as outcome data and studies how reliable those scores are and how to average them. We use cached outputs and judged scores as inputs to a causal evaluation problem; they become the causal target only after EXP-identified scoring.

A complementary line of work in causal NLP studies how text can serve as treatment, outcome, mediator, or proxy for unobserved confounders \citep{FederEtAl2022CausalNLP}; OBS-derived auxiliary labels in our setting play exactly the proxy role identified there.

Taken together, the setting sits across the three literatures rather than fitting cleanly inside any one of them.

%% file: sections/07_discussion.tex
\section{Discussion and Future Work}
\label{sec:discussion}

Ranking diagnostics can diverge sharply from recommendation regret. In the cached summarization benchmark, \Grounded{} wins every top-1 and top-3 ranking setting but wins regret in only 6 settings. Regret is driven by a small number of costly recommendation mistakes that average ranking metrics do not isolate. The benchmarks therefore center $\mathrm{Regret}_{\mathrm{test}}$, with context-action RMSE and model-level RMSE as secondary diagnostics.

\paragraph{What our cached benchmarks are and are not.}
The two real-task benchmarks use real candidate model outputs and real LLM-judged rubric or program-test scores on those outputs. Their OBS and EXP mechanisms are benchmark constructions over the cached pool. The empirical comparisons therefore diagnose how the estimator families behave under controlled synthetic OBS/EXP resampling on real tasks.

\paragraph{Scope of the empirical benchmarks.}
The empirical scope is deliberately narrow: cached summarization isolates supervision scarcity, and cached coding isolates reward geometry. Family-level wins in these benchmarks are benchmark-specific diagnostics.

\paragraph{Methodological implications.}
Estimator choice depends on what limits performance in the benchmark at hand. In practice, the key checks are whether more EXP labels materially improve EXP-based methods and whether a structured representation predicts the held-out target better than a simple success-only baseline. Deployment decisions should still be confirmed by live randomized evaluation.

\paragraph{Limitations and future work.}
These conclusions are specific to the task families and benchmark constructions studied here. Future work includes broader tasks, repeated mediator draws, naturally observed deployment logs paired with in-deployment randomized trials, and multi-round human--agent interaction, which would require relaxing or replacing Assumption~A4.

%% file: sections/09_conclusion.tex
\section{Conclusion}
\label{sec:conclusion}

Offline evaluation separates into identification and post-identification estimation. In the three-source design, the simulator and the randomized experiment together suffice to identify causal model values; the observational log helps only with estimation afterward, when its signal genuinely matches the causal target.

Across the controlled semi-synthetic validation and the two real-task cached benchmark constructions, different estimator families win in different regimes on held-out recommendation regret. The two benchmark families point to the same practical lesson: which estimator works best depends on both the amount of experimental supervision and on what the reward is actually trying to capture.

The results apply to one-shot cached-evaluation settings, not yet to richer interactive settings with repeated feedback and broader population-level uncertainty. Future work should extend the framework beyond one-shot cached evaluations to richer interactive settings and live deployment with randomized evaluation.

%% file: sections/A_additional_details.tex
\section{Additional details}
\label{sec:appendix}

\subsection{Identification remarks}
\label{app:id_remarks}

\paragraph{SIM resolves generation; EXP resolves scoring.}
SIM regenerates the mediator $M$ under $\doop(A=a)$, while EXP supplies the outcome labels needed to score that mediator.
So SIM handles counterfactual generation and EXP identifies causal scoring.
The randomized sample is what identifies the scoring rule $r^\star(x,m)$ for a realized context--output pair.

\paragraph{Why observational outcomes target a different object.}
In deployment, $A$ depends on the latent variable $U$, and $U$ also affects $Y$ directly.
Conditioning on $(X,M)$ therefore leaves self-selection bias in place, so in general
\[
\E[Y\mid X=x,M=m,\obs]\neq r^\star(x,m).
\]
This is why the additional assumptions in Section~\ref{sec:estimators} are estimator-side assumptions for using OBS, while identification itself comes from SIM and EXP.

\subsection{Estimator details and concrete implementations}
\label{app:estimator_details}

\paragraph{Exact reward-model objectives.}
This subsection records the exact fitting problems and benchmark-specific implementation details deferred from Section~\ref{sec:estimators}.
\EXPOnly{} fits
\[
(\widehat w_{\expd},\widehat b_{\expd})\in\arg\min_{w,b}\sum_{j\in\expd}\big(Y_j-(w^\top\varphi(z_j)+b)\big)^2 + \alpha\|w\|_2^2,
\]
with prediction rule $\widehat r_{\expd}(z)=\clip_{[0,1]}(\widehat w_{\expd}^\top\varphi(z)+\widehat b_{\expd})$.

\ProxyEXP{} fits the same ridge head on the proxy representation:
\[
(\widehat w_{\psi},\widehat b_{\psi})\in\arg\min_{w,b}\sum_{j\in\expd}\big(Y_j-(w^\top\psi(z_j)+b)\big)^2 + \alpha\|w\|_2^2,
\]
with $\widehat r_{\psi}(z)=\clip_{[0,1]}(\widehat w_{\psi}^\top\psi(z)+\widehat b_{\psi})$.

\OBSOnly{} fits the raw-feature ridge model on OBS outcomes,
\[
(\widehat w_{\obs},\widehat b_{\obs})\in\arg\min_{w,b}\sum_{i\in\obs}\big(Y_i-(w^\top\varphi(z_i)+b)\big)^2 + \alpha\|w\|_2^2,
\]
and predicts $f_{\obs}(z)=\clip_{[0,1]}(\widehat w_{\obs}^\top\varphi(z)+\widehat b_{\obs})$.

For \Grounded{}, define EXP residual targets $\Delta_j=f_{\obs}(z_j)-Y_j$ and fit
\[
(\widehat\theta,\widehat c)\in \arg\min_{\theta,c}\sum_{j\in\expd}\big(\Delta_j-(\theta^\top\psi(z_j)+c)\big)^2 + \lambda_\theta\|\theta\|_2^2.
\]
The resulting predictor subtracts the fitted proxy-side correction from the OBS baseline and clips the result to $[0,1]$. In the real benchmarks, the grounded family keeps the same outer form but replaces the single linear proxy correction with a small library of proxy-basis corrections built on the heuristic auxiliary proxy. Appendix~\ref{app:grounded_real_upgrade} compares this richer correction class with a single-linear baseline and with a weak pooled-anchor variant.

\CVCI{} pools the OBS and EXP regression objectives:
\begin{equation}
\begin{aligned}
(\widehat w_{\lambda},\widehat b_{\lambda})\in\arg\min_{w,b}\Bigg[
&(1-\lambda)\,\frac{1}{n_{\expd}}
\sum_{j\in\expd}\big(Y_j-(w^\top\varphi(z_j)+b)\big)^2 \\
&\qquad
+ \lambda\,\frac{1}{n_{\obs}}
\sum_{i\in\obs}\big(Y_i-(w^\top\varphi(z_i)+b)\big)^2
\Bigg]
+ \alpha\|w\|_2^2,
\end{aligned}
\label{eq:cvci_objective}
\end{equation}
with $\widehat r_{\lambda}(z)=\clip_{[0,1]}(\widehat w_{\lambda}^\top\varphi(z)+\widehat b_{\lambda})$.
Here $\lambda=1$ is the OBS-only endpoint and $\lambda=0$ is the EXP-only endpoint.
The final predictor refits \eqref{eq:cvci_objective} on the full OBS and EXP data at the value of $\lambda$ selected by Appendix~\ref{app:agent_cv}.

\CVCIRes{} residualizes around $f_{\obs}$ using the residual target $Y_j-f_{\obs}(z_j)$ and pools only that residual fit in proxy space:
\[
\begin{aligned}
(\widehat\theta_{\lambda},\widehat c_{\lambda})\in\arg\min_{\theta,c}\Bigg[
&(1-\lambda)\,\frac{1}{n_{\expd}}
\sum_{j\in\expd}\big(Y_j-f_{\obs}(z_j)-(\theta^\top\psi(z_j)+c)\big)^2 \\
&\qquad
+ \lambda\,\frac{1}{n_{\obs}}
\sum_{i\in\obs}\big(Y_i-f_{\obs}(z_i)-(\theta^\top\psi(z_i)+c)\big)^2
\Bigg]
+ \alpha_{\psi}\|\theta\|_2^2,
\end{aligned}
\]
with $\widehat r_{\mathrm{res},\lambda}(z)=\clip_{[0,1]}\big(f_{\obs}(z)+\widehat\theta_{\lambda}^\top\psi(z)+\widehat c_{\lambda}\big)$.
In the experiments, both the pooling weight $\lambda$ and the residual ridge penalty $\alpha_{\psi}$ are selected by Appendix~\ref{app:agent_cv}; when residual-CVCI candidates are tied within numerical tolerance, the implementation resolves the tie toward the more regularized candidate.

\paragraph{How each estimator family is implemented.}
Table~\ref{tab:concrete_impls} shows how each estimator family is implemented in the appendix validation and in the two real benchmarks. The family names are the same across settings, but the proxy construction and EXP-side tuning differ by benchmark.

\begin{table}[t]
\centering
\scriptsize
\setlength{\tabcolsep}{4pt}
\caption{How each estimator family is implemented in the appendix validation and the real benchmarks. Family names are shared across settings, but the proxy channel and EXP-side tuning differ by benchmark.}
\label{tab:concrete_impls}
\begin{tabularx}{\textwidth}{@{}p{1.60cm}>{\hsize=0.78\hsize\linewidth=\hsize}X>{\hsize=1.22\hsize\linewidth=\hsize}X p{2.40cm}@{}}
\toprule
Paper family & Semi-synthetic instantiation & Real-benchmark instantiation & EXP-side tuning \\
\midrule
\EXPOnly{} & Raw-feature ridge fit on EXP only & Same family on cached real outputs & Fixed defaults \\
\OBSOnly{} & Raw-feature ridge fit on OBS outcomes only & Same family on cached real outputs & None \\
\ProxyEXP{} & EXP-only reward head on a proxy learned from OBS auxiliary labels & EXP-only reward head on a heuristic proxy learned from cached trajectory metadata & Fixed defaults \\
\Grounded{} & OBS baseline plus low-dimensional proxy correction & OBS baseline plus richer proxy-basis correction on the heuristic auxiliary proxy & Small EXP-only tuning \\
\CVCI{} & Direct pooled OBS/EXP fit in raw text features & Same family with model-level EXP cross-validation for the pooling weight & Benchmark-specific EXP tuning \\
\CVCIRes{} & OBS baseline plus pooled residual fit in proxy space & Same family with the heuristic auxiliary proxy and model-level EXP cross-validation & Benchmark-specific EXP tuning \\
\bottomrule
\end{tabularx}
\end{table}

Throughout Section~\ref{sec:real}, \Grounded{} refers to the rich proxy-basis version in Table~\ref{tab:concrete_impls}. Appendix~\ref{app:grounded_real_upgrade} compares it with the single-linear baseline and the pooled-anchor variant.

\subsection{Grounded-family implementations for the real cached benchmark}
\label{app:grounded_real_upgrade}

\paragraph{A common grounded template.}
All real-benchmark grounded variants start from the same decomposition: fit an OBS baseline $f_{\obs}$ from logged outcomes, then use the heuristic auxiliary proxy to estimate a correction that is subtracted from that baseline.
The variants differ along two design axes:
\begin{enumerate}[leftmargin=*]
\item the correction class used on the proxy representation, and
\item whether OBS enters the second stage only through $f_{\obs}$ or also through a small pooled anchor after the correction direction has been fixed.
\end{enumerate}
The grounded-family comparison instead organizes three principled instantiations of the same baseline-minus-correction idea.

\paragraph{Variant 1: single-linear grounded correction.}
This is the most straightforward grounded instantiation.
It uses the heuristic auxiliary proxy, keeps a single linear correction family, and tunes the correction strength on EXP.
For the auxiliary real summarization comparison over \(\beta\in\{0,0.2,0.5,0.8,0.9,0.99\}\), \(n_{\obs}\in\{2{,}000,20{,}000\}\), and \(n_{\expd}\in\{20,100\}\), we report the version of this linear class tuned by model-level EXP cross-validation, which serves as the matched linear baseline for the richer grounded variants.
That single-linear class uses the predictor
\[
\widehat r_{\mathrm{lin},\alpha}(z)=\clip_{[0,1]}\!\Big(f_{\obs}(z)-\alpha\big(\widehat\theta^\top\psi_{\mathrm{aux}}(z)+\widehat c\big)\Big),
\]
where $\psi_{\mathrm{aux}}$ is the learned heuristic auxiliary proxy and the correction is linear in that proxy.

\paragraph{Variant 2: \Grounded{} as a rich but still small proxy-side correction class.}
The main-text \Grounded{} instantiation keeps the same baseline-minus-correction outer form, but replaces the single linear correction by a small library of proxy-basis corrections.
Write $\tilde\psi_{\mathrm{aux}}(z)\in\R^{16}$ for the standardized 16-dimensional SVD compression of the learned heuristic auxiliary proxy.
For
\[
B\in\{B_{\mathrm{id}},B_{\mathrm{poly}}\},
\qquad
B_{\mathrm{id}}(u)=u,
\qquad
B_{\mathrm{poly}}(u)=\big[u,\ u\odot u\big],
\]
and ridge level $\tau\in\{10^{-2},1,100\}$, the rich grounded direction fits
\[
(\widehat\theta_{B,\tau},\widehat c_{B,\tau})
\in
\arg\min_{\theta,c}
\sum_{j\in\expd}
\big(\Delta_j-(\theta^\top B(\tilde\psi_{\mathrm{aux}}(z_j))+c)\big)^2
\;+\;
\tau\|\theta\|_2^2,
\qquad
\Delta_j=f_{\obs}(z_j)-Y_j,
\]
and predicts
\[
\widehat r_{\mathrm{rich},B,\alpha}(z)
=
\clip_{[0,1]}\!\Big(f_{\obs}(z)-\alpha\big(\widehat\theta_{B,\tau}^\top B(\tilde\psi_{\mathrm{aux}}(z))+\widehat c_{B,\tau}\big)\Big).
\]
The final implementation selects $(B,\tau,\alpha)$ by agent-CV on EXP.
Relative to the linear grounded class, this replaces the correction family
\[
\mathcal H_{\mathrm{lin}}
=
\Big\{u\mapsto \theta^\top u+c\Big\}
\]
by
\[
\mathcal H_{\mathrm{rich}}
=
\Big\{u\mapsto \theta^\top B(u)+c:\ B\in\{B_{\mathrm{id}},B_{\mathrm{poly}}\}\Big\},
\]
so $\mathcal H_{\mathrm{lin}}\subseteq \mathcal H_{\mathrm{rich}}$.
The corresponding grounded predictor class therefore expands the proxy-side approximation space while still learning the correction direction from EXP only.

\paragraph{Variant 3: pooled-anchor grounded.}
The appendix pooled-anchor variant keeps the rich EXP-only correction direction but adds a weak pooled anchor in the second stage.
Let
\[
\widehat c_{B,\tau}(z)=\widehat\theta_{B,\tau}^\top B(\tilde\psi_{\mathrm{aux}}(z))+\widehat c_{B,\tau}
\]
denote the grounded correction direction learned from EXP.
The pooled-anchor predictor is
\[
\widehat r_{\mathrm{anchor},\lambda}(z)
=
\clip_{[0,1]}\!\Big(\widehat b_{\lambda}\,f_{\obs}(z)-\widehat\alpha_{\lambda}\,\widehat c_{B,\tau}(z)\Big),
\]
where $(\widehat b_{\lambda},\widehat\alpha_{\lambda})$ are obtained from the anchored objective
\[
\begin{aligned}
(\widehat b_{\lambda},\widehat\alpha_{\lambda})\in\arg\min_{b,\alpha}\Bigg[
&\lambda\,\frac{1}{n_{\obs}}\sum_{i\in\obs}
\big(Y_i-(b f_{\obs}^{\mathrm{cf}}(z_i)-\alpha \widehat c_{B,\tau}(z_i))\big)^2 \\
&\qquad
+ (1-\lambda)\,\frac{1}{n_{\expd}}\sum_{j\in\expd}
\big(Y_j-(b f_{\obs}(z_j)-\alpha \widehat c_{B,\tau}(z_j))\big)^2
\Bigg] \\
&\qquad
+ \rho_b(b-1)^2+\rho_{\alpha}(\alpha-1)^2.
\end{aligned}
\]
Here $f_{\obs}^{\mathrm{cf}}$ is a cross-fitted OBS baseline prediction.
This variant pools only two calibration coefficients after EXP has already fixed the correction direction.

\begin{table}[t]
\centering
\scriptsize
\setlength{\tabcolsep}{3pt}
\renewcommand{\arraystretch}{1.08}
\caption{
Grounded-family implementations for the real cached benchmark.
The table compares the single-linear grounded baseline, the richer proxy-basis grounded instantiation used in the main text, and a weak pooled-anchor variant built on the same rich correction direction.
`Regret wins' counts wins among the seven methods obtained by adding the pooled-anchor variant to the six main-text methods.
}
\label{tab:grounded_upgrade_compare}
\begin{tabularx}{\textwidth}{@{}>{\raggedright\arraybackslash}p{2.25cm}>{\raggedright\arraybackslash}X>{\raggedright\arraybackslash}p{2.05cm}>{\raggedright\arraybackslash}p{2.15cm}S[table-format=1.0]S[table-format=1.4]S[table-format=1.3]S[table-format=1.3]@{}}
\toprule
Variant & Correction family & \shortstack[l]{Second-stage\\OBS use} & Tuning & \multicolumn{1}{c}{\shortstack{Regret\\wins}} & \multicolumn{1}{c}{\shortstack{Macro\\regret}} & \multicolumn{1}{c}{\shortstack{Mean\\top-1}} & \multicolumn{1}{c}{\shortstack{Mean\\top-3}} \\
\midrule
Single-linear correction baseline & Single linear correction on the heuristic auxiliary proxy & None beyond the baseline $f_{\obs}$ & Model-level EXP cross-validation for $\alpha_{\mathrm{corr}}$ & 1 & 0.0357 & 0.628 & 0.646 \\
\Grounded{} rich proxy-basis correction & Rich proxy-basis correction on standardized 16-dimensional heuristic auxiliary features with $B\in\{\mathrm{id},\mathrm{poly2}\}$ and $\tau\in\{10^{-2},1,100\}$ & None beyond the baseline $f_{\obs}$ & Model-level EXP cross-validation over $(B,\tau,\alpha_{\mathrm{corr}})$ & 6 & 0.0318 & 0.736 & 0.736 \\
Rich correction with weak pooled anchor & Same rich EXP-only correction direction as \Grounded{}, followed by anchored coefficient pooling & OBS enters only through the pooled coefficients $(b,\alpha)$ & Model-level EXP cross-validation over $(B,\tau,\lambda)$ with closed-form anchored $(b,\alpha)$ & 4 & 0.0326 & 0.744 & 0.744 \\
\bottomrule
\end{tabularx}
\end{table}

\paragraph{Why these implementations should behave differently.}
The three variants place their modeling flexibility in different parts of the decomposition.
The linear grounded baseline puts the entire second stage inside the smallest class $\mathcal H_{\mathrm{lin}}$.
So it is the most conservative option, but it can underfit whenever the discrepancy between $f_{\obs}$ and the causal target is only approximately linear in the proxy.
Because \Grounded{} replaces $\mathcal H_{\mathrm{lin}}$ by the strictly larger class $\mathcal H_{\mathrm{rich}}\supseteq \mathcal H_{\mathrm{lin}}$ while still estimating the correction direction from EXP only, it should help precisely when the residual mismatch is slightly nonlinear in the proxy yet still low-dimensional.
The pooled-anchor variant changes a different part of the problem.
It keeps the correction direction fixed at the grounded solution and lets OBS influence only the coarse scaling coefficients $(b,\alpha)$.
It should help when the main issue is the relative calibration of the baseline and the correction, especially once EXP is large enough to identify a stable correction direction.

\paragraph{Results on the \((\beta,n_{\obs},n_{\expd})\) comparison.}
Relative to the matched single-linear baseline, \Grounded{} lowers macro regret from $0.0357$ to $0.0318$, increases regret wins from $1$ to $6$, and has lower mean regret in $15$ of the $24$ \((\beta,n_{\obs},n_{\expd})\) settings. The gains are concentrated in \((2000,20)\) and \((20000,100)\), where \Grounded{} wins all six beta values, and in \((20000,20)\), where it wins three of six.

The pooled-anchor variant behaves differently. It beats \Grounded{} in $11$ of the $24$ settings and has the strongest average ranking metrics in this three-way comparison, but its macro regret is $0.0326$ and it wins regret in only four settings, all with $n_{\expd}=100$.
Those wins occur only when $n_{\expd}=100$:
\[
(\beta,n_{\obs},n_{\expd})\in\{(0,2000,100),\ (0.5,2000,100),\ (0.8,20000,100),\ (0.9,20000,100)\}.
\]
The selected pooling weight is usually small: across $720$ fitted models, $\lambda_\star\le 0.03$ in $565$ fits and $\lambda_\star=0$ in $264$ fits. In this comparison, the pooled anchor changes calibration more than it changes the regret winner.

\subsection{Value estimators and empirical benchmark metrics}
\label{app:value_estimators}

The reward models in Section~\ref{sec:estimators} estimate a conditional outcome surface.
To obtain generative-model-level causal values, we combine those reward predictions with SIM.
These DM / DR aggregators are used for the semi-synthetic generative-model values.
They sit alongside the direct held-out-grid metrics in the real cached benchmark.

Let $\{x_i\}_{i=1}^{n_{\mathrm{eval}}}$ be evaluation contexts and let $m_{i,a,b}\sim p_{\simu}(\cdot\mid x_i,a)$ denote SIM-generated mediators.
The default SIM plug-in estimator is
\[
\widehat\mu^{\mathrm{DM}}(a)=\frac{1}{n_{\mathrm{eval}}}\sum_{i=1}^{n_{\mathrm{eval}}}\frac{1}{B}\sum_{b=1}^B \widehat r(x_i,m_{i,a,b}).
\]
This simply averages predicted rewards over SIM-generated outputs.

A doubly robust variant augments the plug-in estimator with a residual term computed from EXP.
Because EXP randomizes generative-model choice uniformly, $p_{\expd}(a)=1/|\mathcal{A}_{\expd}|$.
Define
\[
\widehat q(x,a)=\frac{1}{B_{\mathrm{DR}}}\sum_{b=1}^{B_{\mathrm{DR}}}\widehat r\big(x,m_{a,b}\big),
\qquad m_{a,b}\sim p_{\simu}(\cdot\mid x,a),
\]
as the Monte Carlo approximation to the SIM-integrated reward.
The estimator is
\[
\widehat\mu^{\mathrm{DR}}(a)=
\frac{1}{n_{\expd}}\sum_{j=1}^{n_{\expd}}\widehat q(X_j,a)
+
\frac{1}{n_{\expd}}\sum_{j=1}^{n_{\expd}}
\frac{\Ind\{A_j=a\}}{p_{\expd}(a)}\big(Y_j-\widehat r(X_j,M_j)\big).
\]
The first term is the SIM-based plug-in estimate.
The second term corrects it using the randomized residual observed in EXP.
For reward models trained on EXP, we use cross-fitting: EXP is partitioned into $K_{\mathrm{cf}}$ folds, and the reward model used in the residual term is trained on the remaining $K_{\mathrm{cf}}-1$ folds.

\paragraph{Role of DM / DR versus direct cached-benchmark metrics.}
In the appendix semi-synthetic validation, we center held-out recommendation regret computed against the generator's true interventional rewards.
The same runs also produce model-level value diagnostics such as the agent-level RMSE of $\widehat\mu^{\mathrm{DM}}(a)$ or $\widehat\mu^{\mathrm{DR}}(a)$, which remain useful for diagnosing reward-surface fit.
In the real cached benchmark, because the full held-out cached test grid is available, we additionally evaluate the fitted reward surface directly on that grid.
Let $\mathcal{X}_{\mathrm{test}}$ be the held-out test contexts and $\mathcal{G}_{\mathrm{test}}=\mathcal{X}_{\mathrm{test}}\times\mathcal{A}$ the held-out cached grid.
For each $(x,a)\in\mathcal{G}_{\mathrm{test}}$, let $q_{\mathrm{cache}}(x,a)$ be the judged scalar reward of the single cached held-out trajectory for context $x$ and agent $a$, and let $\widehat q(x,a)$ be the fitted reward model evaluated on that cached trajectory.
Define the benchmark-level agent aggregates
\[
\bar q_{\mathrm{cache}}(a)=\frac{1}{|\mathcal{X}_{\mathrm{test}}|}\sum_{x\in\mathcal{X}_{\mathrm{test}}} q_{\mathrm{cache}}(x,a),
\qquad
\bar q_{\mathrm{pred}}(a)=\frac{1}{|\mathcal{X}_{\mathrm{test}}|}\sum_{x\in\mathcal{X}_{\mathrm{test}}} \widehat q(x,a).
\]
The real-benchmark empirical metrics are then
\[
\mathrm{RMSE}_{xa}
=
\left(
\frac{1}{|\mathcal{G}_{\mathrm{test}}|}
\sum_{(x,a)\in\mathcal{G}_{\mathrm{test}}}
\big(\widehat q(x,a)-q_{\mathrm{cache}}(x,a)\big)^2
\right)^{1/2},
\]
\[
\mathrm{RMSE}_{\mathrm{agent}}
=
\left(
\frac{1}{|\mathcal{A}|}
\sum_{a\in\mathcal{A}}
\big(\bar q_{\mathrm{pred}}(a)-\bar q_{\mathrm{cache}}(a)\big)^2
\right)^{1/2},
\qquad
\widehat\pi(x)=\arg\max_{a\in\mathcal{A}}\widehat q(x,a),
\]
\[
\mathrm{Regret}_{\mathrm{test}}
=
\frac{1}{|\mathcal{X}_{\mathrm{test}}|}
\sum_{x\in\mathcal{X}_{\mathrm{test}}}
\Big[\max_{a\in\mathcal{A}} q_{\mathrm{cache}}(x,a)-q_{\mathrm{cache}}(x,\widehat\pi(x))\Big].
\]
These real-benchmark metrics summarize the finite held-out cached grid.
The natural next extension is repeated cached replays / repeated judged trajectories per $(x,a)$ so that the benchmark can estimate latent $q(x,a)$ more directly.

\subsection{Empirical role of DM / DR on the controlled benchmark}
\label{app:dr_diagnostics}

The same semi-synthetic runs also let us compare DM and DR on model-level value estimation.
These diagnostics answer a narrower question than the regret tables: when does randomized residual correction improve the SIM plug-in value estimate?

\begin{center}
\captionsetup{hypcap=false}
\scriptsize
\setlength{\tabcolsep}{5pt}
\renewcommand{\arraystretch}{1.1}
\begin{tabular}{lccc}
\toprule
Method & DM RMSE & DR RMSE & $\Delta$ (DR$-$DM) \\
\midrule
\EXPOnly{} & 0.0303 & 0.0432 & +0.0128 \\
\OBSOnly{} & 0.0268 & 0.0409 & +0.0141 \\
\CVCI{} & 0.0212 & 0.0399 & +0.0187 \\
\ProxyEXP{} & 0.0263 & 0.0424 & +0.0161 \\
\bottomrule
\end{tabular}
\captionof{table}{Average agent-level value RMSE across the synthetic \((\beta,n_{\obs},n_{\expd})\) grid with \(\beta\in\{0,0.2,0.5,0.8,0.9,0.99\}\), \(n_{\obs}\in\{2{,}000,20{,}000\}\), and \(n_{\expd}\in\{20,100\}\). Positive $\Delta$ means that replacing DM by DR increases error.}
\label{tab:dr_negative_control}
\end{center}

Table~\ref{tab:dr_negative_control} shows that, on this synthetic \((\beta,n_{\obs},n_{\expd})\) grid, the SIM plug-in estimator has lower average agent-level value RMSE than DR for every representative method shown. Across these \(24\) settings, DR does not improve model-level value estimation on average.

\begin{center}
\captionsetup{hypcap=false}
\includegraphics[width=0.7\linewidth]{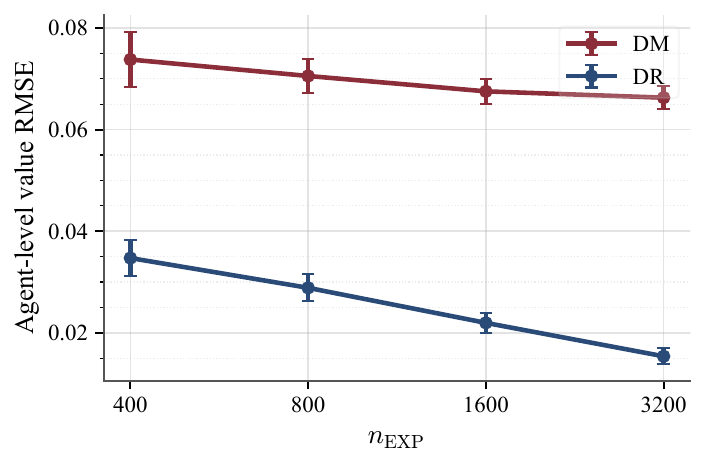}
\captionof{figure}{Targeted DM/DR diagnostic under strong self-selection, intentional reward-model underfit, and increasing EXP budget with fixed OBS budget. In this regime, DR materially reduces agent-level value RMSE.}
\label{fig:dr_targeted_curve}
\end{center}

Figure~\ref{fig:dr_targeted_curve} shows a different case, with stronger self-selection, intentional reward-model underfit, lower outcome noise, and increasing EXP budget at fixed OBS budget. In that setting, DR lowers agent-level RMSE from \(0.0738\) to \(0.0347\) at \(n_{\expd}=400\), and from \(0.0663\) to \(0.0154\) at \(n_{\expd}=3200\). This is a narrower result about model-level value estimation; it does not change the main regret comparisons in the real summarization and coding benchmarks.

\subsection{Empirical role of DM / DR on the real summarization benchmark}
\label{app:real_dr_diagnostics}

The real summarization benchmark also allows a DM/DR comparison for model-level value estimation. Here the target is the benchmark-level agent average over the held-out cached test contexts, and DR uses the known uniform EXP propensity together with cross-fitting for methods trained on EXP. This is a model-level value check built on top of the fitted reward model, not a replacement for the regret comparisons in the main text.

\begin{center}
\captionsetup{hypcap=false}
\scriptsize
\setlength{\tabcolsep}{5pt}
\renewcommand{\arraystretch}{1.1}
\begin{tabular}{llccc}
\toprule
Method & $n_{\expd}$ & DM RMSE & DR RMSE & $\Delta$ (DR$-$DM) \\
\midrule
\OBSOnly{} & 100 & 0.0479 & 0.0459 & -0.0021 \\
\OBSOnly{} & 200 & 0.0479 & 0.0410 & -0.0070 \\
\OBSOnly{} & 800 & 0.0479 & 0.0310 & -0.0169 \\
\midrule
\CVCI{} & 100 & 0.0479 & 0.0724 & +0.0245 \\
\CVCI{} & 200 & 0.0479 & 0.0572 & +0.0093 \\
\CVCI{} & 800 & 0.0479 & 0.0325 & -0.0154 \\
\midrule
\ProxyEXP{} & 100 & 0.0226 & 0.0749 & +0.0522 \\
\ProxyEXP{} & 200 & 0.0187 & 0.0634 & +0.0447 \\
\ProxyEXP{} & 800 & 0.0190 & 0.0298 & +0.0108 \\
\midrule
\EXPOnly{} & 100 & 0.1092 & 0.2495 & +0.1403 \\
\EXPOnly{} & 200 & 0.0809 & 0.1528 & +0.0718 \\
\EXPOnly{} & 800 & 0.0604 & 0.0798 & +0.0194 \\
\bottomrule
\end{tabular}
\captionof{table}{Budget-specific DM/DR agent-level value RMSE on the real summarization benchmark at \(n_{\obs}=20{,}000\). Negative $\Delta$ means that replacing DM by DR reduces error.}
\label{tab:real_dr_breakdown}
\end{center}

Table~\ref{tab:real_dr_breakdown} makes the heterogeneity explicit.
\OBSOnly{} improves in \(23\) of the \(30\) benchmark cells and becomes more favorable as \(n_{\expd}\) grows.
\CVCI{} changes sign only at \(n_{\expd}=800\), where the residual term is large enough to help more than it hurts.
The EXP-side methods behave differently: \ProxyEXP{} worsens in \(28\) of \(30\) cells, and \EXPOnly{} worsens in \(27\) of \(30\).

The natural interpretation is that DR is correcting the remaining OBS-side bias in model-level values rather than uniformly improving all fitted reward models.
On this benchmark, that makes it most useful for the OBS-heavy baseline, conditionally useful for \CVCI{} once the EXP budget is larger, and generally unhelpful for the EXP-side methods whose remaining error is not well summarized by a simple EXP residual correction.

\subsection{Exact real benchmark protocol}
\label{app:real_setting}

\paragraph{Dataset and candidate generative models.}
Contexts $X$ are CNN/DailyMail articles \citep{Hermann2015CNN,See2017Pointer}.
The candidate set contains 20 summarization systems: 16 prompt-based abstractive systems built from two Gemma 3 instruction models \citep{Gemma3TechnicalReport}, together with four extractive baselines.
For each context--generative-model pair, the cache stores one summary together with its judged metadata.

\paragraph{Judged rewards and proxy features.}
For each cached output, the judge returns rubric-style signals that are combined into the scalar reward used throughout the real benchmark comparisons.
The real benchmark instead uses a heuristic auxiliary-feature channel derived from cached trajectories and judged-output metadata.
That channel is best read as an OBS proxy, distinct from the hidden representation $\phi^\star$ used in the semi-synthetic study.

\paragraph{Fixed benchmark protocol.}
The real-data study fixes a 48-context subset of relatively difficult CNN/DailyMail articles, selected once by an article-level difficulty score.
Within this fixed pool, each seed performs an $80/20$ context-level train/test split, uses no separate context-level validation split, and tunes hyperparameters only on a held-out fraction of EXP when the estimator requires it.
OBS and EXP are sampled only from the training contexts.
The observational-choice protocol is a fixed paired-sampling design under a self-selection mechanism that routes each decision through a five-system candidate slate before the final model choice.

\paragraph{Budget views and aggregation.}
The real summarization study uses three budget views.
The main-text regret comparisons use
\[
\beta \in \{0, 0.2, 0.5, 0.8, 0.9, 0.99\},\qquad
n_{\obs}\in\{2{,}000,20{,}000\},\qquad
n_{\expd}\in\{20,200\},
\]
with matched smooth and sharpened reward maps, giving the Section~\ref{sec:real} summaries over six \(\beta\) values, two OBS budgets, two EXP budgets, and two reward maps. The DM/DR value diagnostics in Table~\ref{tab:real_dr_breakdown} instead fix \(n_{\obs}=20{,}000\) and use \(n_{\expd}\in\{100,200,800\}\). The winner-map table below is a separate smooth-reward \((\beta,n_{\obs},n_{\expd})\) comparison with
\[
\beta \in \{0, 0.2, 0.5, 0.8, 0.9, 0.99\},\qquad
n_{\obs}\in\{2{,}000,20{,}000\},\qquad
n_{\expd}\in\{20,100\},
\]
reported to show the full \(24\)-cell winner map. All reported means and standard errors are recomputed from seed-level paired runs, with 30 seeds in every reported $(\beta,n_{\obs},n_{\expd},\text{method})$ cell.

\subsection{Tuning on EXP: model-level cross-validation, EXP holdout, and defaults}
\label{app:agent_cv}

We use three EXP-side tuning strategies.
Model-level EXP cross-validation partitions the generative models \emph{appearing in the sampled EXP data for that seed/cell} and validates at the held-out model level over that sampled EXP model set.
EXP holdout reserves a validation subset from the sampled $n_{\expd}$ budget itself and tunes on that holdout.
Fixed defaults use dataset-configured settings with no extra EXP hyperparameter sweep.

More precisely, $n_{\expd}$ is the total EXP budget for a given seed/cell.
If a method uses an explicit EXP holdout, that holdout is carved from the sampled EXP budget itself; no extra EXP rows are sampled outside budget, and no separate context-level validation split is required for this.

For \Grounded{}-, \CVCI{}-, and \CVCIRes{}-style families tuned by model-level EXP cross-validation, let $\mathcal{A}_{\expd}$ be the set of generative models appearing in the sampled EXP data and partition it into $K_{\mathrm{cv}}$ folds $\{\mathcal{A}^{(k)}\}_{k=1}^{K_{\mathrm{cv}}}$.
The corresponding validation indices are
\[
\mathcal{I}^{(k)}_{\mathrm{val}}=\{j\in\expd: A_j\in\mathcal{A}^{(k)}\},\qquad \mathcal{I}^{(k)}_{\mathrm{tr}}=\expd\setminus\mathcal{I}^{(k)}_{\mathrm{val}}.
\]
For a candidate tuning parameter $\eta$ (for example, $\eta=\alpha_{\mathrm{corr}}$ for \Grounded{} or $\eta=\lambda$ for \CVCI{}), fit the corresponding estimator using all OBS data together with the EXP indices in $\mathcal{I}^{(k)}_{\mathrm{tr}}$.
Then evaluate held-out generative-model means with
\begin{equation}
\mathcal{L}^{(k)}(\eta)
=
\sum_{a\in\mathcal{A}^{(k)}} \omega_{k,a}
\Big(\overline{\widehat r}_{\eta}^{(k)}(a)-\overline Y^{(k)}(a)\Big)^2,
\label{eq:cvci_agent_loss}
\end{equation}
where $\overline{\widehat r}_{\eta}^{(k)}(a)=\tfrac{1}{n_{k,a}}\sum_{j\in\mathcal{I}^{(k)}_{\mathrm{val}}:\,A_j=a} \widehat r_{\eta}^{(-k)}(z_j)$, $\overline Y^{(k)}(a)=\tfrac{1}{n_{k,a}}\sum_{j\in\mathcal{I}^{(k)}_{\mathrm{val}}:\,A_j=a} Y_j$, and $n_{k,a}$ is the number of held-out EXP examples for generative model $a$ in fold $k$.
We use count weights $\omega_{k,a}\propto n_{k,a}$, normalized so that $\sum_{a\in\mathcal{A}^{(k)}}\omega_{k,a}=1$, and select the parameter that minimizes the average fold loss.
If too few generative models appear in the sampled EXP data to sustain the requested number of model-level folds, the procedure falls back to sample-level cross-validation and records the effective mode.

In the real main sweep, \Grounded{}, \CVCI{}, and \CVCIRes{} use model-level EXP cross-validation.
The simpler historical linear grounded baseline instead uses an EXP holdout through correction-$\alpha$ tuning on the held-out EXP subset.
\EXPOnly{}, \OBSOnly{}, and \ProxyEXP{} use fixed defaults.

When candidate hyperparameters are tied within numerical tolerance, the implementation resolves ties toward the more regularized / less aggressive solution.
For residual \CVCIRes{} this means preferring larger residual ridge penalties and, if still tied, larger OBS pooling weights.
For holdout-tuned grounded corrections, ties in the correction-$\alpha$ grid are resolved toward smaller shrinkage factors.

Because the \texttt{news\_hard top-48} real benchmark contains only 48 contexts, each $80/20$ context split leaves roughly 10 held-out test contexts per seed.
Single-seed regret is therefore noisy, and all real-benchmark conclusions are aggregated across many seeds.

\subsection{Semi-synthetic controlled validation}
\label{app:synth_validation}
\label{app:synth_details}

The semi-synthetic validation keeps the summarization task family but replaces judged rewards by a known latent generator, so held-out regret can be evaluated directly against the true interventional surface. Each context--mediator pair is mapped to an unobserved representation
\[
\phi^\star(x,m)=H\varphi(x,m)\in\R^{d_\phi},
\]
where \(H\) is a fixed signed-hash projection of sparse hashed text features \(\varphi(X,M)\in\R^{2^{18}}\). The benchmark-specific outcome law is
\[
P_Y(\,\cdot\mid x,m,u)
=
\mathcal L\!\left(
\operatorname{sigmoid}\!\big((w^\star)^\top \phi^\star(x,m)+\lambda^\top u+\varepsilon\big)
\right),
\qquad \varepsilon\sim \mathcal N(0,\sigma_Y^2),
\]
so
\[
r^\star(x,m)
=
\E[Y\mid X=x,M=m,\expd]
=
\E_{U\sim P(\cdot\mid x),\,\varepsilon}\!\Big[
\operatorname{sigmoid}\!\big((w^\star)^\top \phi^\star(x,m)+\lambda^\top U+\varepsilon\big)
\Big].
\]
OBS and EXP share this same \(P_Y\), so the only confounding channel is model choice through \(p_{\obs}^{\beta}(a\mid x,u)\), while EXP randomizes uniformly over the candidate agent set. This appendix validation varies only \(\beta\), \(n_{\obs}\), and \(n_{\expd}\), so changes in the regret winner can be attributed to confounding strength and data budget.

\paragraph{Implementation details.}
The semi-synthetic generator uses sparse hashed text features $\varphi(X,M)\in\R^{2^{18}}$ for all reward models.
The hidden representation has dimension $d_\phi=8$ and is constructed by a seeded signed-hash projection; it is fixed across replicates and is not observed by any estimator.
When hybrid estimators use the proxy map $\psi(z)$ from Section~\ref{sec:estimators}, the PCA dimension is $d_\psi=20$.
The context-only embedding $\psi_X(X)$ that appears in the observational choice model is a separate object.
It is fit on raw article text alone, with no context--output pairs.
Operationally, we sample 5{,}000 contexts from the observational split, fit a TF--IDF vectorizer with English stop-word removal, unigram--bigram features, and a vocabulary cap of 50{,}000 terms, and then apply rank-50 truncated SVD.
Thus $\psi_X(X)\in\R^{50}$ is a low-dimensional context representation used only for model choice and for the $U\mid X$ model in the semi-synthetic generator.
This is distinct from the proxy representation $\psi(X,M)$ in Section~\ref{sec:estimators}, which is learned from OBS auxiliary labels and is used by the hybrid reward estimators.
Reference values and held-out regret metrics are computed on $n_{\mathrm{true}}=100$ and $n_{\mathrm{eval}}=100$ held-out contexts respectively, using $B=5$ simulator draws per context--agent pair.

\paragraph{Fixed protocol in the appendix controlled validation.}
The appendix semi-synthetic validation fixes one confounding protocol and varies only the self-selection weight $\beta$, the observational budget $n_{\obs}$, and the experimental budget $n_{\expd}$.
EXP randomizes uniformly over the candidate agent set.
OBS mixes a context-only softmax policy with a latent-user shortlist policy, where $\beta$ is the probability of drawing from the confounded latent-user component.
This validation uses
\[
\beta \in \{0, 0.2, 0.5, 0.8, 0.9, 0.99\},\qquad
n_{\obs}\in\{2{,}000,20{,}000\},\qquad
n_{\expd}\in\{20,100\},
\]
and reports 30 paired seeds in every available cell.
The direct latent effect on the reward score and the auxiliary-label noise level are both fixed throughout this validation grid.

\paragraph{Budget-axis repetition for pure baselines.}
\OBSOnly{} ignores EXP labels, so its reported value is constant across the two EXP columns for a fixed $(\beta,n_{\obs})$ pair.
\EXPOnly{} ignores OBS outcomes, so its reported value is constant across the two OBS rows for a fixed $(\beta,n_{\expd})$ pair.
This repetition is carried through consistently in the synthetic regime table and summary table.

\subsection{Appendix results for the summarization benchmark}
\label{app:sweep_details}

\paragraph{Synthetic summarization results.}
\label{app:synth_regime_details}
Table~\ref{tab:synth_beta_winner_map} shows the regret winner in each synthetic summarization setting over six \(\beta\) values and four \((n_{\obs},n_{\expd})\) budget settings.

\begin{table}[tb]
\centering
\scriptsize
\setlength{\tabcolsep}{4pt}
\renewcommand{\arraystretch}{1.15}
\resizebox{0.98\linewidth}{!}{%
\begin{tabular}{lcccccc}
\toprule
Budget cell & $\beta=0$ & $\beta=0.2$ & $\beta=0.5$ & $\beta=0.8$ & $\beta=0.9$ & $\beta=0.99$ \\
\midrule
\shortstack{$n_{\obs}=2{,}000$\\$n_{\expd}=20$}
& \wincell{cvcirescolor}{\CVCIRes{}}{0.0004}
& \wincell{cvcicolor}{\CVCI{}}{0.0002}
& \wincell{cvcicolor}{\CVCI{}}{0.0009}
& \wincell{obsonlycolor}{\OBSOnly{}}{0.0004}
& \wincell{groundedcolor}{\Grounded{}}{0.0010}
& \wincell{groundedcolor}{\Grounded{}}{0.0004} \\
\shortstack{$n_{\obs}=20{,}000$\\$n_{\expd}=20$}
& \wincell{cvcicolor}{\CVCI{}}{0.0005}
& \wincell{cvcicolor}{\CVCI{}}{0.0015}
& \wincell{cvcicolor}{\CVCI{}}{0.0015}
& \wincell{cvcicolor}{\CVCI{}}{0.0021}
& \wincell{cvcicolor}{\CVCI{}}{0.0018}
& \wincell{cvcicolor}{\CVCI{}}{0.0017} \\
\shortstack{$n_{\obs}=2{,}000$\\$n_{\expd}=100$}
& \wincell{cvcicolor}{\CVCI{}}{0.0008}
& \wincell{cvcicolor}{\CVCI{}}{0.0017}
& \wincell{cvcicolor}{\CVCI{}}{0.0012}
& \wincell{obsonlycolor}{\OBSOnly{}}{0.0006}
& \wincell{obsonlycolor}{\OBSOnly{}}{0.0008}
& \wincell{obsonlycolor}{\OBSOnly{}}{0.0005} \\
\shortstack{$n_{\obs}=20{,}000$\\$n_{\expd}=100$}
& \wincell{cvcicolor}{\CVCI{}}{0.0002}
& \wincell{exponlycolor}{\EXPOnly{}}{0.0012}
& \wincell{exponlycolor}{\EXPOnly{}}{0.0016}
& \wincell{exponlycolor}{\EXPOnly{}}{0.0013}
& \wincell{exponlycolor}{\EXPOnly{}}{0.0012}
& \wincell{exponlycolor}{\EXPOnly{}}{0.0016} \\
\bottomrule
\end{tabular}%
}
\caption{Regret-optimal estimator on the semi-synthetic summarization benchmark. The number beneath each winner is the regret gap to the runner-up.}
\label{tab:synth_beta_winner_map}
\end{table}

The smallest-budget row \((2{,}000,20)\) is the most mixed: \CVCIRes{} wins once, \CVCI{} wins twice, \OBSOnly{} wins once, and \Grounded{} wins twice. At \((20{,}000,20)\), \CVCI{} wins all six beta values. At \((2{,}000,100)\), the winner shifts from \CVCI{} at lower \(\beta\) to \OBSOnly{} at higher \(\beta\). At \((20{,}000,100)\), \EXPOnly{} wins all but the \(\beta=0\) cell.

Across the 24 settings, \CVCI{} is the most stable overall winner, but no single method wins everywhere. Most regret gaps are small, so Table~\ref{tab:synth_beta_winner_map} is best read as a map of where the winner changes.

\paragraph{Additional synthetic aggregate interpretation.}
Across the same 24 settings, \CVCI{} is also the aggregate regret leader. \EXPOnly{} and \OBSOnly{} form the next tier, while \Grounded{} and \CVCIRes{} win only a few cells. The ranking diagnostics are flatter than regret: no single method leads top-1 or top-3 across the grid.

\paragraph{Real summarization results on the smooth auxiliary comparison.}
\label{app:real_regime_details}
Table~\ref{tab:real_beta_winner_map} shows the regret winner in each real summarization setting for the auxiliary smooth-reward comparison over six \(\beta\) values and four \((n_{\obs},n_{\expd})\) budget settings.

\begin{table}[tb]
\centering
\scriptsize
\setlength{\tabcolsep}{4pt}
\renewcommand{\arraystretch}{1.15}
\resizebox{0.98\linewidth}{!}{%
\begin{tabular}{lcccccc}
\toprule
Budget cell & $\beta=0$ & $\beta=0.2$ & $\beta=0.5$ & $\beta=0.8$ & $\beta=0.9$ & $\beta=0.99$ \\
\midrule
\shortstack{$n_{\obs}=2{,}000$\\$n_{\expd}=20$}
& \wincell{groundedcolor}{\Grounded{}}{0.0002}
& \wincell{groundedcolor}{\Grounded{}}{0.0011}
& \wincell{groundedcolor}{\Grounded{}}{0.0036}
& \wincell{obsonlycolor}{\OBSOnly{}}{0.0000}
& \wincell{obsonlycolor}{\OBSOnly{}}{0.0008}
& \wincell{groundedcolor}{\Grounded{}}{0.0006} \\
\shortstack{$n_{\obs}=20{,}000$\\$n_{\expd}=20$}
& \wincell{groundedcolor}{\Grounded{}}{0.0016}
& \wincell{groundedcolor}{\Grounded{}}{0.0049}
& \wincell{exponlycolor}{\EXPOnly{}}{0.0013}
& \wincell{exponlycolor}{\EXPOnly{}}{0.0062}
& \wincell{exponlycolor}{\EXPOnly{}}{0.0019}
& \wincell{exponlycolor}{\EXPOnly{}}{0.0019} \\
\shortstack{$n_{\obs}=2{,}000$\\$n_{\expd}=100$}
& \wincell{obsonlycolor}{\OBSOnly{}}{0.0007}
& \wincell{obsonlycolor}{\OBSOnly{}}{0.0002}
& \wincell{obsonlycolor}{\OBSOnly{}}{0.0002}
& \wincell{cvcicolor}{\CVCI{}}{0.0000}
& \wincell{cvcicolor}{\CVCI{}}{0.0000}
& \wincell{cvcicolor}{\CVCI{}}{0.0008} \\
\shortstack{$n_{\obs}=20{,}000$\\$n_{\expd}=100$}
& \wincell{cvcirescolor}{\CVCIRes{}}{0.0048}
& \wincell{cvcicolor}{\CVCI{}}{0.0009}
& \wincell{cvcicolor}{\CVCI{}}{0.0052}
& \wincell{cvcicolor}{\CVCI{}}{0.0012}
& \wincell{cvcicolor}{\CVCI{}}{0.0013}
& \wincell{cvcicolor}{\CVCI{}}{0.0046} \\
\bottomrule
\end{tabular}%
}
\caption{Regret-optimal estimator on the real summarization benchmark. The number beneath each winner is the regret gap to the runner-up.}
\label{tab:real_beta_winner_map}
\end{table}

At \((2{,}000,20)\), \Grounded{} wins four beta values and \OBSOnly{} wins two; the \(\beta=0.8\) cell is essentially a tie. At \((20{,}000,20)\), \Grounded{} wins at low \(\beta\), while \EXPOnly{} wins at higher \(\beta\). Once \(n_{\expd}=100\), the high-OBS row \((20{,}000,100)\) is dominated by \CVCI{}, with a single \CVCIRes{} win at \(\beta=0\), while the low-OBS row \((2{,}000,100)\) shifts from \OBSOnly{} at lower \(\beta\) to \CVCI{} at higher \(\beta\).

\paragraph{Additional real ranking--regret comparison.}
Across these 24 settings, \CVCI{} is the aggregate regret leader and \Grounded{} is second. Ranking tells a different story: \Grounded{} wins top-1 and top-3 in all 24 settings but wins regret in only 6.

\subsection{Public cached coding benchmark}
\label{app:coding_geometry}

\paragraph{Fixed protocol.}
The coding benchmark uses a public cached grid of issue descriptions, patch mediators, and execution-derived rewards.
The coding comparison in Section~\ref{sec:real} fixes
\[
\beta=0.5,\qquad n_{\obs}=2{,}000,\qquad n_{\expd}=100,
\]
uses outcome-precision \(64\), and evaluates only the feature-based implementations of the estimator families.
It then varies two reward-design parameters:
\[
\text{fix-success weight}\in\{0,0.25,0.5,0.75,1\},
\qquad
\text{weakest-link weight}\in\{0,0.25,0.5,0.75,1\},
\]
with 10 paired seeds in every cell.
The first parameter rescales the fix-success component of the reward, while the second moves the non-fix patch-quality score from additive aggregation toward a weakest-link rule.

\paragraph{Appendix coding results.}
In this coding appendix comparison, we treat a winner change as substantive when the top-two regret gap exceeds \(3\times 10^{-3}\); smaller gaps are near-ties. Table~\ref{tab:coding_geometry_winner_map} reports the regret winner in each cell. Every top-two gap is below \(0.01\), and \(20\) of the \(25\) cells are below \(1.5\times 10^{-3}\). Most flips between \OBSStruct{} and \CVCIStruct{} therefore reflect near-ties in the high-fix-success part of the grid. The clearest differences appear at low fix-success weight under additive scoring, where \RepStruct{} outperforms the OBS-based methods, and in the steady narrowing of the grounded-versus-OBS gap as weakest-link scoring becomes stronger.

\begin{table}[tb]
\centering
\scriptsize
\setlength{\tabcolsep}{4pt}
\renewcommand{\arraystretch}{1.12}
\resizebox{0.98\linewidth}{!}{%
\begin{tabular}{lccccc}
\toprule
Fix-success weight\textbackslash{}weakest-link weight & \(0\) & \(0.25\) & \(0.5\) & \(0.75\) & \(1.0\) \\
\midrule
\(0.00\)
& \wincell{proxyexpcolor}{\RepStruct{}}{0.0039}
& \wincell{proxyexpcolor}{\RepStruct{}}{0.0033}
& \wincell{cvcirescolor}{\CVCIResStruct{}}{0.0001}
& \wincell{obsonlycolor}{\OBSStruct{}}{0.0001}
& \wincell{obsonlycolor}{\OBSStruct{}}{0.0002} \\
\(0.25\)
& \wincell{proxyexpcolor}{\RepStruct{}}{0.0006}
& \wincell{obsonlycolor}{\OBSStruct{}}{0.0004}
& \wincell{cvcicolor}{\CVCIStruct{}}{0.0004}
& \wincell{cvcicolor}{\CVCIStruct{}}{0.0013}
& \wincell{cvcicolor}{\CVCIStruct{}}{0.0009} \\
\(0.50\)
& \wincell{obsonlycolor}{\OBSStruct{}}{0.0004}
& \wincell{obsonlycolor}{\OBSStruct{}}{0.0010}
& \wincell{cvcicolor}{\CVCIStruct{}}{0.0003}
& \wincell{cvcicolor}{\CVCIStruct{}}{0.0008}
& \wincell{cvcicolor}{\CVCIStruct{}}{0.0004} \\
\(0.75\)
& \wincell{cvcicolor}{\CVCIStruct{}}{0.0013}
& \wincell{obsonlycolor}{\OBSStruct{}}{0.0029}
& \wincell{obsonlycolor}{\OBSStruct{}}{0.0004}
& \wincell{obsonlycolor}{\OBSStruct{}}{0.0003}
& \wincell{obsonlycolor}{\OBSStruct{}}{0.0013} \\
\(1.00\)
& \wincell{obsonlycolor}{\OBSStruct{}}{0.0011}
& \wincell{cvcicolor}{\CVCIStruct{}}{0.0007}
& \wincell{cvcicolor}{\CVCIStruct{}}{0.0001}
& \wincell{obsonlycolor}{\OBSStruct{}}{0.0068}
& \wincell{obsonlycolor}{\OBSStruct{}}{0.0056} \\
\bottomrule
\end{tabular}%
}
\caption{Regret-optimal estimator on the coding benchmark at fixed \(n_{\obs}=2{,}000\), \(n_{\expd}=100\), \(\beta=0.5\), and reward-noise precision \(64\). Columns give the weakest-link weight in the non-fix patch-quality score, and rows give the weight on true fix success. Most top-two gaps are small, so the table mainly shows where the regret winner changes.}
\label{tab:coding_geometry_winner_map}
\end{table}

\paragraph{Secondary axes omitted from the main text.}
Additional coding comparisons over confounding strength and outcome noise were qualitatively weaker.
They changed absolute difficulty, but they did not produce ranking shifts as clear as those induced by the fix-success weight and weakest-link weight in this fixed-budget probe.
For that reason, the main text centers this reward-design comparison and leaves the other axes to this appendix discussion.

\subsection{Implementation notes}
The synthetic validation uses only cached data generation; no external API calls occur after the generator pool is constructed.
\DR{} uses EXP propensities from uniform randomization together with cross-fitting for reward models trained on EXP.
Additional synthetic diagnostics and older experimental variants are omitted from the present main-text comparisons.

\section{Rubric subscores for CNN/DailyMail summaries}
\label{app:rubric}

Given an article $X$ and candidate summary $M$, the judge outputs four integer subscores
$\tilde s(X,M)\in\{0,1,2,3,4,5\}^4$:
\emph{faithfulness}, \emph{coverage}, \emph{clarity}, and \emph{conciseness}.
Scores use the anchors below; intermediate values interpolate between adjacent anchors.

\paragraph{Faithfulness (0--5).}
Factual consistency with the article. Unsupported claims include invented entities, numbers, events, causal attributions, or quotes.
\begin{itemize}\itemsep0pt \parskip0pt \parsep0pt
\item \textbf{5:} No unsupported claims; all salient statements are grounded in $X$.
\item \textbf{4:} Minor unsupported detail(s) that do not affect the main facts.
\item \textbf{3:} Multiple unsupported or weakly supported statements; main story remains recognizable.
\item \textbf{2:} Several unsupported claims affecting key details (e.g., actor/action/outcome/number).
\item \textbf{1:} Many unsupported claims; substantial contradiction or fabrication.
\item \textbf{0:} Predominantly hallucinatory or contradictory relative to $X$.
\end{itemize}

\paragraph{Coverage (0--5).}
Inclusion of the article's major points (headline facts, primary actors, core events, outcomes, and key qualifiers).
\begin{itemize}\itemsep0pt \parskip0pt \parsep0pt
\item \textbf{5:} Covers all major points; omissions are limited to minor details.
\item \textbf{4:} Covers most major points; at most one major point missing.
\item \textbf{3:} Covers some major points; several major omissions.
\item \textbf{2:} Covers few major points; summary reflects only a small slice of the article.
\item \textbf{1:} Minimal coverage; largely misses the article's main content.
\item \textbf{0:} Misses most major points or is largely off-topic relative to $X$.
\end{itemize}

\paragraph{Clarity (0--5).}
Readability and organization (coherence, grammaticality, referential clarity, and logical flow).
\begin{itemize}\itemsep0pt \parskip0pt \parsep0pt
\item \textbf{5:} Clear and well-structured; unambiguous references and fluent phrasing.
\item \textbf{4:} Generally clear; minor awkwardness or minor ambiguity.
\item \textbf{3:} Understandable with effort; noticeable disfluency, repetition, or unclear references.
\item \textbf{2:} Hard to follow; frequent grammatical issues or unclear structure.
\item \textbf{1:} Mostly unclear; major coherence failures.
\item \textbf{0:} Unreadable or incoherent.
\end{itemize}

\paragraph{Conciseness (0--5).}
Information density at an appropriate length; redundancy and irrelevant detail reduce the score; excessive brevity that omits necessary content also reduces the score.
\begin{itemize}\itemsep0pt \parskip0pt \parsep0pt
\item \textbf{5:} Efficient summary with minimal redundancy and no filler.
\item \textbf{4:} Slight redundancy or mild over/under-length without major impact.
\item \textbf{3:} Noticeable redundancy or length mismatch; still usable as a summary.
\item \textbf{2:} Clearly too verbose or too terse; substantial inefficiency or truncation.
\item \textbf{1:} Extremely verbose or extremely short; poor summary form.
\item \textbf{0:} Pathological length or pervasive redundancy; not a usable summary.
\end{itemize}

%% file: sections/B_theory_proofs_revised.tex
\section{Full statements and proofs for identification and population estimator comparisons}
\label{app:theory}

This appendix collects the proof of the main identification result and the full statements and proofs for the population estimator comparisons referenced in Section~\ref{sec:estimators}. It also explains how these approximation-class results motivate the empirical target-alignment diagnostics used in the benchmarks; those diagnostics are not additional formal results.
Throughout, $Z=(X,M)$, $P_E$ denotes the EXP law of $Z$, and for a fixed target outcome $Y$ we write
\begin{equation}
\mathcal R_E(f)=\sigma_E^2+\|f-r^\star\|_E^2,
\qquad
\|g\|_E^2:=\E_{Z\sim P_E}[g(Z)^2],
\qquad
\sigma_E^2:=\E\!\left[(Y-r^\star(Z))^2\mid \expd\right],
\label{eq:exp_risk_decomposition}
\end{equation}
where $r^\star(z):=\E[Y\mid Z=z,\expd]$.
For readability, we ignore the final clipping to $[0,1]$ and absorb the intercept into both $\psi$ and $\varphi$ by augmenting them with a constant feature.

\subsection{Proof of Theorem~\ref{thm:replay_identification}}
\label{app:proof_replay_identification}

\begin{proof}[Proof of Theorem~\ref{thm:replay_identification}]
By Assumption A1, $A\perp U\mid X$ in EXP. Assumption A4 states the structural restriction $U\perp M\mid X,A$ in OBS, in EXP, and under $\doop(A=a)$, so that conditional on $(X,A)$ the mediator $M$ carries no residual information about $U$. Combining A1 and A4 with the chain rule for conditional independence gives
\[
U \perp (A,M)\mid X
\qquad\text{in EXP},
\]
so on the shared EXP/SIM support the scoring rule $r^\star(x,m)=\E[Y\mid X=x,M=m,\expd]$ is identified from EXP and is free of self-selection bias. Assumption A2 gives $p_{\simu}(m\mid x,a)=\Pp(M=m\mid X=x,\doop(A=a))$, so for any $x$ in the EXP support,
\begin{align*}
q(x,a)
&= \E\!\left[Y(\doop(A=a))\mid X=x\right] \\
&= \E_{M\sim p_{\simu}(\cdot\mid x,a)}\!\left[\E[Y\mid X=x,M=M,\doop(A=a)]\right] \\
&= \E_{M\sim p_{\simu}(\cdot\mid x,a)}\!\left[r^\star(x,M)\right],
\end{align*}
where the last equality uses Assumption A3 (the conditional outcome law given $(X,M,U)$ is the same in OBS, EXP, and under $\doop(A=a)$, and $r^\star$ integrates $U$ out under the EXP $U\mid X$ distribution, which by A1 equals the post-intervention $U\mid X$ distribution). This proves \eqref{eq:ident_q}. Averaging $q(X,a)$ under the EXP context distribution and invoking Assumption A5 ($P_X^{\mathrm{tgt}}=P_X^{\expd}$) gives \eqref{eq:ident_mu}. Outside Assumption A5, only $q(x,a)$ on the shared support is identified; reweighting to a different target context distribution requires a standard covariate-shift adjustment.
\end{proof}

Define the linear classes
\[
\mathcal H_\psi:=\{z\mapsto \theta^\top\psi(z):\theta\in\R^{d_\psi}\},
\qquad
\mathcal F_\varphi:=\{z\mapsto w^\top\varphi(z):w\in\R^{d_\varphi}\},
\]
and the grounded / residual affine class
\[
\mathcal G(f_{\obs},\psi):=\{f_{\obs}-h:\ h\in\mathcal H_\psi\}=f_{\obs}+\mathcal H_\psi,
\]
where the equality uses that $\mathcal H_\psi$ is a linear space.

\subsection{Summary of the fixed-target comparisons and empirical alignment diagnostics}

The main text uses the population comparisons as conditional guidance for reading the experiments, not as a guarantee that one estimator dominates. We separate the formal results included below from empirical diagnostics that are motivated by the same approximation-gap viewpoint.

\paragraph{Formal results included below.}

\begin{itemize}[leftmargin=*]
\item \textbf{At the oracle level, correcting OBS cannot hurt.}
Projecting OBS bias onto the correction space weakly improves EXP risk relative to \OBSOnly{}; shrinkage matters only because the estimated correction can be noisy in finite samples.
See Theorem~\ref{thm:grounded_vs_obs}.

\item \textbf{\Grounded{} expands the proxy-side approximation class.}
If the OBS baseline is not already linear in $\psi$, then correcting an expressive baseline by a low-dimensional proxy residual can represent targets that a pure $\psi$-linear EXP fit cannot.
See Theorem~\ref{thm:grounded_function_class}.

\item \textbf{\CVCIRes{} is favored when the hard part is already in the OBS baseline.}
Residualization beats direct pooling when $f_{\obs}$ absorbs the difficult part of the reward surface and the remaining discrepancy is simpler in proxy space than the full target is in raw text features.
See Theorem~\ref{thm:residual_vs_pooling}.

\item \textbf{In the shared linear proxy special case, \Grounded{} is centered ridge.}
When both the OBS baseline and the EXP target are linear in the same proxy representation, \Grounded{} becomes ridge centered at the OBS coefficient, and it beats \EXPOnly{} exactly when that OBS center is a better shrinkage target than zero.
See Theorem~\ref{thm:grounded_centered_ridge} and Corollary~\ref{cor:grounded_vs_exp_linear}.
\end{itemize}

\paragraph{Empirical diagnostics motivated by the formal results.}
The experiments do not test the oracle conditions directly. They report proxy-alignment and reward-shape diagnostics that are related to the approximation classes in the formal results. In summarization, the smooth-to-sharpened reward comparison changes the target while keeping the cached outputs fixed. In coding, the fix-success sweep moves the target away from auxiliary patch-quality features and toward true fix success. These diagnostics are empirical probes of estimator--target alignment, not additional theorems.

\begin{itemize}[leftmargin=*]
\item \textbf{Target--proxy alignment is measured empirically.}
The representation-based estimators are most useful when the target reward varies along directions retained by the auxiliary proxy. The summarization and coding experiments report held-out target-fit \(R^2\) as a proxy for this alignment.

\item \textbf{Reward definitions can change estimator rankings through approximation-class alignment.}
Changing the reward definition can move the target toward or away from the approximation class used by an estimator. The smooth/sharpened summarization comparison and the coding fix-success sweep are diagnostics consistent with this mechanism.

\item \textbf{Regret is the primary downstream metric.}
The experiments report recommendation regret because the evaluation objective is action selection; RMSE and ranking metrics are secondary diagnostics for reward-surface fit and model-level value fit.
\end{itemize}

\input{sections/B_theory_proofs}

%% file: sections/B_theory_proofs.tex
\subsection{What representation correction buys}

\begin{theorem}[What representation correction buys]
\label{thm:grounded_function_class}
For any fixed baseline $f_{\obs}$, the grounded class $\mathcal G(f_{\obs},\psi)$ satisfies the following three statements.
\begin{enumerate}[label=(\alph*)]
\item If there exists $\theta_\star\in\R^{d_\psi}$ such that
\[
r^\star(z)=f_{\obs}(z)-\theta_\star^\top\psi(z)
\qquad\text{for all }z,
\]
then $r^\star\in\mathcal G(f_{\obs},\psi)$.
Conditional on the fitted baseline $f_{\obs}$, EXP only needs to estimate the $d_\psi$-dimensional residual coefficient $\theta_\star$.

\item If $f_{\obs}\notin \mathcal H_\psi$, then
\[
\mathcal G(f_{\obs},\psi)\neq \mathcal H_\psi,
\qquad
f_{\obs}\in \mathcal G(f_{\obs},\psi)\setminus\mathcal H_\psi.
\]
So grounded correction can represent targets that a pure $\psi$-linear EXP fit cannot.

\item If $f_{\obs}\in \mathcal H_\psi$, then
\[
\mathcal G(f_{\obs},\psi)= \mathcal H_\psi.
\]
In that special case, representation correction does not enlarge the function class.
\end{enumerate}
\end{theorem}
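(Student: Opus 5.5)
The plan is to argue directly from the definitions $\mathcal H_\psi=\{z\mapsto\theta^\top\psi(z):\theta\in\R^{d_\psi}\}$ and $\mathcal G(f_{\obs},\psi)=\{f_{\obs}-h:h\in\mathcal H_\psi\}=f_{\obs}+\mathcal H_\psi$. The key structural fact, already noted after those definitions, is that $\mathcal H_\psi$ is a linear subspace of functions: it contains $0$ and is closed under negation and addition. The theorem then reduces to the standard fact that an affine translate $f_{\obs}+\mathcal H_\psi$ equals the subspace $\mathcal H_\psi$ iff the translation vector $f_{\obs}$ lies in the subspace. I will treat all equalities as pointwise identities of functions of $z$, following the convention that clipping is ignored and the intercept is absorbed into $\psi$.

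For (a), set $h_\star:=\theta_\star^\top\psi\in\mathcal H_\psi$. The hypothesis says exactly that $r^\star=f_{\obs}-h_\star$, so $r^\star\in\mathcal G(f_{\obs},\psi)$. The interpretive sentence follows because, once $f_{\obs}$ is held fixed, the remaining unknown in the class is the single vector $\theta\in\R^{d_\psi}$. Moreover, regressing the EXP residual target $f_{\obs}(Z)-Y$ on $\psi(Z)$ has population solution $\theta_\star$, since $\E[f_{\obs}(Z)-Y\mid Z,\expd]=f_{\obs}(Z)-r^\star(Z)=\theta_\star^\top\psi(Z)$. I would state this only as a remark, without invoking uniqueness of $\theta_\star$. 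If the span of $\psi$ is degenerate, any coefficient vector representing the same function works.

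For (b), taking $h=0\in\mathcal H_\psi$ gives $f_{\obs}\in\mathcal G(f_{\obs},\psi)$. By hypothesis $f_{\obs}\notin\mathcal H_\psi$, so $f_{\obs}\in\mathcal G\setminus\mathcal H_\psi$. The two sets therefore differ, and $\mathcal G\neq\mathcal H_\psi$ is immediate.

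For (c), suppose $f_{\obs}=\theta_0^\top\psi$. For $\subseteq$, any element $f_{\obs}-\theta^\top\psi=(\theta_0-\theta)^\top\psi$ lies in $\mathcal H_\psi$. For $\supseteq$, any $\theta^\top\psi$ equals $f_{\obs}-(\theta_0-\theta)^\top\psi\in\mathcal G$.

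No step is a real obstacle. The only point needing care is stating clearly that set membership means equality as functions on the relevant support (e.g.\ $P_E$-a.e.\ or for all $z$, consistently with (a)'s ``for all $z$''), so that ``$f_{\obs}\notin\mathcal H_\psi$'' is read under the same notion of equality used for the class.
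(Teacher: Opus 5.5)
Your proposal is correct and follows the paper's proof essentially step for step: (a) is read off from the definition, (b) uses $0\in\mathcal H_\psi$ to place $f_{\obs}$ in $\mathcal G(f_{\obs},\psi)\setminus\mathcal H_\psi$, and (c) proves both inclusions via the reparametrization $\theta\mapsto\theta_0-\theta$. Two of your additions go beyond what the paper spells out: the remark that the population regression of $f_{\obs}(Z)-Y$ on $\psi(Z)$ recovers $\theta_\star$, and the note about using one consistent notion of function equality. Both are harmless.
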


\begin{proof}
We prove the three claims in order.

For part (a), the displayed identity
\[
r^\star(z)=f_{\obs}(z)-\theta_\star^\top\psi(z)
\]
is exactly the statement that $r^\star\in\mathcal G(f_{\obs},\psi)$.
Once $f_{\obs}$ is treated as fixed, the only EXP-side unknown is the $d_\psi$-dimensional coefficient vector $\theta_\star$.

For part (b), note that $0\in\mathcal H_\psi$, hence
\[
f_{\obs}=f_{\obs}-0\in\mathcal G(f_{\obs},\psi).
\]
If $\mathcal G(f_{\obs},\psi)=\mathcal H_\psi$, then this would imply $f_{\obs}\in\mathcal H_\psi$, contradicting the assumption.
Therefore $\mathcal G(f_{\obs},\psi)\neq\mathcal H_\psi$, and $f_{\obs}$ is a member of $\mathcal G(f_{\obs},\psi)$ that does not belong to $\mathcal H_\psi$.

For part (c), suppose $f_{\obs}(z)=\beta_{\obs}^\top\psi(z)$ for some $\beta_{\obs}\in\R^{d_\psi}$.
Every $g\in\mathcal G(f_{\obs},\psi)$ has the form
\[
g(z)=\beta_{\obs}^\top\psi(z)-\theta^\top\psi(z)
=(\beta_{\obs}-\theta)^\top\psi(z)\in\mathcal H_\psi,
\]
so $\mathcal G(f_{\obs},\psi)\subseteq\mathcal H_\psi$.
Conversely, for any $g(z)=\beta^\top\psi(z)\in\mathcal H_\psi$, choosing $\theta=\beta_{\obs}-\beta$ gives
\[
g(z)=f_{\obs}(z)-\theta^\top\psi(z)\in\mathcal G(f_{\obs},\psi).
\]
Thus $\mathcal H_\psi\subseteq\mathcal G(f_{\obs},\psi)$, proving equality.
\end{proof}

\subsection{Residualization versus direct pooling}

\begin{theorem}[Residualization versus direct pooling]
\label{thm:residual_vs_pooling}
Let $f_{\obs}$ be the observational baseline used by \Grounded{} and \CVCIRes{}, and assume $f_{\obs},r^\star\in L^2(P_E)$ together with finite-dimensional linear classes $\mathcal F_\varphi$ and $\mathcal H_\psi$, so that both classes are closed in $L^2(P_E)$ and the projections in question exist (when $\arg\min$ is non-unique, any minimizer suffices, since the displayed risks depend only on the projections).
Define the oracle direct and residual predictors by
\[
f^\dagger_{\mathrm{cvci}}
\in
\arg\min_{f\in\mathcal F_\varphi}\mathcal R_E(f),
\qquad
h^\dagger_{\mathrm{res}}
\in
\arg\min_{h\in\mathcal H_\psi}\mathcal R_E(f_{\obs}+h).
\]
Then
\[
\mathcal R_E(f_{\obs}+h^\dagger_{\mathrm{res}})
\le
\mathcal R_E(f^\dagger_{\mathrm{cvci}})
\]
if and only if
\[
\operatorname{dist}_E(r^\star,f_{\obs}+\mathcal H_\psi)
\le
\operatorname{dist}_E(r^\star,\mathcal F_\varphi),
\]
where $\operatorname{dist}_E(r,\mathcal A):=\inf_{g\in\mathcal A}\|r-g\|_E$.
In particular, if
\[
r^\star-f_{\obs}\in\mathcal H_\psi
\qquad\text{but}\qquad
r^\star\notin\overline{\mathcal F_\varphi},
\]
then
\[
\mathcal R_E(f_{\obs}+h^\dagger_{\mathrm{res}})
<
\mathcal R_E(f^\dagger_{\mathrm{cvci}}).
\]
\end{theorem}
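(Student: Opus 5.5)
The plan is to reduce both oracle risks to squared $L^2(P_E)$ distances via the risk decomposition \eqref{eq:exp_risk_decomposition}, and then compare the distances. Since $\mathcal R_E(f)=\sigma_E^2+\|f-r^\star\|_E^2$ and $\sigma_E^2$ does not depend on $f$, minimizing $\mathcal R_E$ over a class is the same as minimizing $\|f-r^\star\|_E$ over that class. This gives
\[
\mathcal R_E(f^\dagger_{\mathrm{cvci}})=\sigma_E^2+\operatorname{dist}_E(r^\star,\mathcal F_\varphi)^2 .
\]
For the residual class, $\|f_{\obs}+h-r^\star\|_E=\|h-(r^\star-f_{\obs})\|_E$. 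Hence
\[
\mathcal R_E(f_{\obs}+h^\dagger_{\mathrm{res}})=\sigma_E^2+\operatorname{dist}_E(r^\star-f_{\obs},\mathcal H_\psi)^2=\sigma_E^2+\operatorname{dist}_E(r^\star,f_{\obs}+\mathcal H_\psi)^2 .
\]
The last equality holds because translating by $f_{\obs}$ is an isometry of $L^2(P_E)$.

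The only technical point is that these infima are attained. That is why the statement assumes the classes are finite-dimensional linear subspaces of $L^2(P_E)$. Here I need $\varphi,\psi\in L^2(P_E)$ coordinatewise, which I take to be implicit in calling them subspaces of $L^2(P_E)$. Finite-dimensional subspaces are closed, so orthogonal projections exist. Any minimizer realizes the distance, since the risk depends only on the projection, even when the coefficient vectors are non-unique because of collinear features. The affine class $f_{\obs}+\mathcal H_\psi$ is then also closed, and its distance is attained at $f_{\obs}+\Pi_{\mathcal H_\psi}(r^\star-f_{\obs})$.

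With both risks written as $\sigma_E^2$ plus a squared distance, the equivalence follows directly. Distances are nonnegative and squaring is monotone on $[0,\infty)$, so the risk inequality holds if and only if the distance inequality holds.

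For the strict ``in particular'' claim, first suppose $r^\star-f_{\obs}\in\mathcal H_\psi$. Then the residual distance is $0$, attained at $h=r^\star-f_{\obs}$. Next suppose $r^\star\notin\overline{\mathcal F_\varphi}=\mathcal F_\varphi$. Since $\mathcal F_\varphi$ is closed, $\operatorname{dist}_E(r^\star,\mathcal F_\varphi)>0$. Combining these with the displayed risk formulas gives the strict inequality.

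One subtlety needs care: functions equal $P_E$-a.s. must be identified, so membership statements are read in $L^2(P_E)$. I expect no real obstacle here beyond stating the closedness and attainment carefully.
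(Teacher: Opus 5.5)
Your proposal is correct and follows the paper's proof. Both arguments use \eqref{eq:exp_risk_decomposition} to write each oracle risk as $\sigma_E^2$ plus a squared $L^2(P_E)$ distance, cancel $\sigma_E^2$ to get the equivalence, and obtain the strict case from zero residual distance against a positive distance to $\mathcal F_\varphi$; your remarks on attainment of the infima and the translation isometry only make explicit what the paper leaves implicit.
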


\begin{proof}
By \eqref{eq:exp_risk_decomposition},
\[
\inf_{f\in\mathcal F_\varphi}\mathcal R_E(f)
=
\sigma_E^2+\inf_{f\in\mathcal F_\varphi}\|f-r^\star\|_E^2
=
\sigma_E^2+\operatorname{dist}_E^2(r^\star,\mathcal F_\varphi).
\]
Likewise,
\[
\inf_{h\in\mathcal H_\psi}\mathcal R_E(f_{\obs}+h)
=
\sigma_E^2+\inf_{h\in\mathcal H_\psi}\|f_{\obs}+h-r^\star\|_E^2
=
\sigma_E^2+\operatorname{dist}_E^2(r^\star,f_{\obs}+\mathcal H_\psi).
\]
Subtracting the common irreducible term $\sigma_E^2$ gives the equivalence.

If $r^\star-f_{\obs}\in\mathcal H_\psi$, then
\[
\operatorname{dist}_E(r^\star,f_{\obs}+\mathcal H_\psi)=0.
\]
If also $r^\star\notin\overline{\mathcal F_\varphi}$, then
\[
\operatorname{dist}_E(r^\star,\mathcal F_\varphi)>0.
\]
Hence
\[
\mathcal R_E(f_{\obs}+h^\dagger_{\mathrm{res}})
<
\mathcal R_E(f^\dagger_{\mathrm{cvci}}).
\qedhere
\]
\end{proof}

\subsection{Oracle grounding and shrinkage}

\begin{theorem}[Oracle grounding is no worse than OBS-only]
\label{thm:grounded_vs_obs}
Let $b:=f_{\obs}-r^\star$ denote the OBS bias function on EXP.
Let $\mathcal H_\psi\subset L^2(P_E)$ be the resulting closed linear correction space, and let
\[
h^\dagger:=\Pi_{\mathcal H_\psi}b
\]
be the $L^2(P_E)$ projection of $b$ onto that space.
For any $\alpha\in[0,1]$, define the oracle grounded predictor
\[
f_{\mathrm{G},\alpha}:=f_{\obs}-\alpha h^\dagger.
\]
Then
\begin{equation}
\mathcal R_E(f_{\mathrm{G},\alpha})
=
\mathcal R_E(f_{\obs})-(2\alpha-\alpha^2)\|h^\dagger\|_E^2.
\label{eq:oracle_grounded_gain}
\end{equation}
Hence
\[
\mathcal R_E(f_{\mathrm{G},\alpha})\le \mathcal R_E(f_{\obs})
\qquad\text{for all }\alpha\in[0,1],
\]
with strict inequality whenever $\alpha>0$ and $h^\dagger\neq 0$.
\end{theorem}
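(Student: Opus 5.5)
The proof reduces everything to the risk decomposition \eqref{eq:exp_risk_decomposition}, $\mathcal R_E(f)=\sigma_E^2+\|f-r^\star\|_E^2$, followed by a Pythagorean computation in $L^2(P_E)$. First I rewrite the excess risk of the grounded predictor in terms of the bias. Since $f_{\obs}=r^\star+b$, we have $f_{\mathrm{G},\alpha}-r^\star=b-\alpha h^\dagger$, and therefore
\[
\mathcal R_E(f_{\mathrm{G},\alpha})=\sigma_E^2+\|b-\alpha h^\dagger\|_E^2,\qquad \mathcal R_E(f_{\obs})=\sigma_E^2+\|b\|_E^2 .
\]
The common term $\sigma_E^2$ cancels, so the whole statement becomes an identity about $\|b-\alpha h^\dagger\|_E^2$.

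Next I expand this norm:
\[
\|b-\alpha h^\dagger\|_E^2=\|b\|_E^2-2\alpha\langle b,h^\dagger\rangle_E+\alpha^2\|h^\dagger\|_E^2 .
\]
The only substantive input is the orthogonality characterization of the projection. Because $\mathcal H_\psi$ is a closed linear subspace and $h^\dagger=\Pi_{\mathcal H_\psi}b$, the residual $b-h^\dagger$ is orthogonal to every element of $\mathcal H_\psi$, in particular to $h^\dagger$ itself. Hence $\langle b,h^\dagger\rangle_E=\|h^\dagger\|_E^2$. Substituting this gives $\|b\|_E^2-(2\alpha-\alpha^2)\|h^\dagger\|_E^2$, which is exactly \eqref{eq:oracle_grounded_gain}.

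The inequality follows because $2\alpha-\alpha^2=\alpha(2-\alpha)\ge 0$ on $[0,1]$. For $\alpha>0$ this factor is strictly positive, so whenever $\|h^\dagger\|_E>0$ the inequality is strict. Here "$h^\dagger\neq0$" should be read as nonzero in $L^2(P_E)$.

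I do not expect a genuine obstacle. The only point needing care is that the projection exists and satisfies the orthogonality condition. This requires $b\in L^2(P_E)$, which holds if $f_{\obs},r^\star\in L^2(P_E)$ as assumed in Theorem~\ref{thm:residual_vs_pooling}. It also requires $\mathcal H_\psi$ to be closed, which the statement assumes and which holds automatically in the finite-dimensional case.
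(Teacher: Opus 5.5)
Your proof is correct and follows essentially the paper's argument: both reduce to $\mathcal R_E(f)=\sigma_E^2+\|f-r^\star\|_E^2$ and use only the orthogonality of $b-h^\dagger$ to $h^\dagger$. The paper writes $b=h^\dagger+u$ and applies Pythagoras to $u+(1-\alpha)h^\dagger$, while you expand $\|b-\alpha h^\dagger\|_E^2$ and substitute $\langle b,h^\dagger\rangle_E=\|h^\dagger\|_E^2$; this is the same computation in a different order.
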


\begin{proof}
Write
\[
b=h^\dagger+u,
\qquad
u:=b-h^\dagger.
\]
Because $h^\dagger$ is the orthogonal projection of $b$ onto the closed linear subspace $\mathcal H_\psi$, we have $u\perp h^\dagger$ in $L^2(P_E)$.

Now
\[
f_{\mathrm{G},\alpha}-r^\star
=
(f_{\obs}-r^\star)-\alpha h^\dagger
=
u+(1-\alpha)h^\dagger.
\]
Therefore,
\[
\mathcal R_E(f_{\mathrm{G},\alpha})-\sigma_E^2
=
\|u+(1-\alpha)h^\dagger\|_E^2
=
\|u\|_E^2+(1-\alpha)^2\|h^\dagger\|_E^2,
\]
where the cross term vanishes by orthogonality.
On the other hand,
\[
\mathcal R_E(f_{\obs})-\sigma_E^2
=
\|b\|_E^2
=
\|u+h^\dagger\|_E^2
=
\|u\|_E^2+\|h^\dagger\|_E^2.
\]
Subtracting the two displays gives
\[
\mathcal R_E(f_{\mathrm{G},\alpha})-\mathcal R_E(f_{\obs})
=
\bigl((1-\alpha)^2-1\bigr)\|h^\dagger\|_E^2
=
-(2\alpha-\alpha^2)\|h^\dagger\|_E^2,
\]
which is \eqref{eq:oracle_grounded_gain}.
Since $2\alpha-\alpha^2\ge 0$ on $[0,1]$, the inequality follows, and it is strict whenever $\alpha>0$ and $h^\dagger\neq 0$.
\end{proof}

\paragraph{Derivation of the noisy-correction identity.}
For any $\widetilde h\in L^2(P_E)$,
\begin{equation}
\mathcal R_E(f_{\obs}-\alpha \widetilde h)-\mathcal R_E(f_{\obs})
=
\alpha^2\|\widetilde h\|_E^2 - 2\alpha\langle b,\widetilde h\rangle_E.
\label{eq:noisy_correction_identity}
\end{equation}
Indeed, $f_{\obs}-\alpha\widetilde h-r^\star=b-\alpha\widetilde h$, so
\[
\mathcal R_E(f_{\obs}-\alpha\widetilde h)-\sigma_E^2
=
\|b-\alpha\widetilde h\|_E^2
=
\|b\|_E^2+\alpha^2\|\widetilde h\|_E^2-2\alpha\langle b,\widetilde h\rangle_E.
\]
Subtracting $\mathcal R_E(f_{\obs})-\sigma_E^2=\|b\|_E^2$ yields \eqref{eq:noisy_correction_identity}.

\subsection{The shared linear proxy special case}

\begin{theorem}[Grounded is centered ridge in the linear proxy special case]
\label{thm:grounded_centered_ridge}
Assume the observational baseline is already linear in the proxy representation,
\[
f_{\obs}(z)=\beta_{\obs}^\top\psi(z),
\]
with the intercept absorbed into $\psi$.
Let the EXP sample be $\{(\psi_j,Y_j)\}_{j=1}^n$, and write
\[
\Psi=
\begin{bmatrix}
\psi_1^\top\\
\vdots\\
\psi_n^\top
\end{bmatrix}\in\R^{n\times d_\psi},
\qquad
Y=(Y_1,\dots,Y_n)^\top.
\]
Under full correction $\alpha_{\mathrm{corr}}=1$, grounded correction solves
\[
\widehat\delta
\in
\arg\min_{\delta\in\R^{d_\psi}}
\|\Psi\beta_{\obs}-Y-\Psi\delta\|_2^2+\lambda\|\delta\|_2^2
\]
and outputs
\[
\widehat\beta_{\mathrm G}:=\beta_{\obs}-\widehat\delta.
\]
Then $\widehat\beta_{\mathrm G}$ is exactly the centered ridge estimator
\[
\widehat\beta_{\mathrm G}
\in
\arg\min_{\beta\in\R^{d_\psi}}
\|Y-\Psi\beta\|_2^2+\lambda\|\beta-\beta_{\obs}\|_2^2.
\]
Equivalently,
\[
\widehat\beta_{\mathrm G}
=
(\Psi^\top\Psi+\lambda I)^{-1}(\Psi^\top Y+\lambda\beta_{\obs}).
\]
For general $\alpha_{\mathrm{corr}}\in[0,1]$,
\[
\widehat\beta_{\mathrm G,\alpha_{\mathrm{corr}}}
:=
\beta_{\obs}-\alpha_{\mathrm{corr}}\widehat\delta
=
(1-\alpha_{\mathrm{corr}})\beta_{\obs}
+
\alpha_{\mathrm{corr}}\widehat\beta_{\mathrm G}.
\]
\end{theorem}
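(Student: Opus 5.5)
The proof is a change of variables, $\beta=\beta_{\obs}-\delta$, in the grounded ridge problem. The map $\delta\mapsto\beta_{\obs}-\delta$ is an affine bijection of $\R^{d_\psi}$, so minimizers of one objective correspond one-to-one to minimizers of the other under the substitution. I would therefore first rewrite the grounded objective in terms of $\beta$ and check that it coincides with the centered ridge objective.

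Substituting $\delta=\beta_{\obs}-\beta$ turns the residual term into
\[
\Psi\beta_{\obs}-Y-\Psi(\beta_{\obs}-\beta)=\Psi\beta-Y,
\]
whose squared norm is $\|Y-\Psi\beta\|_2^2$. The penalty becomes $\lambda\|\beta_{\obs}-\beta\|_2^2=\lambda\|\beta-\beta_{\obs}\|_2^2$. So the two objectives agree pointwise under the bijection, and $\widehat\beta_{\mathrm G}=\beta_{\obs}-\widehat\delta$ is a minimizer of the centered ridge problem. This gives the set-level statement ($\in\arg\min$) with no uniqueness assumption.

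For the closed form I would assume $\lambda>0$, so the objective is strictly convex and the minimizer is unique. I would then set the gradient of $\|Y-\Psi\beta\|_2^2+\lambda\|\beta-\beta_{\obs}\|_2^2$ to zero:
\[
-2\Psi^\top(Y-\Psi\beta)+2\lambda(\beta-\beta_{\obs})=0,
\]
which gives $(\Psi^\top\Psi+\lambda I)\beta=\Psi^\top Y+\lambda\beta_{\obs}$. The matrix $\Psi^\top\Psi+\lambda I$ is positive definite and hence invertible, which yields the stated formula.

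The only point needing care is this hidden requirement $\lambda>0$, or else $\Psi^\top\Psi$ invertible. I would state it explicitly for the closed form, since the $\arg\min$ statement does not need it.

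Finally, the $\alpha_{\mathrm{corr}}$ statement is linear algebra. Using $\widehat\delta=\beta_{\obs}-\widehat\beta_{\mathrm G}$,
\[
\beta_{\obs}-\alpha_{\mathrm{corr}}\widehat\delta=\beta_{\obs}-\alpha_{\mathrm{corr}}(\beta_{\obs}-\widehat\beta_{\mathrm G})=(1-\alpha_{\mathrm{corr}})\beta_{\obs}+\alpha_{\mathrm{corr}}\widehat\beta_{\mathrm G}.
\]
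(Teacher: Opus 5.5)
Your proposal is correct and follows the paper's proof essentially line for line: the same substitution $\beta=\beta_{\obs}-\delta$ identifies the two objectives, the first-order condition gives the closed form, and the same one-line algebra handles general $\alpha_{\mathrm{corr}}$. Your one addition is to state that the closed form needs $\lambda>0$ (or $\Psi^\top\Psi$ invertible); the paper leaves this implicit, and making it explicit is a worthwhile refinement.
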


\begin{proof}
Set
\[
\beta=\beta_{\obs}-\delta.
\]
Then $\delta=\beta_{\obs}-\beta$, so
\[
\Psi\beta_{\obs}-Y-\Psi\delta
=
\Psi\beta_{\obs}-Y-\Psi(\beta_{\obs}-\beta)
=
\Psi\beta-Y.
\]
Therefore
\[
\|\Psi\beta_{\obs}-Y-\Psi\delta\|_2^2+\lambda\|\delta\|_2^2
=
\|Y-\Psi\beta\|_2^2+\lambda\|\beta-\beta_{\obs}\|_2^2.
\]
So minimizing over $\delta$ is equivalent to minimizing over $\beta$, which proves the centered-ridge representation.

For the closed form, differentiate the centered-ridge objective:
\[
-2\Psi^\top(Y-\Psi\beta)+2\lambda(\beta-\beta_{\obs})=0,
\]
hence
\[
(\Psi^\top\Psi+\lambda I)\beta=\Psi^\top Y+\lambda\beta_{\obs},
\]
which yields the stated formula for $\widehat\beta_{\mathrm G}$.

Finally,
\[
\widehat\beta_{\mathrm G,\alpha_{\mathrm{corr}}}
=
\beta_{\obs}-\alpha_{\mathrm{corr}}\widehat\delta
=
\beta_{\obs}-\alpha_{\mathrm{corr}}(\beta_{\obs}-\widehat\beta_{\mathrm G})
=
(1-\alpha_{\mathrm{corr}})\beta_{\obs}
+
\alpha_{\mathrm{corr}}\widehat\beta_{\mathrm G}.
\qedhere
\]
\end{proof}

\begin{corollary}[When the OBS center beats EXP-only]
\label{cor:grounded_vs_exp_linear}
Assume in addition that the EXP target is linear in the same representation,
\[
Y=\Psi\beta_\star+\varepsilon,
\qquad
\E[\varepsilon\mid \Psi]=0,
\qquad
\operatorname{Var}(\varepsilon\mid \Psi)=\sigma^2 I_n.
\]
Let $S:=\Psi^\top\Psi$, $A:=(S+\lambda I)^{-1}$, and define the EXP-only ridge estimator
\[
\widehat\beta_{\expd}:=A\Psi^\top Y.
\]
For any positive semidefinite matrix $G$, write $\|u\|_G^2:=u^\top G u$.
Then, conditional on $\Psi$,
\[
\E\!\left[\|\widehat\beta_{\mathrm G}-\beta_\star\|_G^2\mid\Psi\right]
=
\lambda^2(\beta_\star-\beta_{\obs})^\top AGA(\beta_\star-\beta_{\obs})
+
\sigma^2\operatorname{tr}(GASA),
\]
while
\[
\E\!\left[\|\widehat\beta_{\expd}-\beta_\star\|_G^2\mid\Psi\right]
=
\lambda^2\beta_\star^\top AGA\,\beta_\star
+
\sigma^2\operatorname{tr}(GASA).
\]
Therefore the linear grounded estimator is better than \EXPOnly{} if and only if
\begin{equation}
(\beta_\star-\beta_{\obs})^\top AGA(\beta_\star-\beta_{\obs})
\le
\beta_\star^\top AGA\,\beta_\star.
\label{eq:centering_crossover}
\end{equation}
\end{corollary}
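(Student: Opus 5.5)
We will compute the conditional mean and covariance of both estimators from the closed forms, using Theorem~\ref{thm:grounded_centered_ridge}: $\widehat\beta_{\mathrm G}=A(\Psi^\top Y+\lambda\beta_{\obs})$ and $\widehat\beta_{\expd}=A\Psi^\top Y$, with $A=(S+\lambda I)^{-1}$ symmetric. The plan is a standard bias–variance decomposition in the $G$-seminorm. For any random vector $\widehat\beta$ we will use $\E[\|\widehat\beta-\beta_\star\|_G^2\mid\Psi]=\|\E[\widehat\beta\mid\Psi]-\beta_\star\|_G^2+\operatorname{tr}(G\,\operatorname{Cov}(\widehat\beta\mid\Psi))$. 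This follows by expanding around the conditional mean; the cross term vanishes because the centered part has conditional mean zero.

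\textbf{Bias computation.} Substitute $Y=\Psi\beta_\star+\varepsilon$ and use $\E[\varepsilon\mid\Psi]=0$. For grounded, this gives $\E[\widehat\beta_{\mathrm G}\mid\Psi]=A(S\beta_\star+\lambda\beta_{\obs})$. The key algebraic identity is $AS=I-\lambda A$, which holds because $A(S+\lambda I)=I$. With it the bias becomes $A(S\beta_\star+\lambda\beta_{\obs})-\beta_\star=-\lambda A\beta_\star+\lambda A\beta_{\obs}=-\lambda A(\beta_\star-\beta_{\obs})$. Its squared $G$-norm is $\lambda^2(\beta_\star-\beta_{\obs})^\top AGA(\beta_\star-\beta_{\obs})$, using the symmetry of $A$. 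For EXP-only, the same computation with $\beta_{\obs}$ replaced by $0$ gives bias $-\lambda A\beta_\star$ and squared norm $\lambda^2\beta_\star^\top AGA\beta_\star$.

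\textbf{Variance computation.} Both estimators have the same random part, $A\Psi^\top\varepsilon$, because $\lambda A\beta_{\obs}$ is deterministic given $\Psi$; $\beta_{\obs}$ is treated as fixed, coming from OBS. Their conditional covariance is therefore the same: $A\Psi^\top(\sigma^2 I_n)\Psi A=\sigma^2 ASA$. The variance term is then $\sigma^2\operatorname{tr}(GASA)$ for both estimators, which yields the two displayed risk formulas.

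\textbf{Crossover.} Subtracting the two risks cancels the common variance term. Dividing by $\lambda^2>0$ then gives \eqref{eq:centering_crossover}. The case $\lambda=0$ is degenerate: the two estimators coincide and the inequality holds with equality. There is no real obstacle in this argument. The only points needing care are the identity $AS=I-\lambda A$, and stating explicitly that $\beta_{\obs}$ is independent of the EXP noise (or conditioned on), so that it contributes no variance. I would note that second point as an implicit assumption carried over from Theorem~\ref{thm:grounded_centered_ridge}, where $\beta_{\obs}$ is fixed.
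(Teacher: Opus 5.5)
Your proof is correct and follows essentially the paper's argument. Both use $AS=I-\lambda A$ to write each error as a deterministic term ($-\lambda A(\beta_\star-\beta_{\obs})$, resp.\ $-\lambda A\beta_\star$) plus the common noise $A\Psi^\top\varepsilon$, then read off the bias and the shared variance $\sigma^2\operatorname{tr}(GASA)$. One correction to your side remark: at $\lambda=0$ the two risks coincide, but \eqref{eq:centering_crossover} need not hold (take $\beta_\star=0\neq\beta_{\obs}$ and $G=I$), so the ``if and only if'' genuinely needs $\lambda>0$, which the paper leaves implicit.
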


\begin{proof}
First,
\[
\widehat\beta_{\expd}-\beta_\star
=
A\Psi^\top(\Psi\beta_\star+\varepsilon)-\beta_\star
=
(AS-I)\beta_\star+A\Psi^\top\varepsilon.
\]
Since $AS-I=-\lambda A$,
\begin{equation}
\widehat\beta_{\expd}-\beta_\star
=
-\lambda A\beta_\star+A\Psi^\top\varepsilon.
\label{eq:app_exp_error}
\end{equation}

For grounded, note that
\[
\Psi\beta_{\obs}-Y
=
\Psi(\beta_{\obs}-\beta_\star)-\varepsilon
=
\Psi\delta_\star-\varepsilon.
\]
Hence
\[
\widehat\beta_{\mathrm G}-\beta_\star
=
\beta_{\obs}-\beta_\star-A\Psi^\top(\Psi\delta_\star-\varepsilon)
=
\delta_\star-AS\delta_\star+A\Psi^\top\varepsilon.
\]
Using $I-AS=\lambda A$ gives
\begin{equation}
\widehat\beta_{\mathrm G}-\beta_\star
=
\lambda A\delta_\star+A\Psi^\top\varepsilon.
\label{eq:app_grounded_error}
\end{equation}

Now take conditional $G$-risk.
From \eqref{eq:app_exp_error},
\[
\E\!\left[\|\widehat\beta_{\expd}-\beta_\star\|_G^2\mid\Psi\right]
=
\lambda^2\beta_\star^\top AGA\,\beta_\star
+
\E\!\left[(A\Psi^\top\varepsilon)^\top G(A\Psi^\top\varepsilon)\mid\Psi\right],
\]
because the cross term vanishes by $\E[\varepsilon\mid\Psi]=0$.
The noise term is
\[
\E\!\left[(A\Psi^\top\varepsilon)^\top G(A\Psi^\top\varepsilon)\mid\Psi\right]
=
\operatorname{tr}\!\left(GA\Psi^\top\,\E[\varepsilon\varepsilon^\top\mid\Psi]\,\Psi A\right)
=
\sigma^2\operatorname{tr}(GASA).
\]
This gives the EXP-only expression.

The grounded formula is identical, replacing the bias vector $-\lambda A\beta_\star$ by $\lambda A\delta_\star$ from \eqref{eq:app_grounded_error}.
Comparing the two expressions yields \eqref{eq:centering_crossover}.
\end{proof}

%% file: main.bbl
\begin{thebibliography}{39}
\providecommand{\natexlab}[1]{#1}
\providecommand{\url}[1]{\texttt{#1}}
\expandafter\ifx\csname urlstyle\endcsname\relax
  \providecommand{\doi}[1]{doi: #1}\else
  \providecommand{\doi}{doi: \begingroup \urlstyle{rm}\Url}\fi

\bibitem[Bang \& Robins(2005)Bang and Robins]{BangRobins2005}
Heejung Bang and James~M. Robins.
\newblock Doubly robust estimation in missing data and causal inference models.
\newblock \emph{Biometrics}, 61\penalty0 (4):\penalty0 962--973, 2005.
\newblock \doi{10.1111/j.1541-0420.2005.00377.x}.

\bibitem[Bareinboim \& Pearl(2016)Bareinboim and Pearl]{BareinboimPearl2016DataFusion}
Elias Bareinboim and Judea Pearl.
\newblock Causal inference and the data-fusion problem.
\newblock \emph{Proceedings of the National Academy of Sciences}, 113\penalty0 (27):\penalty0 7345--7352, 2016.
\newblock \doi{10.1073/pnas.1510507113}.
\newblock URL \url{https://www.pnas.org/doi/10.1073/pnas.1510507113}.

\bibitem[Cheng \& Cai(2021)Cheng and Cai]{Cheng2021Adaptive}
David Cheng and Tianxi Cai.
\newblock Adaptive combination of randomized and observational data, 2021.
\newblock URL \url{https://arxiv.org/abs/2111.15012}.

\bibitem[Chiang et~al.(2024)Chiang, Zheng, Sheng, Angelopoulos, Li, Li, Zhu, Zhang, Jordan, Gonzalez, and Stoica]{Chiang2024ChatbotArena}
Wei-Lin Chiang, Lianmin Zheng, Ying Sheng, Anastasios~Nikolas Angelopoulos, Tianle Li, Dacheng Li, Banghua Zhu, Hao Zhang, Michael Jordan, Joseph~E. Gonzalez, and Ion Stoica.
\newblock Chatbot arena: An open platform for evaluating {LLM}s by human preference.
\newblock In \emph{Proceedings of the 41st International Conference on Machine Learning}, volume 235 of \emph{Proceedings of Machine Learning Research}, pp.\  8359--8388, 2024.
\newblock URL \url{https://proceedings.mlr.press/v235/chiang24b.html}.

\bibitem[Christiano et~al.(2017)Christiano, Leike, Brown, Martic, Legg, and Amodei]{Christiano2017HumanPreferences}
Paul~F. Christiano, Jan Leike, Tom~B. Brown, Miljan Martic, Shane Legg, and Dario Amodei.
\newblock Deep reinforcement learning from human preferences.
\newblock In \emph{Advances in Neural Information Processing Systems}, volume~30, 2017.
\newblock URL \url{https://arxiv.org/abs/1706.03741}.

\bibitem[Cole \& Stuart(2010)Cole and Stuart]{ColeStuart2010}
Stephen~R. Cole and Elizabeth~A. Stuart.
\newblock Generalizing evidence from randomized clinical trials to target populations: The {ACTG 320} trial.
\newblock \emph{American Journal of Epidemiology}, 172\penalty0 (1):\penalty0 107--115, 2010.
\newblock \doi{10.1093/aje/kwq084}.

\bibitem[Colnet et~al.(2024)Colnet, Mayer, Chen, Dieng, Li, Varoquaux, Vert, Josse, and Yang]{Colnet2024Review}
B{\'e}n{\'e}dicte Colnet, Imke Mayer, Guanhua Chen, Awa Dieng, Ruohong Li, Ga{\"e}l Varoquaux, Jean-Philippe Vert, Julie Josse, and Shu Yang.
\newblock Causal inference methods for combining randomized trials and observational studies: A review.
\newblock \emph{Statistical Science}, 39\penalty0 (1):\penalty0 165--191, 2024.
\newblock \doi{10.1214/23-STS889}.
\newblock URL \url{https://doi.org/10.1214/23-STS889}.

\bibitem[Dubois et~al.(2024)Dubois, Galambosi, Liang, and Hashimoto]{Dubois2024LengthControlledAlpacaEval}
Yann Dubois, Bal{\'a}zs Galambosi, Percy Liang, and Tatsunori~B. Hashimoto.
\newblock Length-controlled alpacaeval: A simple way to debias automatic evaluators.
\newblock In \emph{First Conference on Language Modeling (COLM)}, 2024.
\newblock URL \url{https://openreview.net/forum?id=CybBmzWBX0}.

\bibitem[Dud{\'{i}}k et~al.(2011)Dud{\'{i}}k, Langford, and Li]{Dudik2011DoublyRobust}
Miroslav Dud{\'{i}}k, John Langford, and Lihong Li.
\newblock Doubly robust policy evaluation and learning.
\newblock In \emph{Proceedings of the 28th International Conference on Machine Learning}, 2011.
\newblock URL \url{https://arxiv.org/abs/1103.4601}.

\bibitem[Dud{\'{i}}k et~al.(2014)Dud{\'{i}}k, Erhan, Langford, and Li]{Dudik2014DRStatSci}
Miroslav Dud{\'{i}}k, Dumitru Erhan, John Langford, and Lihong Li.
\newblock Doubly robust policy evaluation and optimization.
\newblock \emph{Statistical Science}, 29\penalty0 (4):\penalty0 485--511, 2014.
\newblock \doi{10.1214/14-STS500}.
\newblock URL \url{https://doi.org/10.1214/14-STS500}.

\bibitem[Feder et~al.(2022)Feder, Keith, Manzoor, Pryzant, Sridhar, Wood-Doughty, Eisenstein, Grimmer, Reichart, Roberts, Stewart, Veitch, and Yang]{FederEtAl2022CausalNLP}
Amir Feder, Katherine~A. Keith, Emaad Manzoor, Reid Pryzant, Dhanya Sridhar, Zach Wood-Doughty, Jacob Eisenstein, Justin Grimmer, Roi Reichart, Margaret~E. Roberts, Brandon~M. Stewart, Victor Veitch, and Diyi Yang.
\newblock Causal inference in natural language processing: Estimation, prediction, interpretation and beyond.
\newblock \emph{Transactions of the Association for Computational Linguistics}, 10:\penalty0 1138--1158, 2022.
\newblock \doi{10.1162/tacl_a_00511}.

\bibitem[{Gemma Team}(2025)]{Gemma3TechnicalReport}
{Gemma Team}.
\newblock Gemma 3 technical report, 2025.
\newblock URL \url{https://arxiv.org/abs/2503.19786}.

\bibitem[Hermann et~al.(2015)Hermann, Ko{\v{c}}isk{\'y}, Grefenstette, Espeholt, Kay, Suleyman, and Blunsom]{Hermann2015CNN}
Karl~Moritz Hermann, Tom{\'a}{\v{s}} Ko{\v{c}}isk{\'y}, Edward Grefenstette, Lasse Espeholt, Will Kay, Mustafa Suleyman, and Phil Blunsom.
\newblock Teaching machines to read and comprehend.
\newblock In \emph{Advances in Neural Information Processing Systems}, volume~28, 2015.
\newblock URL \url{https://arxiv.org/abs/1506.03340}.

\bibitem[Jiang \& Li(2016)Jiang and Li]{Jiang2016DROffPolicy}
Nan Jiang and Lihong Li.
\newblock Doubly robust off-policy value evaluation for reinforcement learning.
\newblock In \emph{Proceedings of The 33rd International Conference on Machine Learning}, volume~48 of \emph{Proceedings of Machine Learning Research}, pp.\  652--661, 2016.
\newblock URL \url{https://proceedings.mlr.press/v48/jiang16.html}.

\bibitem[Jimenez et~al.(2024)Jimenez, Yang, Wettig, Yao, Pei, Press, and Narasimhan]{Jimenez2024SWEBench}
Carlos~E. Jimenez, John Yang, Alexander Wettig, Shunyu Yao, Kexin Pei, Ofir Press, and Karthik Narasimhan.
\newblock {SWE}-bench: Can language models resolve real-world {GitHub} issues?
\newblock In \emph{The Twelfth International Conference on Learning Representations}, 2024.
\newblock URL \url{https://arxiv.org/abs/2310.06770}.

\bibitem[Kallus et~al.(2018)Kallus, Puli, and Shalit]{Kallus2018ExperimentalGrounding}
Nathan Kallus, Aahlad~Manas Puli, and Uri Shalit.
\newblock Removing hidden confounding by experimental grounding.
\newblock In \emph{Advances in Neural Information Processing Systems}, volume~31, pp.\  10911--10920, 2018.
\newblock URL \url{https://proceedings.neurips.cc/paper/2018/hash/566f0ea4f6c2e947f36795c8f58ba901-Abstract.html}.

\bibitem[Liang et~al.(2023)Liang, Bommasani, Lee, Tsipras, Soylu, Yasunaga, Zhang, Narayanan, Wu, Kumar, Newman, Yuan, Yan, Zhang, Cosgrove, Manning, R{\'e}, Acosta-Navas, Hudson, Zelikman, Durmus, Ladhak, Rong, Ren, Yao, Wang, Santhanam, Orr, Zheng, Yuksekgonul, Suzgun, Kim, Guha, Chatterji, Khattab, Henderson, Huang, Chi, Xie, Santurkar, Ganguli, Hashimoto, Icard, Zhang, Chaudhary, Wang, Li, Mai, Zhang, and Koreeda]{Liang2023HELM}
Percy Liang, Rishi Bommasani, Tony Lee, Dimitris Tsipras, Dilara Soylu, Michihiro Yasunaga, Yian Zhang, Deepak Narayanan, Yuhuai Wu, Ananya Kumar, Benjamin Newman, Binhang Yuan, Bobby Yan, Ce~Zhang, Christian Cosgrove, Christopher~D. Manning, Christopher R{\'e}, Diana Acosta-Navas, Drew~A. Hudson, Eric Zelikman, Esin Durmus, Faisal Ladhak, Frieda Rong, Hongyu Ren, Huaxiu Yao, Jue Wang, Keshav Santhanam, Laurel Orr, Lucia Zheng, Mert Yuksekgonul, Mirac Suzgun, Nathan Kim, Neel Guha, Niladri Chatterji, Omar Khattab, Peter Henderson, Qian Huang, Ryan Chi, Sang~Michael Xie, Shibani Santurkar, Surya Ganguli, Tatsunori Hashimoto, Thomas Icard, Tianyi Zhang, Vishrav Chaudhary, William Wang, Xuechen Li, Yifan Mai, Yuhui Zhang, and Yuta Koreeda.
\newblock Holistic evaluation of language models.
\newblock \emph{Transactions on Machine Learning Research}, 2023.
\newblock URL \url{https://arxiv.org/abs/2211.09110}.

\bibitem[Lin et~al.(2025)Lin, Tarp, and Evans]{Lin2025PowerLikelihood}
Xi~Lin, Jens~Magelund Tarp, and Robin~J. Evans.
\newblock Combining experimental and observational data through a power likelihood.
\newblock \emph{Biometrics}, 81\penalty0 (1):\penalty0 ujaf008, 2025.
\newblock \doi{10.1093/biomtc/ujaf008}.
\newblock URL \url{https://academic.oup.com/biometrics/article/81/1/ujaf008/8016472}.

\bibitem[Mathews \& Nagappan(2025)Mathews and Nagappan]{Mathews2025BouncerBench}
Noble~Saji Mathews and Meiyappan Nagappan.
\newblock Is your automated software engineer trustworthy?, 2025.
\newblock URL \url{https://arxiv.org/abs/2506.17812}.

\bibitem[Nallapati et~al.(2016)Nallapati, Zhou, dos Santos, Gulcehre, and Xiang]{Nallapati2016Sum}
Ramesh Nallapati, Bowen Zhou, Cicero dos Santos, Caglar Gulcehre, and Bing Xiang.
\newblock Abstractive text summarization using sequence-to-sequence {RNN}s and beyond.
\newblock In \emph{Proceedings of The 20th SIGNLL Conference on Computational Natural Language Learning}, pp.\  280--290, 2016.
\newblock URL \url{https://arxiv.org/abs/1602.06023}.

\bibitem[{OpenAI}(2024)]{OpenAI2024SWEBenchVerified}
{OpenAI}.
\newblock Introducing {SWE}-bench {V}erified.
\newblock OpenAI Blog, 2024.
\newblock URL \url{https://openai.com/index/introducing-swe-bench-verified/}.

\bibitem[Ouyang et~al.(2022)Ouyang, Wu, Jiang, Almeida, Wainwright, Mishkin, Zhang, Agarwal, Slama, Ray, Schulman, Hilton, Kelton, Miller, Simens, Askell, Welinder, Christiano, Leike, and Lowe]{Ouyang2022InstructGPT}
Long Ouyang, Jeffrey Wu, Xu~Jiang, Diogo Almeida, Carroll~L. Wainwright, Pamela Mishkin, Chong Zhang, Sandhini Agarwal, Katarina Slama, Alex Ray, John Schulman, Jacob Hilton, Fraser Kelton, Luke Miller, Maddie Simens, Amanda Askell, Peter Welinder, Paul Christiano, Jan Leike, and Ryan Lowe.
\newblock Training language models to follow instructions with human feedback.
\newblock In \emph{Advances in Neural Information Processing Systems}, volume~35, 2022.
\newblock URL \url{https://proceedings.neurips.cc/paper_files/paper/2022/hash/b1efde53be364a73914f58805a001731-Abstract-Conference.html}.

\bibitem[Panickssery et~al.(2024)Panickssery, Bowman, and Feng]{PanicksseryBowmanFeng2024}
Arjun Panickssery, Samuel~R. Bowman, and Shi Feng.
\newblock {LLM} evaluators recognize and favor their own generations, 2024.
\newblock URL \url{https://arxiv.org/abs/2404.13076}.

\bibitem[Pearl \& Bareinboim(2011)Pearl and Bareinboim]{PearlBareinboim2011Transportability}
Judea Pearl and Elias Bareinboim.
\newblock Transportability of causal and statistical relations: A formal approach.
\newblock In \emph{Proceedings of the Twenty-Fifth AAAI Conference on Artificial Intelligence}, 2011.
\newblock \doi{10.1609/aaai.v25i1.7861}.

\bibitem[Rafailov et~al.(2023)Rafailov, Sharma, Mitchell, Manning, Ermon, and Finn]{Rafailov2023DPO}
Rafael Rafailov, Archit Sharma, Eric Mitchell, Christopher~D. Manning, Stefano Ermon, and Chelsea Finn.
\newblock Direct preference optimization: Your language model is secretly a reward model.
\newblock In \emph{Advances in Neural Information Processing Systems}, volume~36, 2023.
\newblock URL \url{https://proceedings.neurips.cc/paper_files/paper/2023/hash/a85b405ed65c6477a4fe8302b5e06ce7-Abstract-Conference.html}.

\bibitem[Robins et~al.(1994)Robins, Rotnitzky, and Zhao]{RobinsRotnitzkyZhao1994}
James~M. Robins, Andrea Rotnitzky, and Lue~Ping Zhao.
\newblock Estimation of regression coefficients when some regressors are not always observed.
\newblock \emph{Journal of the American Statistical Association}, 89\penalty0 (427):\penalty0 846--866, 1994.
\newblock \doi{10.1080/01621459.1994.10476818}.

\bibitem[Rosenman et~al.(2023)Rosenman, Basse, Owen, and Baiocchi]{Rosenman2023Shrinkage}
Evan T.~R. Rosenman, Guillaume Basse, Art~B. Owen, and Mike Baiocchi.
\newblock Combining observational and experimental datasets using shrinkage estimators.
\newblock \emph{Biometrics}, 79\penalty0 (4):\penalty0 2961--2973, 2023.
\newblock \doi{10.1111/biom.13827}.
\newblock URL \url{https://pubmed.ncbi.nlm.nih.gov/36629736/}.

\bibitem[See et~al.(2017)See, Liu, and Manning]{See2017Pointer}
Abigail See, Peter~J. Liu, and Christopher~D. Manning.
\newblock Get to the point: Summarization with pointer-generator networks.
\newblock In \emph{Proceedings of the 55th Annual Meeting of the Association for Computational Linguistics}, pp.\  1073--1083, 2017.
\newblock URL \url{https://arxiv.org/abs/1704.04368}.

\bibitem[Stiennon et~al.(2020)Stiennon, Ouyang, Wu, Ziegler, Lowe, Voss, Radford, Amodei, and Christiano]{Stiennon2020SummarizeHF}
Nisan Stiennon, Long Ouyang, Jeff Wu, Daniel~M. Ziegler, Ryan Lowe, Chelsea Voss, Alec Radford, Dario Amodei, and Paul Christiano.
\newblock Learning to summarize from human feedback.
\newblock In \emph{Advances in Neural Information Processing Systems}, volume~33, 2020.
\newblock URL \url{https://arxiv.org/abs/2009.01325}.

\bibitem[Swaminathan \& Joachims(2015)Swaminathan and Joachims]{Swaminathan2015CRM}
Adith Swaminathan and Thorsten Joachims.
\newblock Counterfactual risk minimization: Learning from logged bandit feedback.
\newblock In \emph{Proceedings of the 32nd International Conference on Machine Learning}, volume~37 of \emph{Proceedings of Machine Learning Research}, pp.\  814--823, 2015.
\newblock URL \url{https://proceedings.mlr.press/v37/swaminathan15.html}.

\bibitem[Thomas \& Brunskill(2016)Thomas and Brunskill]{Thomas2016DataEfficient}
Philip~S. Thomas and Emma Brunskill.
\newblock Data-efficient off-policy policy evaluation for reinforcement learning.
\newblock In \emph{Proceedings of The 33rd International Conference on Machine Learning}, volume~48 of \emph{Proceedings of Machine Learning Research}, pp.\  2139--2148, 2016.
\newblock URL \url{https://proceedings.mlr.press/v48/thomasa16.html}.

\bibitem[Veitch et~al.(2020)Veitch, Sridhar, and Blei]{VeitchSridharBlei2020}
Victor Veitch, Dhanya Sridhar, and David~M. Blei.
\newblock Adapting text embeddings for causal inference.
\newblock In \emph{Proceedings of the 36th Conference on Uncertainty in Artificial Intelligence (UAI)}, volume 124 of \emph{Proceedings of Machine Learning Research}, pp.\  919--928, 2020.
\newblock URL \url{https://proceedings.mlr.press/v124/veitch20a.html}.

\bibitem[Verga et~al.(2024)Verga, Hofst{\"a}tter, Althammer, Su, Piktus, Arkhangorodsky, Xu, White, and Lewis]{VergaEtAl2024Juries}
Pat Verga, Sebastian Hofst{\"a}tter, Sophia Althammer, Yixuan Su, Aleksandra Piktus, Arkady Arkhangorodsky, Minjie Xu, Naomi White, and Patrick Lewis.
\newblock Replacing judges with juries: Evaluating {LLM} generations with a panel of diverse models, 2024.
\newblock URL \url{https://arxiv.org/abs/2404.18796}.

\bibitem[Wang et~al.(2023)Wang, Li, Chen, Zhu, Lin, Cao, Liu, Liu, and Sui]{WangEtAl2023FairEvaluators}
Peiyi Wang, Lei Li, Liang Chen, Dawei Zhu, Binghuai Lin, Yunbo Cao, Qi~Liu, Tianyu Liu, and Zhifang Sui.
\newblock Large language models are not fair evaluators, 2023.
\newblock URL \url{https://arxiv.org/abs/2305.17926}.

\bibitem[Yang et~al.(2024)Yang, Jimenez, Wettig, Lieret, Yao, Narasimhan, and Press]{YangEtAl2024SWEAgent}
John Yang, Carlos~E. Jimenez, Alexander Wettig, Kilian Lieret, Shunyu Yao, Karthik~R. Narasimhan, and Ofir Press.
\newblock {SWE}-agent: Agent-computer interfaces enable automated software engineering.
\newblock In \emph{Advances in Neural Information Processing Systems 37 (NeurIPS)}, 2024.
\newblock URL \url{https://openreview.net/forum?id=mXpq6ut8J3}.

\bibitem[Yang et~al.(2025)Yang, Lin, Athey, Jordan, and Imbens]{Yang2025CVCI}
Xuelin Yang, Licong Lin, Susan Athey, Michael~I. Jordan, and Guido~W. Imbens.
\newblock Cross-validated causal inference: a modern method to combine experimental and observational data, 2025.
\newblock URL \url{https://arxiv.org/abs/2511.00727}.

\bibitem[Zhao et~al.(2024)Zhao, Ren, Hessel, Cardie, Choi, and Deng]{Zhao2024WildChat}
Wenting Zhao, Xiang Ren, Jack Hessel, Claire Cardie, Yejin Choi, and Yuntian Deng.
\newblock Wildchat: 1{M} {ChatGPT} interaction logs in the wild.
\newblock In \emph{The Twelfth International Conference on Learning Representations}, 2024.
\newblock URL \url{https://arxiv.org/abs/2405.01470}.

\bibitem[Zheng et~al.(2023)Zheng, Chiang, Sheng, Zhuang, Wu, Zhuang, Lin, Li, Li, Xing, Zhang, Gonzalez, and Stoica]{Zheng2023LLMJudge}
Lianmin Zheng, Wei-Lin Chiang, Ying Sheng, Siyuan Zhuang, Zhanghao Wu, Yonghao Zhuang, Zi~Lin, Zhuohan Li, Dacheng Li, Eric~P. Xing, Hao Zhang, Joseph~E. Gonzalez, and Ion Stoica.
\newblock Judging {LLM}-as-a-judge with {MT}-bench and chatbot arena.
\newblock In \emph{Advances in Neural Information Processing Systems}, volume~36, 2023.
\newblock URL \url{https://arxiv.org/abs/2306.05685}.

\bibitem[Zheng et~al.(2024)Zheng, Chiang, Sheng, Li, Zhuang, Wu, Zhuang, Li, Lin, Xing, Gonzalez, Stoica, and Zhang]{Zheng2023LMSYSChat1M}
Lianmin Zheng, Wei-Lin Chiang, Ying Sheng, Tianle Li, Siyuan Zhuang, Zhanghao Wu, Yonghao Zhuang, Zhuohan Li, Zi~Lin, Eric~P. Xing, Joseph~E. Gonzalez, Ion Stoica, and Hao Zhang.
\newblock {LMSYS}-{Chat}-1{M}: A large-scale real-world {LLM} conversation dataset.
\newblock In \emph{The Twelfth International Conference on Learning Representations}, 2024.
\newblock URL \url{https://openreview.net/forum?id=BOfDKxfwt0}.

\end{thebibliography}
